%
%
%
%
%
%


\documentclass[12pt,twoside]{article}

\usepackage{graphicx} 
\usepackage{subfigure} 


\topmargin=-10mm  \oddsidemargin=5mm \evensidemargin=5mm
\textwidth=15cm \textheight=22cm
\sloppy\lineskip=0pt
\pagestyle{myheadings}
\markboth{\sc Normalized Random Measures}{\sc Chen \& Buntine \& Ding}


\usepackage{tikz}

\usepackage{algorithm}
\usepackage{algorithmic}

\usepackage{hyperref}


\newtheorem{theorem}{Theorem}
\newtheorem{lemma}{Lemma}
\newtheorem{corollary}{Corollary}
\newtheorem{proposition}[theorem]{Proposition}

\newenvironment{keywords}{\centerline{\bf\small
Keywords}\begin{quote}\small}{\par\end{quote}\vskip 1ex}

\newenvironment{proof}[1][Proof]{\begin{trivlist}
\item[\hskip \labelsep {\bfseries #1}]}{\end{trivlist}}
\newenvironment{definition}[1][Definition]{\begin{trivlist}
\item[\hskip \labelsep {\bfseries #1}]}{\end{trivlist}}

\newenvironment{remark}[1][Remark]{\begin{trivlist}
\item[\hskip \labelsep {\bfseries #1}]}{\end{trivlist}}

\makeatletter
\newcommand{\figcaption}{\def\@captype{figure}\caption}
\newcommand{\tabcaption}{\def\@captype{table}\caption}
\makeatother

\newcommand{\qed}{\nobreak \ifvmode \relax \else
      \ifdim\lastskip<1.5em \hskip-\lastskip
      \hskip1.5em plus0em minus0.5em \fi \nobreak
      \vrule height0.75em width0.5em depth0.25em\fi}

\usepackage{amssymb}
\usepackage{amsmath}

\begin{document} 
\title{\vspace{-4ex}
\vskip 2mm\bf\Large\hrule height5pt \vskip 4mm
Theory of Dependent Hierarchical Normalized Random Measures
\vskip 4mm \hrule height2pt}

\author{{\bf Changyou Chen}$^{1,2}$\\{\tt cchangyou@gmail.com}\\
$^1$RSISE, The Australian National University,\\
 Canberra, ACT, Australia\\
\and
{\bf Wray Buntine}$^{2,1}$\\{\tt Wray.Buntine@nicta.com.au}\\
$^2$NICTA, ACT, Australia
\and
{\bf Nan Ding}$^3$\\{\tt ding10@purdue.edu}\\
$^3$Department of Computer Science, Purdue University, USA\\
}

\maketitle

\newpage

\begin{abstract} 
This paper presents theory for
Normalized Random Measures (NRMs),
Normalized Generalized Gammas (NGGs),
a particular kind of NRM,
and Dependent Hierarchical NRMs
which allow networks of dependent NRMs to be
analysed.
These have been used, for instance,
for time-dependent topic modelling.
In this paper,
we first introduce some mathematical 
background of completely random measures (CRMs)
and their construction from Poisson processes,
and then introduce NRMs and NGGs.
Slice sampling is also introduced for posterior inference.
The dependency
operators in Poisson processes and 
for the corresponding CRMs and NRMs is then introduced
and Posterior inference for the NGG presented.
Finally,  we give dependency  and composition results
when applying these operators to NRMs so they can be used
in a network with hierarchical and dependent relations.
\end{abstract} 

\begin{keywords}
completely random measures;
normalized randomized measures;
normalized Generalized gamma process;
dependent hierarchical normalized randomized measures;
hierarchical models;
\end{keywords}

\newpage
\tableofcontents

\newpage

\section{Introduction}

This paper presents theory for
Normalized Random Measures (NRMs),
Normalized Generalized Gammas (NGGs),
a particular kind of NRM,
and Dependent Hierarchical NRMs
which allow networks of dependent NRMs to be
analysed.
These have been used, for instance,
for time-dependent topic modelling \cite{ChenDB:ICML12}.

Dependency models are getting more and more popular in machine
learning recently due to the fact of correlated data we are facing at,
{\it e.g.}, real data is always correlated with each other rather than
independent.  The pioneer work of
MachEachern~\cite{MacEachern:Bayes99,MacEachern:DDP00} 
treats the jumps and atoms to be stochastic between dependent models.
While there are many ways of constructing dependent nonparametric
models, {\it e.g.}, from a stick-breaking
construction~\cite{GriffinS:09}, or from a hierarchical
construction~\cite{TehJorBea2006}, in this paper, following the idea
of~\cite{LinGF:NIPS10}, we construct dependency normalized random
measures from the underlying Poisson processes of the corresponding
completely random measures~\cite{Kingman:PJM67}.  This construction is
intuitive and allow flexibly controlling of the dependencies.  
A related construction in the statistical literature
is by Lijoi {\it et al.}~\cite{LijoiNP:12} 
that deals with modeling two groups of data.

In this paper,
we first introduce in Section~\ref{sec:background} some mathematical 
background of completely random measures (CRMs)
and their construction from Poisson processes,
and then introduce NRMs and NGGs.
Slice sampling is also introduced to do the posterior 
sampling of NRMs using techniques from \cite{GriffinW:JCGS11}.
The dependency
operators in Poisson processes and 
for the corresponding CRMs and NRMs is then introduced in
Section~\ref{sec:ops}
following the work of \cite{Kindman:PoiP10,LinGF:NIPS10}.
Posterior inference for the NGG are then developed in
Section~\ref{sec:postngg} based on the results of
\cite{JamesLP:SJS09}.
Then we give the dependency  and composition results
when applying these operators to NRMs in Section~\ref{sec:depprop}.
Proofs are given in 
the Appendix, Section~\ref{sec:proof}.

\section{Background}\label{sec:background}

In this section we briefly introduce background of Poisson processes,
the corresponding completely random measures, 
dependency operations on these random measures,
and normalized random measures.

Section~\ref{sec:PoiP2RM} explains how to construct
completely random measures from Poisson processes. 
Section~\ref{sec:PoiPOP} introduces
operations on Poisson processes to construct dependent Poisson processes.
Section~\ref{sec:URMPOP}
adapts these operations to the corresponding completely random measures 
(CRMs).
Constructing normalized random measures (NRMs) from CRMs is discussed
in Section~\ref{sec:NRM} along with
details of the Normalized Generalized Gamma (NGG),
a particular kind of NRM for which the details have been worked out. 
A slice sampler for sampling an NRM is described 
in Section~\ref{sec:SNRM}.

We first give an illustration of 
the basic construction for an NRM.
for a target domain $\mathbb{X}$.
The Poisson process is used to create a countable (and usually) infinite
set points in a product space of $\mathbb{R}^+$ with the target domain
$\mathbb{X}$, as shown in the left of Figure~\ref{fig:example}.
The distribution is then a discrete one on these
points. The distribution can be pictured by dropping lines
from each point $(t,x)$ down to $(0,x)$, and then
normalizing all these lines so their sum is one.  The resulting
picture shows the set of weighted impulses that make up the
constructed NRM on the target domain.

\begin{center}
   \centering
     \begin{minipage}{0.47\linewidth}
        \centering
        \includegraphics[width=0.9\linewidth]{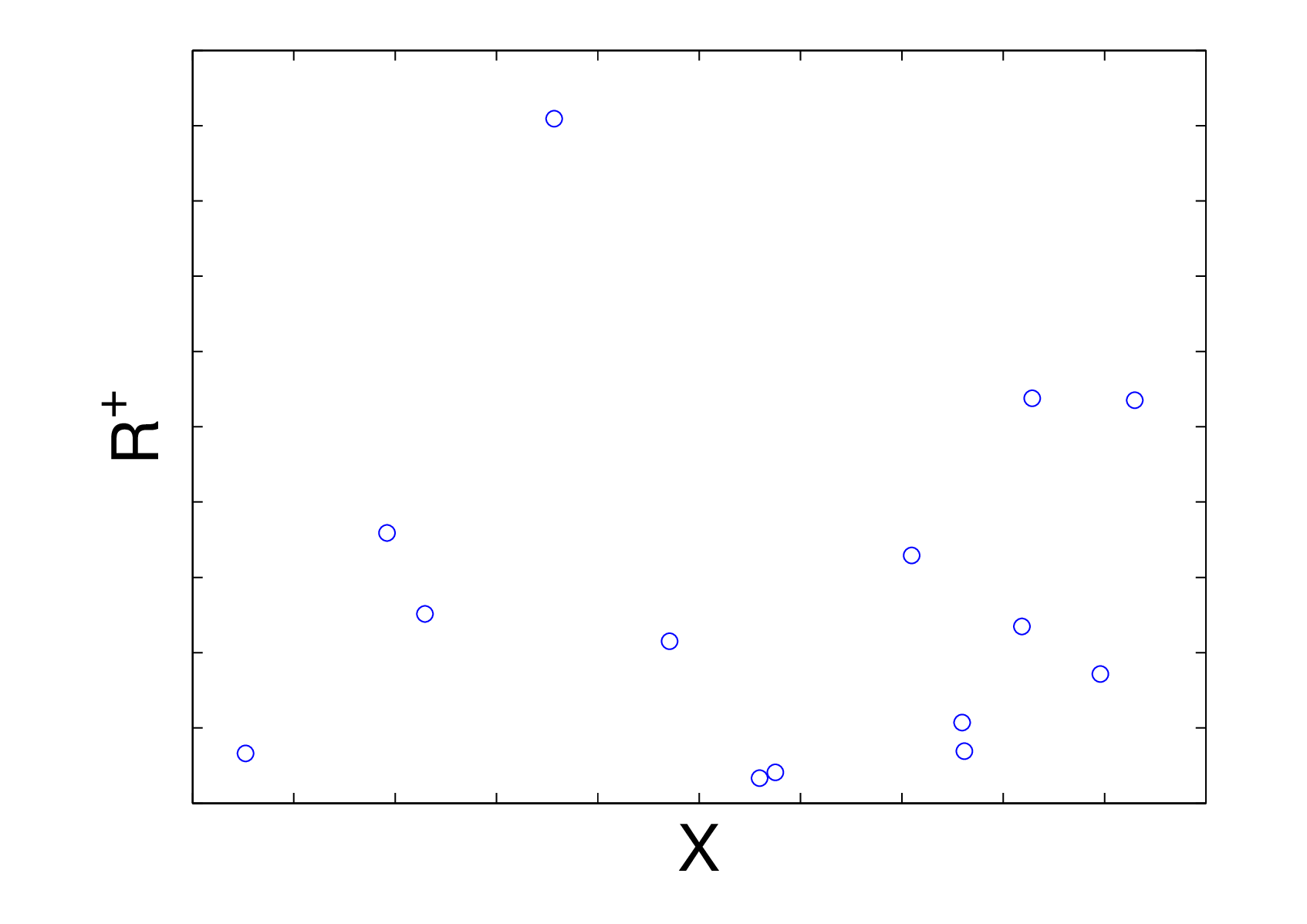}
	\\{\small\centerline{\bf Counting process:}}\medskip
	{\small\centerline{\bf $N(\cdot) = \sum_{k}\delta_{(J_k,x_k)}(\cdot)$}}\medskip
    \end{minipage}
    \begin{minipage}{0.04\linewidth}
    $\rightarrow$
    \end{minipage}
    \begin{minipage}{0.47\linewidth}
        \centering
        \includegraphics[width=0.9\linewidth]{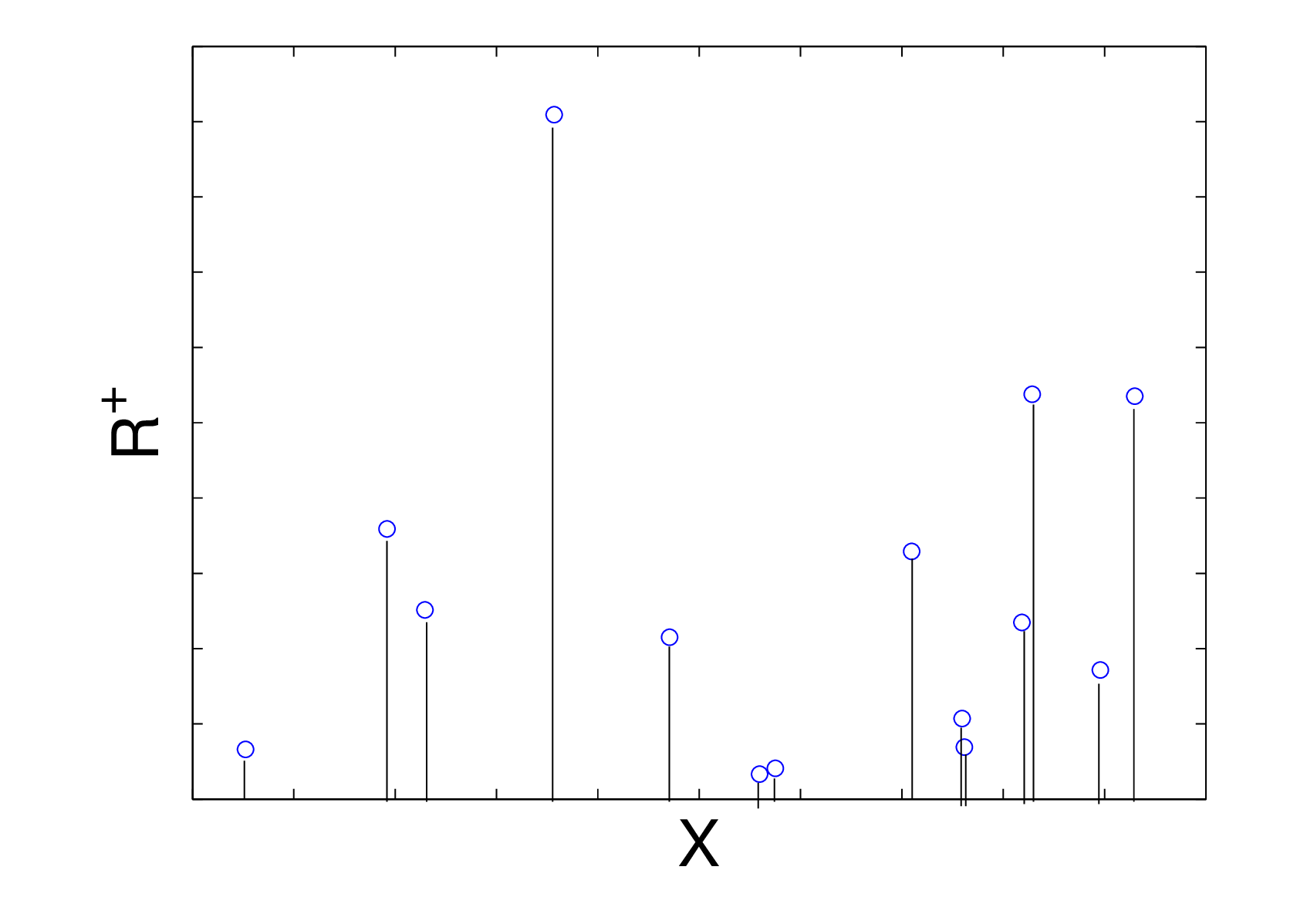}
	\hspace{2cm}{\small\centerline{\bf Completely random measure:}}\medskip
	\hspace{2cm}{\small\centerline{\bf $\tilde{\mu}(\cdot) = \sum_{k}J_k\delta_{x_k}(\cdot)$}}\medskip
    \end{minipage}
  \figcaption{Constructing a completely random measure from a
counting process $N(\cdot)$ with points at $(J_k,x_k)$.} \label{fig:example}
\end{center}

\subsection{Constructing Completely Random Measures from Poisson processes}\label{sec:PoiP2RM}

In contrast to the general class of completely random measure (CRM)~\cite{Kingman:PJM67}, 
which admits a unique decomposition as the summation over there parts: a deterministic
measure, a purely atomic measure with fixed atom locations and a measure\footnote{can be continuous
or discrete.} with random jumps
and atoms, in this paper, we restrict it to the class of 
pure jump processes~\cite{FergusonK:AMS72}, 
which has the following form
\begin{equation}\label{eq:rmdef}
 \tilde{\mu} = \sum_{k = 1}^\infty J_k\delta_{x_k},
\end{equation}
where $J_1, J_2, \cdots > 0$ are called the jumps of the process, 
and $x_1, x_2, \cdots$ are a sequence of independent random variables
drawn from a base measurable space 
$(\mathbb{X}, \mathcal{B}(\mathbb{X}))$\footnote{$\mathcal{B}(\mathbb{X})$
means the $\sigma$-algebra of $\mathbb{X}$, we sometimes omit this and use
$\mathbb{X}$ to denote the measurable space.}.

It is shown that these kinds of CRMs can be 
constructed from Poisson processes with specific mean 
measures $\nu(\cdot)$. We will start from some definitions. 

\begin{definition}[Poisson Distributions:]
 A random variable $X$ taking values in $\mathbb{N} = \{0, 1, \cdots, \infty\}$ is said to have
  the Poisson distribution with mean $c$ in $(0, \infty)$ if
  \begin{equation}
   p(X = k|c) = \frac{e^{-c}c^k}{k!}, k\in\mathbb{N},
  \end{equation}
  then $X < \infty$ almost surely and $\mathbb{E}[X] = \text{Var}[X] = c$.
\end{definition}

\begin{definition}[Poisson Processes:]
 Let $(\mathbb{S}, \mathcal{S})$ be a measure space where $\mathcal{S}$ is the
 $\sigma$-algebra of $\mathbb{S}$. Let $\nu(\cdot)$ be a 
  measure on it. A Poisson process on $\mathbb{S}$ is defined to be a 
  random subset $\Pi\in \mathbb{S}$ such that if $N(A)$ is the number 
  of points of $\Pi$ in the measurable subset $A\subseteq \mathbb{S}$, then
  \begin{itemize}
   \item[a) ] $N(A)$ is a random variable having the Poisson 
      distribution with mean $\nu(A)$, and
   \item[b) ] whenever $A_1, \cdots, A_n$ are in $\mathcal{S}$ 
      and disjoint, the random variables $N(A_1), \cdots, N(A_n)$ 
      are independent.
  \end{itemize}
  The integer-value random measure $N(\cdot)$ is called a
  {\em Poisson random measure} and the Poisson process is denoted
  as $\Pi\sim\text{PoissonP}(\nu)$, where $\nu$ is called the mean measure
  of the Poisson process.
\end{definition}

\begin{definition}[Completely Random Measure:]
In this paper, we define a random measure on 
$(\mathbb{X}, \mathcal{B}(\mathbb{X}))$ to be a linear functional 
of the Poisson random measure $N(\cdot)$, whose mean measure
$\nu(\mathrm{d}t, \mathrm{d}x)$ defined on a product space
$\mathbb{S} = R^+\times\mathbb{X}$:
\begin{equation}\label{eq:RM}
 \tilde{\mu}(B) = \int_{\mathbb{R}^+\times B}tN(\mathrm{d}t, \mathrm{d}x), 
    \forall B\in\mathcal{B}(\mathbb{X}).
\end{equation}
The mean measure $\nu(\mathrm{d}t, \mathrm{d}x)$ 
is called the {\em L\'{e}vy measure} of $\tilde{\mu}$.
\end{definition}
The general treatment of constructing random measures from Poisson random measures
can be found in~\cite{James:AST05}.
Note that the random measure $\tilde{\mu}$ in construction (\ref{eq:RM}) has the same form
as Equation~(\ref{eq:rmdef}) because $N(\cdot)$ is composed of a countable number of points.
It can be proven to be a completely 
random measure~\cite{Kingman:PJM67} on $\mathbb{X}$, meaning that for
arbitrary disjoint subsets $\{A_i\in\mathbb{X}\}$ of the measurable space, 
the random variables $\{\tilde{\mu}(A_i)\}$ are independent.

For the completely random measure defined above to always be finite,
it is necessary that 
$\int_{\mathbb{R}^+\times \mathbb{X}}t \,\nu(\mathrm{d}t, \mathrm{d}x)$
be finite,
and therefore for every $z>0$,
$\nu([z,\infty)\times \mathbb{X})=\int_{z}^\infty \int_{\mathbb{X}} \nu(\mathrm{d}t, \mathrm{d}x)$
is finite~\cite{Kindman:PoiP10}.
It follows that there will always be a finite number of points
with jumps $J_k>z$ for that $z>0$.
Therefore in the bounded product space
$[z,\infty)\times \mathbb{X}$ the measure $\nu(\mathrm{d}t, \mathrm{d}x)$
is finite.  So it is meaningful to sample those points
$(J_k,x_k)$ with $J_k>z$ by first getting the
count of points $K$ sampled from a Poisson with (finite) mean
$\nu([z,\infty)\times \mathbb{X})$,
and then to sample the $K$ points according to the
distribution of
$\frac{\nu(\mathrm{d}t, \mathrm{d}x)}{\nu([z,\infty)\times \mathbb{X})}$.

Without loss of generality, the L\'{e}vy measure of
Equation~(\ref{eq:RM}) can be represented as
$\nu(\mathrm{d}t, \mathrm{d}x) = M\rho_{\eta}(\mathrm{d}t|x)H(\mathrm{d}x)$, 
where $\eta$ denotes
the hyper-parameters if any of a measure on $t$, 
$H(\mathrm{d}x)$ is a probability measure so
$H(\mathbb{X})=1$, and $M$ is called the {\it mass} of the
L\'evy measure.  
Note the total measure of
$\rho_{\eta}(\mathrm{d}t|x)$ is not standardized in any way
so in principle some mass could also appear in $\rho_{\eta}(\mathrm{d}t|x)$.
The mass is used as a concentration parameter for the random measure.

A realization of $\tilde{\mu}$ on $\mathbb{X}$
can be constructed by sampling from the underlying Poisson process 
in a number of ways, either in rounds for
decreasing bounds $z$ using the logic just given,
or by explicitly sampling
the jumps in order.  The later goes as
follows~\cite{FergusonK:AMS72}:
\begin{lemma}[Sampling a CRM]\label{lem:sampler}
Sample a CRM $\tilde{\mu}$ with L\'evy measure
$\nu(\mathrm{d}t, \mathrm{d}x) = M\rho_{\eta}(\mathrm{d}t|x)H(\mathrm{d}x)$
as follows.
\begin{itemize}
  \item Draw {\it i.i.d.}~samples $x_i$ from the base measure $H(\mathrm{d}x)$.
  \item Draw the corresponding weights $J_i$ for these {\it i.i.d.} samples in 
	decreasing order, which goes as:
      \begin{itemize}
	\item Draw the largest jump $J_1$ from the cumulative distribution function 
	      $P(J_1\leq j_1) = \exp\left\{-M\!\int_{j_1}^\infty
		\rho_\nu (\mathrm{d}t|x_i)\right\}$.
	\item Draw the second largest jump $J_2$ from the cumulative distribution function 
	      $P(J_2\leq j_2) = \exp\left\{-M\int_{j_2}^{j_1}
		\rho_\nu(\mathrm{d}t|x_2)\right\}$.
	\item $\cdots$
      \end{itemize}
  \item The random measure $\tilde{\mu}$ then can now be realized as $\tilde{\mu} = \sum_iJ_i\delta_{x_i}$.
\end{itemize}
\end{lemma}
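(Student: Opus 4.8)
\medskip
\noindent\emph{Proof plan.}
The idea is to read the joint law of the ordered jumps and their atoms straight off the defining Poisson process $\Pi\sim\mathrm{PoissonP}(\nu)$ on $\mathbb{S}=\mathbb{R}^+\times\mathbb{X}$, using nothing beyond the two Poisson axioms (Poisson counts on measurable sets; independence on disjoint sets) together with the finiteness fact already recorded above: for every $z>0$ the restriction of $\Pi$ to $(z,\infty)\times\mathbb{X}$ is a Poisson process with finite mean $\nu((z,\infty)\times\mathbb{X})<\infty$, so it a.s.\ contains finitely many points. Hence the jumps of $\tilde\mu$ have no accumulation point other than $0$ and can a.s.\ be arranged in strictly decreasing order $J_1>J_2>\cdots$ (strict when $\rho_\eta(\cdot\,|\,x)$ is non-atomic, which we assume; ties otherwise occur on a null set and are broken arbitrarily).

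First I would compute the law of the largest jump. The event $\{J_1\le j_1\}$ coincides with $\{N((j_1,\infty)\times\mathbb{X})=0\}$, whose probability is $e^{-\nu((j_1,\infty)\times\mathbb{X})}$ by axiom (a). Plugging in $\nu(\mathrm{d}t,\mathrm{d}x)=M\rho_\eta(\mathrm{d}t|x)H(\mathrm{d}x)$ and using $H(\mathbb{X})=1$ gives $\nu((j_1,\infty)\times\mathbb{X})=M\int_{j_1}^{\infty}\rho_\eta(\mathrm{d}t|x)$ in the homogeneous case $\rho_\eta(\mathrm{d}t|x)\equiv\rho_\eta(\mathrm{d}t)$ (and $M\int_{\mathbb{X}}\int_{j_1}^{\infty}\rho_\eta(\mathrm{d}t|x)H(\mathrm{d}x)$ otherwise), which is exactly the stated c.d.f.\ for $J_1$.

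Next I would iterate down the jumps. Since $(j_1,\infty)\times\mathbb{X}$ and $(0,j_1)\times\mathbb{X}$ are disjoint, axiom (b) makes the restrictions of $\Pi$ to these two strips independent; conditioning on the configuration of $\Pi$ on the upper strip --- and in particular on the event $\{J_1=j_1\}$ (one point at height $j_1$, none above) --- therefore leaves the restriction of $\Pi$ to $(0,j_1)\times\mathbb{X}$ a Poisson process with mean $\nu$ restricted to that strip. Applying the $J_1$ computation to this thinned process yields $P(J_2\le j_2\mid J_1=j_1)=\exp\{-\nu((j_2,j_1)\times\mathbb{X})\}=\exp\{-M\int_{j_2}^{j_1}\rho_\eta(\mathrm{d}t|x)\}$, and an easy induction on $k$ gives the corresponding formula for $J_k$ given $J_1,\dots,J_{k-1}$, which is precisely the decreasing-order sampling rule in the statement. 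The one genuinely delicate point is handling the conditioning on the null event $\{J_1=j_1\}$: this is a routine disintegration / Palm-measure argument once one notes that $\{J_1=j_1\}$ is measurable with respect to the upper strip alone, and it is the step I would write out most carefully.

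Finally I would attach the atoms. By the product form of $\nu$ and the marking/mapping theorem for Poisson processes, conditionally on the jump heights the $x_k$ are independent with $x_k$ distributed as the normalization in $x$ of $\rho_\eta(\mathrm{d}t_k|x)H(\mathrm{d}x)$; in the homogeneous case this is simply that the $x_k$ are i.i.d.\ $H$ and independent of all the jumps, i.e.\ one may draw the $x_i\sim H$ i.i.d.\ up front and then draw the $J_i$. Substituting into $\tilde\mu(B)=\int_{\mathbb{R}^+\times B}t\,N(\mathrm{d}t,\mathrm{d}x)$ with $N=\sum_k\delta_{(J_k,x_k)}$ gives $\tilde\mu=\sum_k J_k\delta_{x_k}$, as claimed.
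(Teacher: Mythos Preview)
Your argument is correct and is essentially the standard Ferguson--Klass derivation: void probability for the top jump, then Poisson independence on disjoint horizontal strips to iterate, then the marking theorem for the atoms. The paper itself does not prove Lemma~\ref{lem:sampler} at all --- it simply states the sampling scheme and attributes it to Ferguson and Klass~\cite{FergusonK:AMS72} --- so there is no in-paper proof to compare against; your proposal supplies exactly the argument the citation points to.

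One small remark worth keeping in mind when you write it up: the lemma as stated mixes the homogeneous and non-homogeneous cases somewhat loosely (the c.d.f.\ for $J_1$ is written with $\rho_\nu(\mathrm{d}t|x_i)$, yet the $x_i$ are to be drawn i.i.d.\ from $H$ in advance). Your proof already handles this correctly by observing that the ``draw $x_i$ i.i.d.\ from $H$ first, independently of the jumps'' step is only valid in the homogeneous case $\rho_\eta(\mathrm{d}t|x)\equiv\rho_\eta(\mathrm{d}t)$, and that in the general case the atom law depends on the jump height through the normalization of $\rho_\eta(\mathrm{d}t_k|x)H(\mathrm{d}x)$. Since the paper explicitly restricts attention to the homogeneous case immediately after the lemma, this is not a defect in your proof, but you may want to flag the distinction explicitly.
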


As a random variable is uniquely determined by its Laplace transformation, 
the random measure $\tilde{\mu}$ is uniquely characterized by its 
{\em Laplace functional} through the 
L\'{e}vy-Khintchine representation of a L\'{e}vy process~\cite{Cinlar:10}.
That is, for any measurable function $f: \mathbb{X}\rightarrow\mathbb{R}^+$, 
we have
\begin{eqnarray}\label{eq:levy-khintchine}
 &&\mathbb{E}\left[\exp\left\{-\int_{\mathbb{X}}f(x)
      \tilde{\mu}(\mathrm{d}x)\right\}\right] 
  = \exp\left\{-\int_{\mathbb{R}^+ \times \mathbb{X}}\left[1 - 
      \exp\left\{-tf(x)\right\}\right]\nu(\mathrm{d}t, \mathrm{d}x)\right\},
\end{eqnarray}

Now instead of dealing with $\tilde{\mu}$ itself, we deal with $\nu(\mathrm{d}t, \mathrm{d}x)$, which is
called the L\'{e}vy measure of $\tilde{\mu}$,
whose role in generating the measure via a Poisson process
was explained above.

In the case where the measure on the jumps is not dependent
on the data $x$, so  $\rho_{\eta}(\mathrm{d}t|x) = \rho_{\eta}(\mathrm{d}t)$, 
then $\tilde{\mu}$ is called homogeneous, which is the case considered
in this paper. 
When $f$ does not depend on $x$, 
(\ref{eq:levy-khintchine}) simplifies to
\begin{equation}\label{eq:levy-khintchine1}
\mathbb{E}\left[\exp\left\{-f\,\tilde{\mu}(B)\right\}\right] 
  =
 \exp\left\{-M\,p(B)\,\int_{\mathbb{R}^+}\left[1 - 
      \exp\left\{-tf\right\}\right]\rho_{\eta}(\mathrm{d}t)\right\}~.
\end{equation}
Note the term inside the exponential plays an important role
in subsequent theory, so it is given a name.
\begin{definition}[Laplace exponent:]
The {\em Laplace exponent},
denoted as $\psi_{\eta}(f)$ for a CRM with parameters $\eta$ is
given by
\begin{eqnarray}
\psi_{\eta}(f) \nonumber
&=& \int_{\mathbb{R}^+ \times \mathbb{X}}\left[1 - 
      \exp\left\{-tf\right\}\right]\nu(\mathrm{d}t, \mathrm{d}x)\\
\label{eq:laplace-exponent}
&=& M \int_{\mathbb{R}^+}\left[1 - 
      \exp\left\{-tf\right\}\right]\rho_{\eta}(\mathrm{d}t)~
~~~~~~~~~~~~~~\mbox{(homogeneous case)}~.
\end{eqnarray}
\end{definition}

Note that to guarantee the positiveness of jumps in the random measure, $\rho(\mathrm{d}t)$ in 
the L\'{e}vy measure should satisfy $\int_0^\infty\rho_\eta(\mathrm{d}t) = +\infty$~\cite{Cinlar:10},
which leads to the following equations:
\begin{equation}\label{eq:L0}
 \psi_\eta(0) = 0, \hspace{1cm}\psi_\eta(+\infty) = +\infty.
\end{equation}
That $\psi_\eta(f)$ is finite for finite positive $f$ implies 
(or is a consequence of) 
$\int_0^\infty t \rho_\eta(\mathrm{d}t)$ being finite.

\begin{remark}
There are thus four different ways to define or interpret a CRM:
\begin{enumerate}
\item
via the linear functional of Equation~(\ref{eq:RM}),
\item
through the 
L\'{e}vy-Khintchine representation of Equation~(\ref{eq:levy-khintchine})
using the Laplace exponent,
\item
sampling in order of decreasing jumps using Lemma~\ref{lem:sampler}, and
\item
sampling in blocks of decreasing jump values 
as discussed before Lemma~\ref{lem:sampler}.
\end{enumerate}
\end{remark}

\subsection{Normalized random measures}\label{sec:NRM}

\begin{definition}[Normalized Random Measures (NRM)]
 Based on (\ref{eq:RM}), a normalized random measure on 
  $(\mathbb{X}, \mathcal{B}(\mathbb{X}))$ is defined as\footnote{In this paper, we use $\mu$ to denote a normalized
   random measure, while use $\tilde{\mu}$ to denote its
    unnormalized counterpart.}
  \begin{equation}\label{eq:NRM}
   \mu = \frac{\tilde{\mu}}{\tilde{\mu}(\mathbb{X})}.
  \end{equation}
\end{definition}

The original idea of constructing random probabilities by normalizing
completely random measures on $\mathbb{R}$, namely increasing additive processes,
can be found in~\cite{EgazziniLP:AST03}, where it is termed normalized random measures
with independent increment (NRMI) and the existence of such random measures is proved.
This idea can be easily generalized from $\mathbb{R}$ to any parameter space $\mathbb{X}$,
{\it e.g.}, $\mathbb{X}$ being the Dirichlet distribution space in topic modeling.
Also note that the idea of normalized random measures can be taken as doing a transformation
$Tr(\cdot)$ on completely random measures, that is $\mu = Tr(\tilde{\mu})$. In the normalized random measure 
case, $Tr(\cdot)$ is a transformation such that $Tr(\tilde{\mu}(\mathbb{X})) = 1$.
A concise survey of other kinds of transformations can be found in~\cite{LijoiP:beyondDP09}.

Taking different L\'{e}vy measures $\nu(\mathrm{d}t, \mathrm{d}x)$ of (\ref{eq:levy-khintchine}), 
we can obtain different NRMs. We use $\mbox{NRM}(\eta, M, H(\cdot))$ to denote the normalized random measure, 
where $M$ is the total mass, which usually needs to be sampled in the 
model, and $H(\cdot)$ is the base probability measure, 
$\eta$ is the set of other hyper-parameters
to the measure on the jumps,
depending on the specific NRMs. In this paper, 
we are interested in a class of NRMs called {\em normalized generalized Gamma processes}:

\begin{definition}[Normalized Generalized Gamma Processes]
 Generalized Gamma processes are random measures proposed by Brix~\cite{Brix:AAP99} for constructing
 shot noise Cox processes. They have the L\'{e}vy measures as
  \begin{equation}
   \nu(\mathrm{d}t, \mathrm{d}x) = \frac{e^{-bt}}{t^{1 + a}}H(\mathrm{d}x), b > 0, 0 < a < 1.
  \end{equation}
  By normalizing the generalized Gamma process as in (\ref{eq:NRM}), we obtain the normalized 
  generalized Gamma process (NGG).
\end{definition}
For ease of representation and sampling, we convert the NGG into a different
form using the following lemma.
\begin{lemma}
\label{lem:scale}
Let a normalised random measure be defined using  L\'{e}vy density
$\nu(\mathrm{d}x,\mathrm{d}t)$.  Then
scaling $t$ by $\lambda>0$ yields an equivalent NRM
up to a factor.
That is, 
the normalised measure obtained using $\nu(\mathrm{d}x,\mathrm{d}t/\lambda)$
is equivalent to the normalised measure obtained 
using $\lambda\,\nu(\mathrm{d}x,\mathrm{d}t)$.
\end{lemma}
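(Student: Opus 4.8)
The plan is to pull everything back to the underlying completely random measures and exploit the fact that normalisation is insensitive to an overall rescaling of the jumps: for any CRM $\tilde{\mu}$ and any $\lambda>0$ one has $(\lambda\tilde{\mu})(B)/(\lambda\tilde{\mu})(\mathbb{X})=\tilde{\mu}(B)/\tilde{\mu}(\mathbb{X})$ for every $B$, so $\lambda\tilde{\mu}$ and $\tilde{\mu}$ induce the same NRM through (\ref{eq:NRM}). It therefore suffices to show that the CRM built from the rescaled L\'evy measure $\nu(\mathrm{d}x,\mathrm{d}t/\lambda)$ has the same law as $\lambda\tilde{\mu}$, where $\tilde{\mu}$ is the CRM built from $\nu(\mathrm{d}x,\mathrm{d}t)$ as in (\ref{eq:RM}); the normalised measures then coincide, and the only discrepancy between $\nu(\mathrm{d}x,\mathrm{d}t/\lambda)$ and the original density is an overall multiplicative constant produced by the change of variables, which is the ``factor'' in the statement.

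For the key identification I would use either of two equivalent arguments. Route 1 (Poisson mapping): if $\Pi\sim\mathrm{PoissonP}(\nu)$ on $\mathbb{R}^+\times\mathbb{X}$, then the measurable map $S_\lambda\colon(t,x)\mapsto(\lambda t,x)$ sends $\Pi$ to a Poisson process whose mean measure is the pushforward of $\nu$ under $S_\lambda$; writing this out in coordinates (the substitution $t\mapsto t/\lambda$ together with its Jacobian $1/\lambda$) shows the pushforward is exactly $\nu(\mathrm{d}x,\mathrm{d}t/\lambda)$, and since $S_\lambda$ multiplies each jump $J_k$ by $\lambda$ while fixing each atom $x_k$, the associated CRM is $\sum_k(\lambda J_k)\delta_{x_k}=\lambda\tilde{\mu}$. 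Route 2 (Laplace functional): compute the Laplace exponent of the rescaled measure directly, for $f\geq 0$,
\[
\int_{\mathbb{R}^+}\left(1-e^{-tf}\right)\nu(\mathrm{d}x,\mathrm{d}t/\lambda)=\int_{\mathbb{R}^+}\left(1-e^{-\lambda s f}\right)\nu(\mathrm{d}x,\mathrm{d}s),
\]
so the new Laplace exponent at $f$ equals $\psi_\eta(\lambda f)$; by (\ref{eq:levy-khintchine}) the rescaled CRM has Laplace functional $f\mapsto\exp(-\psi_\eta(\lambda f))$, which is precisely that of $\lambda\tilde{\mu}$. Either way the rescaled CRM equals $\lambda\tilde{\mu}$ in law, and normalising yields the claimed equivalence.

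Finally I would read off what this says for the NGG: the substitution $t\mapsto t/\lambda$ in $e^{-bt}t^{-1-a}$ produces $\lambda^{a}\,e^{-(b/\lambda)t}t^{-1-a}$, i.e.\ an NGG L\'evy density with $b$ replaced by $b/\lambda$ times the constant $\lambda^{a}$; absorbing $\lambda^{a}$ into the mass gives the reparametrisation $(M,a,b)\mapsto(\lambda^{a}M,\,a,\,b/\lambda)$, and the choice $\lambda=b$ puts the process into the canonical form with rate $1$ used for subsequent sampling. The one place to be careful --- and the main obstacle --- is the bookkeeping in the change of variables: one must retain the Jacobian $1/\lambda$ so that the rescaled CRM emerges as $\lambda\tilde{\mu}$ and not as an object with a different \emph{intensity}, because it is the rescaling of the jumps (not a change of the total mass) that is washed out by normalisation; for the NGG this means tracking the leftover constant $\lambda^{a}$ into $M$ while simultaneously sending $b\mapsto b/\lambda$, the two changes \emph{together} being the symmetry of the NGG family that the lemma records.
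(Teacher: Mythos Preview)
Your Route~2 is precisely the paper's argument: apply the L\'evy--Khintchine representation to the CRM with L\'evy measure $\nu(\mathrm{d}x,\mathrm{d}t/\lambda)$, substitute $t'=t/\lambda$ to see that its Laplace exponent at $f$ equals $\psi_\eta(\lambda f)$, identify this as the Laplace functional of $\lambda\tilde\mu$, and then observe that $\tilde\mu/\lambda$ and $\tilde\mu$ normalise to the same measure. Your Route~1 via the Poisson mapping theorem is a valid and essentially equivalent alternative that the paper does not spell out; it trades the analytic change-of-variables in the Laplace exponent for the pathwise statement that $S_\lambda$ pushes a Poisson process with intensity $\nu$ to one with intensity $\nu\circ S_\lambda^{-1}=\nu(\mathrm{d}x,\mathrm{d}t/\lambda)$ while multiplying every jump by $\lambda$, which makes the equality in law $\tilde\mu_{\text{new}}\overset{d}{=}\lambda\tilde\mu$ immediate. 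Your closing NGG computation, showing how the leftover constant $\lambda^a$ is absorbed into the mass so that $b$ can be set to $1$, goes beyond what the paper's proof contains but is exactly the application the lemma is set up for.
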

By this lemma, without loss of generality,
we can instead represent the NGG by eliminating the
parameter $b$ above.
\begin{definition}[Normalized Generalized Gamma:]
The NGG with shape parameter $a$,
total mass (or concentration) parameter $M$ 
and base distribution $H(\cdot)$,
denoted $\mbox{NGG}(a,M,H(\cdot))$,
has L\'{e}vy density
$M\,\rho_{a}(\mathrm{d}t) H(\mathrm{d}x)$
where
\[
\rho_{a}(t) ~=~
\frac{a}{\Gamma(1-a)} \frac{e^{-t}}{t^{1+a}} ~.
\]
\end{definition}

Note that similar to the two parameter Poisson-Dirichlet process~\cite{PitmanYor97}, the normalized 
generalized Gamma process with $a\neq 0$ can also produce power-law phenomenon, 
making it different from the Dirichlet process and suitable to model real data. 

\begin{proposition}[\cite{LijoiMP:JRSS07}]\label{prop:powerlaw}
  Let $K_n$ be the number of components induced by the NGG with parameter $a$ 
and mass $M$
  or the Dirichlet process with total mass $M$. Then for the NGG,
  $K_n/n^a\rightarrow S_{a,M}$ almost surely, where $S_{a,M}$ is a strictly positive
  random variable parameterized by $a$ and $M$. For the DP, $K_n/\log(n)\rightarrow M$.
\end{proposition}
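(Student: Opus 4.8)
The plan is to read off the almost-sure tail behaviour of the (normalised) jump masses from the Poisson-process construction of the NRM and feed it into a classical infinite-urn limit theorem; the NGG and the Dirichlet process then become the regularly-varying and the slowly-varying instances of one and the same argument. For the NGG, realise $\tilde{\mu}=\sum_k J_k\delta_{x_k}$ as in Lemma~\ref{lem:sampler} from a Poisson process on $\mathbb{R}^+\times\mathbb{X}$ with mean measure $M\rho_a(\mathrm{d}t)H(\mathrm{d}x)$, and record two elementary facts: since $\int_0^\infty t\,\rho_a(\mathrm{d}t)=a<\infty$ while $\int_0^\infty\rho_a(\mathrm{d}t)=+\infty$, the total mass $T:=\tilde{\mu}(\mathbb{X})=\sum_kJ_k$ is almost surely finite and strictly positive; and, the exponential factor in $\rho_a$ being irrelevant near the origin,
\[
 \bar{\nu}_a(x):=M\int_x^\infty\rho_a(\mathrm{d}t)\;\sim\;\frac{M}{\Gamma(1-a)}\,x^{-a}\qquad(x\downarrow 0).
\]
Drawing $X_1,\dots,X_n$ i.i.d.\ from $\mu=\tilde{\mu}/T$ and counting distinct values is exactly the infinite urn scheme with cell probabilities $\{P_k\}=\{J_k/T\}$, with $K_n$ its number of occupied cells.

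The core step is to upgrade $\bar{\nu}_a$ to an almost-sure statement about the jumps themselves. For $N(x):=\#\{k:J_k\ge x\}$, the variable $N(x)$ is Poisson with mean $\bar{\nu}_a(x)$ --- the number of Poisson points in $[x,\infty)\times\mathbb{X}$, as already noted in Section~\ref{sec:background} --- so a Poisson tail bound together with the monotonicity of $x\mapsto N(x)$ and a Borel--Cantelli argument along a geometric sequence $x_j=\theta^{\,j}\downarrow 0$ yield $N(x)\sim\frac{M}{\Gamma(1-a)}x^{-a}$ a.s.\ as $x\downarrow 0$. Conditionally on $\tilde{\mu}$ the sequence $\{P_k\}$ is deterministic and its counting function is $\#\{k:P_k\ge y\}=N(yT)\sim\frac{M\,T^{-a}}{\Gamma(1-a)}\,y^{-a}$, i.e.\ regularly varying at $0$ of index $-a$. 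Karlin's 1967 strong law for infinite urn schemes then gives, conditionally on $\tilde{\mu}$ and hence (by integrating out) unconditionally,
\[
 \frac{K_n}{n^{a}}\;\longrightarrow\;\Gamma(1-a)\cdot\frac{M\,T^{-a}}{\Gamma(1-a)}\;=\;M\,T^{-a}\;=:\;S_{a,M}\qquad\text{a.s.},
\]
and $S_{a,M}=M\,\tilde{\mu}(\mathbb{X})^{-a}$ is strictly positive a.s.\ because $0<T<\infty$ a.s.; its distribution depends only on $a$ and $M$, through the law of the generalized Gamma CRM.

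For the Dirichlet process the same mechanism applies with $\rho_a$ replaced by the Gamma intensity $Me^{-t}/t$, whose tail $M\int_x^\infty e^{-t}t^{-1}\,\mathrm{d}t\sim M\log(1/x)$ is slowly varying, so $n^a$ becomes $\log n$ and the limiting constant becomes $\Gamma(1)=1$. More elementarily, I would use the Chinese-restaurant predictive rule to write $K_n=1+\sum_{i=2}^nB_i$ where the $B_i$ are independent --- the probability $M/(M+i-1)$ of opening a new table does not depend on the current configuration --- and $B_i\sim\mathrm{Bernoulli}(M/(M+i-1))$. Then $\mathbb{E}[K_n]=\sum_{j=0}^{n-1}M/(M+j)\sim M\log n$, $\mathrm{Var}(K_n)=O(\log n)$, and $\sum_{i\ge 2}\mathrm{Var}(B_i)/(\log i)^2<\infty$; Kolmogorov's strong law for independent non-identically-distributed summands with norming $\log n$ gives $(K_n-\mathbb{E}[K_n])/\log n\to 0$ a.s., hence $K_n/\log n\to M$ a.s.

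The hard part is making the small-jump asymptotics above almost sure --- the Poisson-concentration plus monotonicity argument --- and the clean invocation of Karlin's theorem for the (post-conditioning deterministic) cell probabilities. An alternative that bypasses Karlin's theorem is to start from the explicit sampling formula (exchangeable partition probability function) of the NGG, obtainable from the posterior analysis of Section~\ref{sec:postngg}, and study the law of $K_n$ through the asymptotics of generalized factorial coefficients; this reaches the same limit but is computationally heavier, and the almost-sure statement still rests on an argument of the same type. In every case the positivity of $S_{a,M}$ is automatic once $0<\tilde{\mu}(\mathbb{X})<\infty$ a.s., which is exactly what $\psi_a(+\infty)=+\infty$ together with $\int_0^\infty t\,\rho_a(\mathrm{d}t)<\infty$ guarantee via~(\ref{eq:L0}).
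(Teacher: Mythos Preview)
The paper does not supply its own proof of this proposition: it is stated with the citation \cite{LijoiMP:JRSS07} and no argument appears in the appendix. Your proposal is therefore not competing with anything in the text, and as a self-contained proof it is sound. The regular-variation computation $\bar\nu_a(x)\sim \frac{M}{\Gamma(1-a)}x^{-a}$ is correct (and so is the check $\int_0^\infty t\,\rho_a(\mathrm{d}t)=a$); the Poisson-concentration plus Borel--Cantelli upgrade of $N(x)\sim\bar\nu_a(x)$ to an almost-sure statement is standard; conditioning on $\tilde\mu$ to make the cell probabilities deterministic and then invoking Karlin's infinite-urn strong law is legitimate, and the Fubini step to pass from conditional to unconditional almost-sure convergence is fine because the limit $M\,T^{-a}$ is $\tilde\mu$-measurable. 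The Dirichlet-process argument via the independence of the Chinese-restaurant new-table indicators and Kolmogorov's SLLN is also correct.

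Your alternative remark is apt: the cited source works through the exchangeable partition probability function and generalised factorial coefficients, whereas your route via Karlin's theorem is closer to Pitman's treatment of $\alpha$-diversity and has the advantage of producing the explicit identification $S_{a,M}=M\,\tilde\mu(\mathbb{X})^{-a}$ directly from the small-jump tail of the L\'evy measure. Either approach yields the stated result; yours is the more elementary and self-contained of the two.
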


Figure~\ref{fig:powerlaw} demonstrates the power law phenomena in the NGG compared to the 
Dirichlet process (DP).
We sample it using the generalized Blackwell-MacQueen sampling
scheme~\cite{JamesLP:SJS09}.
Each data to be sampled can choose an existing cluster or create a new cluster,
resulting in $K$ clusters with $N$ data points in total. 

\begin{figure}
  \centering
  \begin{minipage}{0.49\linewidth}
    \includegraphics[width=0.9\linewidth]{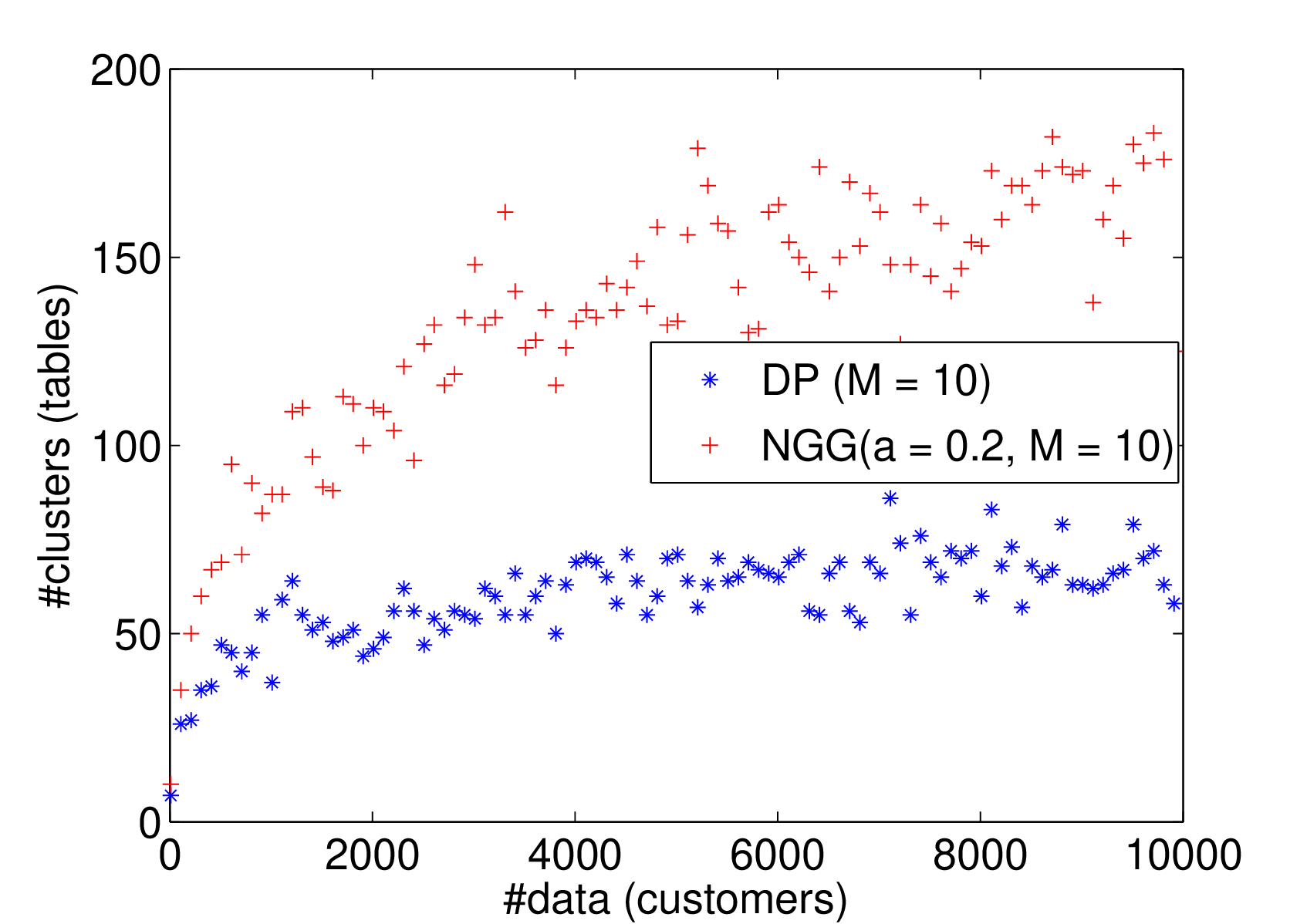}
  \end{minipage}
  \begin{minipage}{0.49\linewidth}
    \vspace{0.5cm}\includegraphics[width=0.99\linewidth]{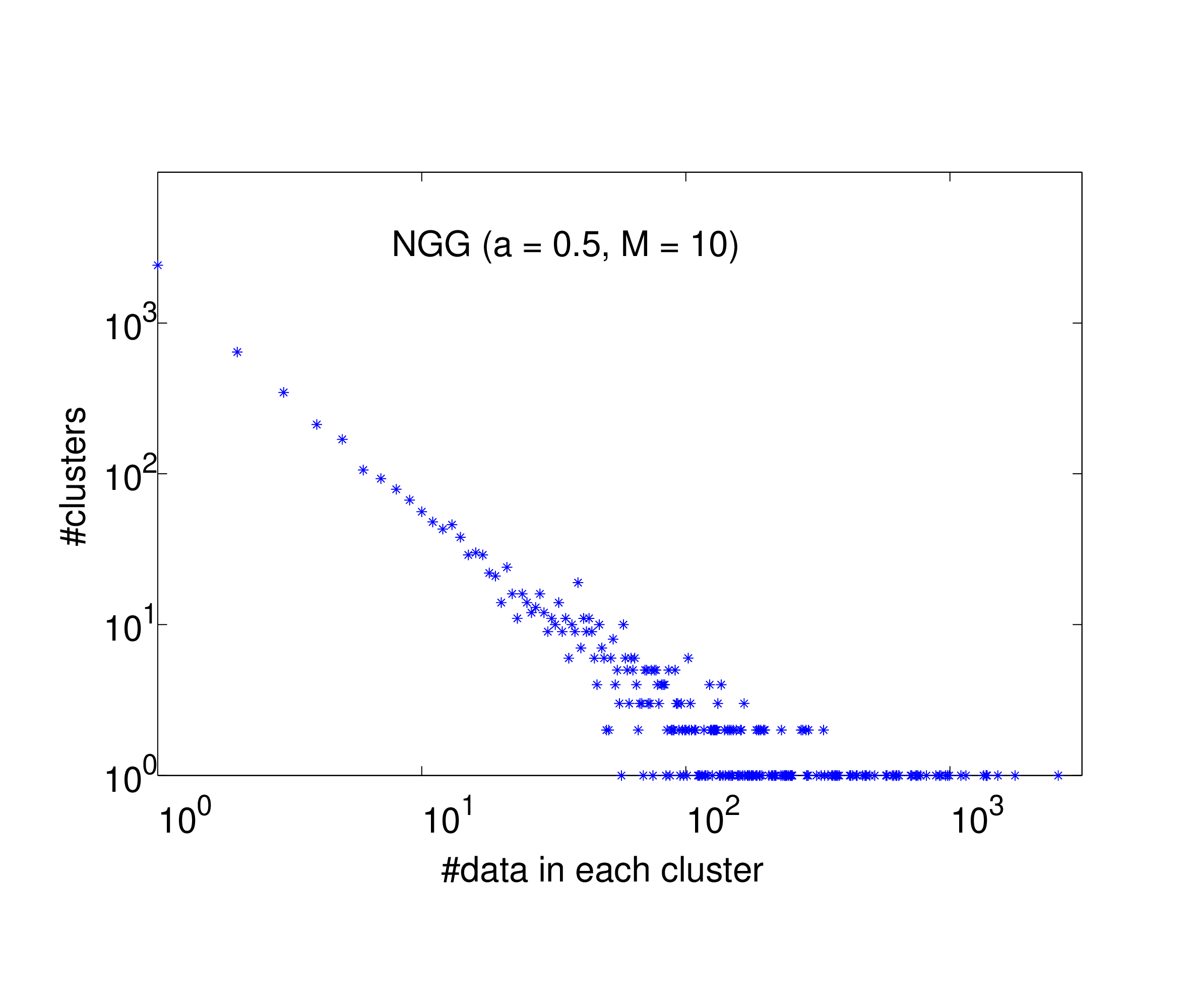}
  \end{minipage}
  \caption{Power-law phenomena in NGG. The first plot shows the \#data versus \#clusters compared with DP,
    the second plot shows the size $s$ of each cluster versus total number of clusters with size $s$.
} \label{fig:powerlaw}
\end{figure}

Many familiar stochastic processes are special/limiting cases of normalized generalized Gamma processes, 
{\it e.g.}, {\em Dirichlet processes} arise when $a\rightarrow0$. 
{\em Normalized inverse-Gaussian processes} (N-IG) arise when $a = \frac{1}{2}$ and $b=\frac{1}{2}$. 
If $b \rightarrow 0$, we get the {\em $\sigma$-stable process}, and if $a \rightarrow 0$ and $b$ depends on $x$, 
we get the {\em extended Gamma process}.

\begin{remark}
For the NGG, key formula used subsequently are as follows:
\begin{eqnarray*}
\psi_{a}(v) &=&  M \left((1+v)^a - 1\right)\\
\int_L^\infty\rho_{a}(\mathrm{d}t) &=& 
 |Q(-a,L)|\\
\int_L^\infty e^{-vx}\rho_{a}(\mathrm{d}t) &=& 
(1+v)^a|Q(-a,L(1+v))|  \\
\int_0^L \left( 1- e^{-vt} \right) \rho_{a}(t) \mathrm{d}t
 &=& \left((1+v)^a - 1\right)
 + (1+v)^a|Q(-a,L(1+v))| - |Q(-a,L)|
\end{eqnarray*}
where $Q(x,y)=\Gamma(x,y)/\Gamma(x)$ is the regularized
upper incomplete Gamma function.
Some mathematical libraries provide it for a negative first
argument, or it can be evaluated using
\[
Q(-a,z) ~=~ Q(1-a,z) - \frac{1}{\Gamma(1-a)}z^{-a}e^{-z},
\]
using an upper incomplete Gamma function defined
only for positive arguments.
\end{remark}

Finally, because probabilities for a NRM necessarily
have the divisor $\tilde\mu({\cal X}) ~=~\sum_{k=1}^{\infty}J_k$,
and thus likelihoods of the NRM should involve powers of $\tilde\mu({\cal X})$,
a trick is widely used to eliminate these terms.
\begin{definition}[Latent relative mass:]
Consider the case where $N$ data are observed.
By introducing the auxiliary variable,
called {\it latent relative mass},
$U_N={\Gamma_N}/{\tilde{\mu}({\cal X})}$
where $\Gamma_N \sim \Gamma(1,N)$,
then it follows that 
\[
\frac{1}{\tilde{\mu}({\cal X})^N} p(\Gamma_N)\mathrm{d}\Gamma_N
~=~ \exp\left\{-U_N \sum_{k=1}^{\infty}J_k\right\}\mathrm{d}U_N
\]
\end{definition}
Thus the $N$-th power of the normaliser can be replaced by
an exponential term in the jumps which factorizes,
at the expense of introducing the new latent variable $v$.
To the best of our knowledge, the idea of this latent variable
originals from~\cite{James:AST05} and is future explicitly studied
in~\cite{JamesLP:SJS06,JamesLP:SJS09,GriffinW:JCGS11}, {\it etc.}.

\subsection{Slice sampling normalized random measure mixtures}\label{sec:SNRM}

Slice sampling an NRM has been discussed in several papers, 
here we follow the method in~\cite{GriffinW:JCGS11}, to briefly introduce the ideas 
behind it. It deals with the normalized random measure mixture of the type
\begin{eqnarray}\label{eq:IMM}
 \hspace{-1cm}\mu = \sum_{k = 1}^\infty\omega_k\delta_{\theta_k},\nonumber\\
  \theta_{s_i}\sim\mu, \hspace{1cm}x_{i}\sim g_0(\cdot|\theta_{s_i})
\end{eqnarray}
where $\omega_k = J_k / \sum_{l = 1}^\infty J_l$, $J_1, J_2, \cdots$ are the jumps
of the corresponding CRM defined in (\ref{eq:RM}), $\theta_k$'s are 
the components of the mixture model drawn {\it i.i.d.}~from a parameter space $H(\cdot)$,
$s_i$ denotes the component that $x_i$ belongs to,
and $g_0(\cdot|\theta_k)$ is the density function to generate data on component $k$. 
Given the observations $\vec{x}$, 
we introduce a slice latent variable $u_i$ for each $x_i$ so that we only consider 
those components whose jump sizes $J_k$'s are larger than the corresponding $u_i$'s. 
Furthermore, the auxiliary variable $U_N$ (latent relative mass) is introduced to decouple each individual 
jump $J_k$ and their infinite sum of 
the jumps $\sum_{l = 1}^\infty J_l$ appeared in the denominators of $\omega_k$'s.
For clarification, we list the notation and its description in Table~\ref{tab:notations}. 
Based on~\cite{GriffinW:JCGS11}, we have the following posterior Lemma.

\begin{lemma}\label{lemma:sliceimf}
 The posterior of the infinite mixture model (\ref{eq:IMM}) with the above auxiliary
 variables is proportional to
  \begin{eqnarray}\label{eq:svmarglikelihood}
  &&\hspace{-1cm}p(\vec{\theta}, J_1, \cdots, J_K, K, \vec{u}, L, \vec{s}, U_N | \vec{x}, \mbox{NRM}(\eta,M,H(\cdot)) ) \propto \nonumber\\ 
    &&\hspace{-1cm}\exp\left\{-U_N\sum_{k = 1}^KJ_k\right\}
    \exp\left\{-M\int_0^L\left(1 - e^{-U_Nt}\right)\rho_{\eta}(t)\mathrm{d}t\right\} \nonumber\\
    &&\hspace{-1cm}U_N^{N - 1}p(J_1, \cdots, J_K)\prod_{k = 1}^Kh(\theta_k)\prod_{i = 1}^N 1(J_{s_i} > u_i)g_0(x_i | \theta_{s_i}),
  \end{eqnarray}
  where $1(a)$ is a indicator function returning 1 if $a$ is true and 0 otherwise, 
  $h(\cdot)$ is the density of $H(\cdot)$, $J^* = \sum_{k = K + 1}^\infty J_k$, $L = \min\{\vec{u}\}$,
  and $p(J_1, \cdots, J_K) = \prod_{k = 1}^K\frac{\rho_\eta(J_k)}{\int_L^\infty\rho_\eta(t)\mathrm{d}t}$ is the jump (large than $L$) 
  distribution derived from the underlying Poisson process (actually, $\vec{J}$ follows a compound Poisson process, meaning that
  it has $K\sim\text{Poisson}(M\int_L^\infty\rho_{\eta}(\mathrm{d}t))$ jumps, while each jump has density 
  $\frac{\rho_\eta(J_k)}{\int_L^\infty\rho_\eta(s)\mathrm{d}t}$, here $\text{Poisson}(x)$ means Poisson distribution with mean $x$).
\end{lemma}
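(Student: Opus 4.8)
The plan is to write down the joint law of all the (infinitely many) latent quantities in the generative model~(\ref{eq:IMM}), introduce the two families of auxiliary variables ($\vec{u}$ and $U_N$) so that the infinite normaliser $\tilde{\mu}(\mathbb{X})$ no longer sits in a denominator, and then use the Poisson--process structure of the jumps to integrate out the infinitely many ``small'' jumps, leaving the finite-dimensional expression~(\ref{eq:svmarglikelihood}). To assemble the complete-data density: by construction $\theta_1,\theta_2,\dots$ are i.i.d.\ $H(\cdot)$; the pairs $(J_k,\theta_k)$ form a Poisson process on $\mathbb{R}^+\times\mathbb{X}$ with mean measure $M\rho_\eta(\mathrm{d}t)H(\mathrm{d}x)$ (equivalently, the jumps follow the ordered construction of Lemma~\ref{lem:sampler}); each allocation satisfies $P(s_i=k\mid\vec{J})=\omega_k=J_k/\tilde{\mu}(\mathbb{X})$; and $x_i\mid s_i,\vec{\theta}\sim g_0(\cdot\mid\theta_{s_i})$. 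Writing $T:=\tilde{\mu}(\mathbb{X})=\sum_{l\ge1}J_l$, the observation likelihood is $\prod_{i=1}^N \frac{J_{s_i}}{T}\,g_0(x_i\mid\theta_{s_i})$, which carries the inconvenient factor $T^{-N}$ and an infinite sum inside $T$.

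Next I would introduce the slice variables: let $u_i\mid s_i,\vec{J}$ be uniform on $(0,J_{s_i})$, equivalently replace $\frac{J_{s_i}}{T}\,g_0(x_i\mid\theta_{s_i})$ by $\frac{1}{T}\,1(u_i<J_{s_i})\,g_0(x_i\mid\theta_{s_i})$, integrating out $u_i$ recovers the original term. Then I would introduce the latent relative mass $U_N=\Gamma_N/T$ with $\Gamma_N\sim\Gamma(1,N)$, which by the identity in the definition of latent relative mass trades the accumulated factor $T^{-N}$ for $U_N^{N-1}\exp\{-U_N\sum_{l\ge1}J_l\}$ (up to the constant $1/\Gamma(N)$). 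After these two substitutions $T$ enters the joint only through $\prod_{l\ge1}e^{-U_N J_l}$, which factorises over the jumps, and the remaining unintegrated quantities are $\vec{\theta}$, the full jump sequence with its atoms, $\vec{s}$, $\vec{u}$ and $U_N$.

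The crux is the reduction from infinitely many to finitely many jumps. Put $L=\min_i u_i$. Any jump with $J_k\le L$ then has $J_k\le u_i$ for all $i$, so it cannot be selected by any $s_i$ and appears in the joint only through its own (i.i.d., to-be-integrated) atom and the factor $e^{-U_N J_k}$; moreover every $s_i$ points to a jump with $J_{s_i}>u_i\ge L$. Since $\nu([L,\infty)\times\mathbb{X})<\infty$ (as recalled before Lemma~\ref{lem:sampler}), only finitely many jumps exceed $L$; relabel these as $J_1,\dots,J_K$, so all $s_i$ take values in $\{1,\dots,K\}$. Splitting the Poisson process over the disjoint regions $(L,\infty)\times\mathbb{X}$ and $(0,L]\times\mathbb{X}$: on the former it is a compound Poisson process, contributing $K\sim\mathrm{Poisson}(M\int_L^\infty\rho_\eta(\mathrm{d}t))$, i.i.d.\ jump densities $\rho_\eta(J_k)/\int_L^\infty\rho_\eta(t)\mathrm{d}t$ (the term $p(J_1,\dots,J_K)$), i.i.d.\ atoms $\prod_{k=1}^K h(\theta_k)$, and the finite sum $\exp\{-U_N\sum_{k=1}^K J_k\}$; on the latter there are no observed data or indicators, so I would marginalise the sub-$L$ jumps out by the L\'evy--Khintchine / Campbell exponential formula~(\ref{eq:levy-khintchine}), obtaining $\exp\{-\int_{(0,L]\times\mathbb{X}}(1-e^{-U_N t})\nu(\mathrm{d}t,\mathrm{d}x)\}=\exp\{-M\int_0^L(1-e^{-U_N t})\rho_\eta(t)\mathrm{d}t\}$, the second exponential in~(\ref{eq:svmarglikelihood}). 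Collecting the surviving factors, $U_N^{N-1}$, the two exponentials, $p(J_1,\dots,J_K)$ with the Poisson law of $K$, $\prod_k h(\theta_k)$, and $\prod_i 1(J_{s_i}>u_i)\,g_0(x_i\mid\theta_{s_i})$, and discarding multiplicative constants yields~(\ref{eq:svmarglikelihood}).

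I expect the main obstacle to be the bookkeeping in this last step: one must justify, conditionally on $\vec{u}$, that the decomposition of the Poisson process at the data-dependent level $L=\min\vec{u}$ is legitimate and that every sub-$L$ jump is genuinely absent from all slice indicators (and that the ``large'' jumps are indexed $1,\dots,K$ consistently with $\vec{s}$). Once that is settled, integrating out the sub-$L$ jumps is a direct application of~(\ref{eq:levy-khintchine}), and the preliminary manipulations (the $u_i$ and $U_N$ substitutions) are routine given the identities above, following~\cite{GriffinW:JCGS11}.
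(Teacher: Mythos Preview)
Your proposal is correct and follows essentially the same route as the paper's own proof: write the complete-data likelihood $\prod_i (J_{s_i}/J_+)g_0(x_i\mid\theta_{s_i})$, introduce the slice variables $u_i$ to replace $J_{s_i}$ by $1(u_i<J_{s_i})$, introduce $U_N$ via the Gamma integral identity to remove $J_+^{-N}$, split $J_+=J^*+\sum_{k\le K}J_k$ at the threshold $L=\min\vec{u}$, and evaluate $\mathbb{E}[e^{-U_N J^*}]$ by the L\'evy--Khintchine formula before combining with the compound-Poisson prior on $(K,J_1,\dots,J_K)$ and the i.i.d.\ prior on $\vec{\theta}$. If anything, your treatment of the data-dependent threshold $L$ and the relabelling of the large jumps is more carefully flagged than in the paper's argument, which simply asserts the decomposition and applies~(\ref{eq:levy-khintchine}) without further comment.
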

The expressions for the NGG needed to work with this lemma
were given in the remark at the end of Section~\ref{sec:NRM}.  
Thus the integral term in Equation~(\ref{eq:svmarglikelihood}) 
can be turned into an expression involving incomplete Gamma functions.


{\small
\begin{table}[t]
  \caption{List of notation.}\label{tab:notations}
  \vskip 0.15in
  \begin{center}
    \begin{small}
      \begin{sc}
	\begin{tabular}{p{0.15\linewidth}|p{0.75\linewidth}}
	  \hline
	  Notation & Description \\
	  \hline
	  $K$ & \#components with jump sizes larger than a threshold $L$ \\
	  \hline
	  $\theta_k, k = 1, \cdots, K$ & Components in the mixture model \\
	  \hline
	  $M$ &   Total mass of the random measure \\
	  \hline
	  $J_k, k = 1, \cdots, K$ & Jump sizes of the random measure with all $J_k > L$ \\
	  \hline
	  $J^*$ & Sum of the remaining jump sizes, $J^{*} = \sum_{k = K + 1}^\infty J_k, J_k < L$ \\
	  \hline
	  $y_i, i = 1, \cdots, N$ & Observed data \\
	  \hline
	  $n_k, k = 1, \cdots, K$ & \#data attached to each component \\
	  \hline
	  $N$ & total number of data points \\
	  \hline
	  $s_i, i = 1, \cdots, N$ & Variables indicating which component $y_i$ belongs to \\
	  \hline
	  $u_i, i = 1, \cdots, N$ & slice variable uniformly distributed in $(0, J_{s_i}]$ for $y_i$ \\
	  \hline
	  $L$ & $L = \min\{\vec{u}\}$ \\
	  \hline
	  $U_N$ & An auxiliary variable introduced to make the sampling feasible \\
	  \hline
	  $g_0(\cdot|\theta_k)$ & Density function to generate data on component $\theta_k$ \\
	  \hline
	  $h(\theta_k)$ & Density of $H(\theta_k)$ \\
	  \hline
	  $p_M(M)$ & Prior for $M$ \\
	  \hline
	  $\nu(\mathrm{d}t, \mathrm{d}x)$ & L\'{e}vy measure of the random measure with 
	    decomposition $\nu(\mathrm{d}t, \mathrm{d}x) = \rho_{\eta}(\mathrm{d}t)H(\mathrm{d}x)$
	    considered in this paper \\
	  \hline
	\end{tabular}
      \end{sc}
    \end{small}
  \end{center}
  \vskip -0.1in
\end{table}
}

\subsubsection{Sampling:}\label{sec:slicesamplingMix}
First, we denote the parameter set as $C = \left\{\vec{\theta}, J_1, 
\cdots, J_K, K, \vec{u}, L, \vec{s}, U_N, M\right\}$, then the sampling goes as
\begin{itemize}
 \item {\bf Sampling $\vec{s}$: } From (\ref{eq:svmarglikelihood}) we get
  \begin{equation}
    p(s_i = k|C\backslash\{s_i\})\propto 1(J_k > u_i)g_0(y_i|\theta_k)
  \end{equation}

 \item {\bf Sampling $U_N$: }Similarly 
  \begin{eqnarray}
    p(U_N|C\backslash\{U_N\})&\propto& U_N^{N-1}\exp\left\{-U_N\sum_{k = 1}^KJ_k\right\} \nonumber\\
    &&\exp\left\{-M\int_{0}^L\left[1 - 
	\exp\left\{-U_Nt\right\}\right]\rho_{\eta}(\mathrm{d}t)\right\},
  \end{eqnarray}
  which can be sampled using rejection sampling from a proposal distribution 
$Ga\left(n, \sum_{k = 1}^KJ_k\right)$,
  here $Ga(a, b)$ means a Gamma distribution with shape parameter $a$ and scale parameter $b$.

 \item {\bf Sampling $\theta$: }The posterior of $\theta_k$ with prior density $h(\theta_k)$ is 
  \begin{equation}\label{eq:sampletheta}
   p(\theta_k|C\backslash\{\theta_k\})\propto h(\theta)\prod_{i|s_i = k}g_0(y_i|\theta_k).
  \end{equation}

 \item {\bf Sampling $K, \{J_1, \cdots, J_K\}$: }Sampling for $J_k$ can be done
  separately for those associated with data points (fixed points) and for those that are not.
  Based on~\cite{JamesLP:SJS09}, when integrating out $\vec{u}$ in (\ref{eq:svmarglikelihood}), 
  the posterior of the jump $J_k$ with data attached ($n_k > 0$) is proportional to
  \begin{equation}\label{eq:jumpsizeleq0}
   J_k^{n_k}\exp\left\{-U_NJ_k\right\}\rho_{\eta}(J_k),
  \end{equation}
   While for those without data attached ($n_k = 0$), based on~\cite{GriffinW:JCGS11}, conditional on $U_N$,
   the number of these jumps follows a Poisson distribution with mean 
    $$M\int_L^\infty\exp\{-U_Nt\}\rho_{\eta}(\mathrm{d}t),$$
    while their lengths $t$ have densities proportional to 
    $$\exp\{-U_Nt\}\rho_{\eta}(\mathrm{d}t)1(t > L).$$

  \item {\bf Sampling $\vec{u}$: }$\vec{u}$ are uniformly
    distributed in the interval $(0, J_{s_i}]$ for each $i$. After sampling the $\vec{u}$,
    $L$ is set to $L = \min\{\vec{u}\}$.

 \item {\bf Sampling $M$:} 
  The posterior of $M$ with prior $p_M(M)$ is 
  \begin{eqnarray}\label{eq:sampleM}
    &&\hspace{-1cm}p(M|C\backslash\{M\})\propto p_M(M)M^K\exp\left\{-M\left[\int_L^\infty\rho_{\eta}(\mathrm{d}t) \right.\right. 
    + \left.\left. \int_{0}^L\left[1 - \exp\left\{-U_Nt\right\}\right]\rho_{\eta}(\mathrm{d}t)\right]\right\}.
  \end{eqnarray}
  $p_M(M)$ is usually taken to be Gamma distributed, so the posterior of $M$ 
  can be sampled conveniently.
\end{itemize}

\section{Operations}\label{sec:ops}

This section introduces the dependency operations used.
These are developed for Poisson processes, CRMs and
NRMs.

\subsection{Operations on Poisson processes}\label{sec:PoiPOP}

We review three operations that transform Poisson processes in order to construct dependent
completely random measures. For details, refer to~\cite{Kindman:PoiP10,LinGF:NIPS10}.

\begin{definition}[Superposition of Poisson processes]
 Given a set of Poisson processes $\Pi_1, \Pi_2, \cdots, \Pi_n$, the superposition 
  of these Poisson processes is defined as the union of the points in these 
  Poisson processes:
  \begin{equation}
   \Pi := \bigcup_{i=1}^n\Pi_i.
  \end{equation}
\end{definition}

\begin{lemma}[Superposition Theorem]
 Let $\Pi_1, \cdots, \Pi_n$ be $n$ independent Poisson processes on $\mathbb{S}$ 
  with $\Pi_k\sim\text{PoissonP}(\nu_k)$, then the superposition of these 
  $n$ Poisson processes is still a Poisson process with 
  $\Pi\sim\text{PoissonP}(\sum_i\nu_i)$.
\end{lemma}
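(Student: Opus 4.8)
The plan is to verify the two defining properties of a Poisson process (from the definition given above) for the union $\Pi = \bigcup_{i=1}^n \Pi_i$, using the corresponding properties of the independent components $\Pi_1,\dots,\Pi_n$. Write $N(\cdot)$ for the counting measure of $\Pi$ and $N_i(\cdot)$ for the counting measure of $\Pi_i$. Since the $\Pi_i$ are (almost surely) disjoint as point sets — which follows because each $\nu_i$ is non-atomic in the relevant sense, or can be arranged by the usual marking argument — we have the key additivity relation $N(A) = \sum_{i=1}^n N_i(A)$ for every measurable $A \subseteq \mathbb{S}$.

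First I would check property (a): for fixed measurable $A$, each $N_i(A)$ is Poisson with mean $\nu_i(A)$, and by independence of the $\Pi_i$ the variables $N_1(A),\dots,N_n(A)$ are independent; hence their sum $N(A)$ is Poisson with mean $\sum_i \nu_i(A) = \bigl(\sum_i \nu_i\bigr)(A)$, using the standard fact that a finite sum of independent Poisson variables is Poisson with the summed mean (this is a one-line computation with probability generating functions or the convolution formula, so I would not grind through it). Second I would check property (b): given disjoint measurable sets $A_1,\dots,A_m$, I want $N(A_1),\dots,N(A_m)$ independent. Here $N(A_j) = \sum_{i=1}^n N_i(A_j)$. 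For each fixed $i$, the vector $(N_i(A_1),\dots,N_i(A_m))$ has independent components by property (b) applied to $\Pi_i$; and across different $i$ the whole collection $\{N_i(A_j)\}_{i,j}$ is independent because the $\Pi_i$ are mutually independent processes. Therefore the $mn$ variables $N_i(A_j)$ are jointly independent, and the sums $N(A_j) = \sum_i N_i(A_j)$ over disjoint index blocks (one block of $i$-values for each $j$) are independent as functions of disjoint sets of independent variables. This establishes that $\Pi \sim \text{PoissonP}\bigl(\sum_i \nu_i\bigr)$.

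The main obstacle, and the only genuinely delicate point, is justifying that $N(A) = \sum_i N_i(A)$, i.e. that the union of the point sets does not lose points through coincidences between different $\Pi_i$. For atomless mean measures $\nu_i$ on a nice space this is standard (the probability that two independent Poisson processes share a point is zero), but to be careful one should either assume the $\nu_i$ are non-atomic or pass to the marked picture in which each point of $\Pi_i$ carries its label $i$, so that the union is literally disjoint by construction and the counting measures add trivially. I would state this caveat and then proceed; everything else is the routine Poisson algebra sketched above. It suffices to prove the case $n=2$ and iterate, which slightly lightens the notation, though it is not essential.
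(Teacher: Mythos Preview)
Your proof is correct and is essentially the standard argument found in Kingman's monograph. Note, however, that the paper does not actually prove this lemma: it is stated as a review result, with the reader referred to \cite{Kindman:PoiP10,LinGF:NIPS10} for details. So there is no ``paper's own proof'' to compare against beyond the citation.

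One minor comment on your write-up: the delicate point you flag about coincidences is handled cleanly in Kingman's treatment by working throughout with the counting random measures $N_i(\cdot)$ rather than the point sets $\Pi_i$, so that superposition is \emph{defined} as $N = \sum_i N_i$ and the additivity $N(A)=\sum_i N_i(A)$ holds by fiat. Your marking argument achieves the same thing; either way the issue is cosmetic once the definition is set up carefully.
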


\begin{definition}[Subsampling of Poisson processes]
 Subsampling of a Poisson process with sampling rate $q(\theta)$ is defined 
 to be selecting the points of the Poisson process via independent Bernoulli 
  trials with acceptance rate $q(\theta)$.
\end{definition}

\begin{lemma}[Subsampling Theorem]
 Let $\Pi\sim\text{PoissonP}(\nu)$ be a Poisson process on the space $\mathbb{S}$ and 
  $q: \mathbb{S}\rightarrow [0, 1]$ be a measurable function. If we independently draw 
  $z_\theta\in\{0, 1\}$ for each $\theta\in\Pi$ with $P(z_\theta = 1) = q(\theta)$, 
  and let $\Pi_k = \{\theta\in\Pi: z_\theta = k\}$ for $k = 0, 1$, then 
  $\Pi_0, \Pi_1$ are independent Poisson processes on $\mathbb{S}$ with 
  $S^{1 -q}(\Pi) := \Pi_0\sim\text{PoissonP}((1 - q)\nu)$ and $S^q(\Pi) := \Pi_1\sim\text{PoissonP}(q\nu)$.
\end{lemma}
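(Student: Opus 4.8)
The plan is to verify the two defining properties of a Poisson process directly for the pair $(\Pi_0,\Pi_1)$ --- this is the classical marking/colouring theorem of Kingman --- first reducing to a finite mean measure and then invoking the elementary fact that a Poisson number of i.i.d.\ points sorted into finitely many cells yields independent Poisson cell counts. Since $\nu$ is $\sigma$-finite, write $\mathbb{S}=\bigcup_{n\ge 1}S_n$ with the $S_n$ disjoint and $\nu(S_n)<\infty$. By the defining independence property of $\Pi$, the restrictions $\Pi|_{S_n}$ are independent Poisson processes with mean measures $\nu|_{S_n}$, and the subsampling marks act on them independently; so if the theorem holds for finite mean measures, then $\Pi_0|_{S_n},\Pi_1|_{S_n}$ are independent with means $(1-q)\nu|_{S_n}$ and $q\nu|_{S_n}$, jointly over $n$, and superposing over $n$ (a countable superposition of independent Poisson processes being again Poisson with the summed mean measure) recovers the full claim. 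So from now on assume $\nu(\mathbb{S})<\infty$.

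In that case $N(\mathbb{S})\sim\mathrm{Poisson}(\nu(\mathbb{S}))$, and conditional on $N(\mathbb{S})=n$ the points $\theta_1,\dots,\theta_n$ are i.i.d.\ with law $\nu(\cdot)/\nu(\mathbb{S})$, each independently marked $z_{\theta_i}=1$ with probability $q(\theta_i)$. Fix any finite measurable partition $\mathbb{S}=B_1\cup\dots\cup B_m$ and classify each pair $(\theta_i,z_{\theta_i})$ into one of $2m$ cells $(j,c)$, meaning $\theta_i\in B_j$ and $z_{\theta_i}=c$; the cell probabilities are $r_{j,1}=\nu(\mathbb{S})^{-1}\int_{B_j}q\,\mathrm{d}\nu$ and $r_{j,0}=\nu(\mathbb{S})^{-1}\int_{B_j}(1-q)\,\mathrm{d}\nu$ (well defined by measurability of $q$), and they sum to one. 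Conditional on $N(\mathbb{S})=n$ the vector of cell counts is multinomial with parameters $n$ and $\{r_{j,c}\}$, so unconditioning over $N(\mathbb{S})\sim\mathrm{Poisson}(\nu(\mathbb{S}))$ and applying the Poissonization fact below, the counts $\{N_0(B_j),N_1(B_j)\}_{j=1}^m$ become mutually independent with $N_1(B_j)\sim\mathrm{Poisson}(\int_{B_j}q\,\mathrm{d}\nu)$ and $N_0(B_j)\sim\mathrm{Poisson}(\int_{B_j}(1-q)\,\mathrm{d}\nu)$. Mutual independence across all $2m$ cells simultaneously gives (i) independence of counts over disjoint blocks within each of $\Pi_1,\Pi_0$, (ii) the correct Poisson marginals, hence the mean measures $q\nu$ and $(1-q)\nu$, and (iii) independence of $\Pi_1$ from $\Pi_0$; since the partition was arbitrary, this is exactly the assertion in the finite case.

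The only real content is the Poissonization step --- that a $\mathrm{Poisson}(\lambda)$ number of items allocated i.i.d.\ to cells with probabilities $\{r_{j,c}\}$ has mutually independent $\mathrm{Poisson}(\lambda r_{j,c})$ cell counts; everything else is bookkeeping. A slicker alternative that avoids the combinatorics is to compute the joint probability generating functional $\mathbb{E}\big[\prod_{\theta\in\Pi_0}h_0(\theta)\prod_{\theta\in\Pi_1}h_1(\theta)\big]$ by first conditioning on $\Pi$, where each point contributes $q(\theta)h_1(\theta)+(1-q(\theta))h_0(\theta)$ in expectation over its mark, and then applying the p.g.fl.\ of $\Pi$; the result factorizes as $\exp\{-\int(1-h_1)q\,\mathrm{d}\nu\}\,\exp\{-\int(1-h_0)(1-q)\,\mathrm{d}\nu\}$, which is precisely the product of the p.g.fl.s of independent $\mathrm{PoissonP}(q\nu)$ and $\mathrm{PoissonP}((1-q)\nu)$, and a generating functional determines the law uniquely.
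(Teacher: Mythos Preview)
Your proof is correct; it is the classical colouring theorem argument, and both routes you sketch (the multinomial--Poissonization computation on a partition, and the joint probability generating functional computation) are standard and valid.

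The paper, however, does not actually prove this lemma. It states the Subsampling Theorem as a known result and refers the reader to Kingman's monograph and to Lin, Grimson and Fisher for details; no argument is given in the text or in the appendix. So there is nothing to compare against beyond noting that what you have written is precisely the proof one finds in Kingman's book (the reduction to finite mean measure followed by Poissonization of a multinomial), with the p.g.fl.\ alternative being the other textbook route. Either would be a perfectly acceptable way to fill the gap the paper leaves.
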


\begin{definition}[Point transition of Poisson processes]
  Point transition of a Poisson process $\Pi$ on space $(\mathbb{S}, \mathcal{S})$, denoted as
  $T(\Pi)$, is defined as moving each point of the Poisson process independently 
  to other locations following a probabilistic transition kernel $T$, which 
  is defined to be a function $\mathcal{T}: \mathbb{S}\times\mathcal{S}\rightarrow [0, 1]$
  \footnote{In the following we will use $T(\cdot)$ to denote the point transition operation, while
  use $\mathcal{T}(\cdot, \cdot)$ to denote the corresponding transition kernel.} such 
  that for each $\theta\in \mathbb{S}$, $\mathcal{T}(\theta, \cdot)$ 
  is a probability measure on $E$ that describes the distribution of where the point 
  $\theta$ moves, and for each $A\in\mathcal{S}$, $\mathcal{T}(\cdot, A)$ is integrable.
  Thus, $T(\Pi) := \{\theta^\prime: \theta^\prime\sim \mathcal{T}(\theta, \cdot)|\theta\in\Pi\}$.
  With a little abuse of notation, we use $\mathcal{T}(\theta)$ to denote a sample from $\mathcal{T}(\theta, \cdot)$
  in this paper.
Thus $\mathcal{T}(\theta)$ is a stochastic function.
\end{definition}

\begin{lemma}[Transition Theorem]
 Let $\Pi\sim\text{PoissonP}(\nu)$ be a Poisson process on space $(\mathbb{S}, \mathcal{S})$, $\mathcal{T}$ 
 a probability transition kernel, then
  \begin{equation}
   T(\Pi) \sim\text{PoissonP}(\mathcal{T}\nu).
  \end{equation}
  where $\mathcal{T}\nu$ can be considered as a transformation of measures over $\mathbb{S}$ defined as 
  $(\mathcal{T}\nu)(A):=\int_{\mathbb{S}}\mathcal{T}(\theta, A)\nu(\mathrm{d}\theta)$ for $A\in\mathbb{S}$.
\end{lemma}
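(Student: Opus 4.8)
The plan is to verify directly that the counting measure $N'(\cdot)=\sum_{\theta\in\Pi}\delta_{\mathcal{T}(\theta)}(\cdot)$ of $T(\Pi)$ satisfies the two defining properties of a Poisson process with mean measure $\mathcal{T}\nu$: that $N'(A)\sim\mathrm{Poisson}\big((\mathcal{T}\nu)(A)\big)$ for each $A\in\mathcal{S}$, and that $N'(A_1),\dots,N'(A_n)$ are independent whenever $A_1,\dots,A_n$ are disjoint. Working with $N'$ rather than with $T(\Pi)$ as a set avoids any worry about distinct source points having coinciding images. A preliminary step is to check that $\mathcal{T}\nu$, defined by $(\mathcal{T}\nu)(A)=\int_{\mathbb{S}}\mathcal{T}(\theta,A)\,\nu(\mathrm{d}\theta)$, is genuinely a measure: nonnegativity is immediate, countable additivity follows from monotone convergence applied to $A\mapsto\mathcal{T}(\theta,A)$ inside the integral, and since $\mathcal{T}(\theta,\mathbb{S})=1$ one has $(\mathcal{T}\nu)(A)\le\nu(A)$, so $\mathcal{T}\nu$ inherits $\sigma$-finiteness from $\nu$ (which, in the CRM setting $\mathbb{S}=\mathbb{R}^+\times\mathbb{X}$ of this paper, is automatic).

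Next I would reduce to the case of a finite mean measure. Write $\mathbb{S}=\bigsqcup_m\mathbb{S}_m$ with $\nu(\mathbb{S}_m)<\infty$. Then the restrictions $\Pi\cap\mathbb{S}_m$ are independent Poisson processes with mean measures $\nu|_{\mathbb{S}_m}$, the point transition acts on each of them independently, and $N'$ is the sum of the counting measures of their images. Since $\sum_m\mathcal{T}(\nu|_{\mathbb{S}_m})=\mathcal{T}\nu$ by additivity of the integral, the Superposition Theorem lets me assume $\nu$ itself is finite, say with total mass $c=\nu(\mathbb{S})$. In that case $\Pi$ consists of $K\sim\mathrm{Poisson}(c)$ points drawn i.i.d.\ from $\nu/c$ (the block-sampling description recalled just before Lemma~\ref{lem:sampler}). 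Conditioning on $\Pi=\{\theta_1,\dots,\theta_K\}$ and using that the displacements $\mathcal{T}(\theta_1),\dots,\mathcal{T}(\theta_K)$ are conditionally independent, the images are, unconditionally, $K$ i.i.d.\ draws from the law $(\mathcal{T}\nu)/c$, because $\int_{\mathbb{S}}\mathcal{T}(\theta,\cdot)\,(\nu/c)(\mathrm{d}\theta)=(\mathcal{T}\nu)/c$, while $K$ is still $\mathrm{Poisson}(c)=\mathrm{Poisson}\big((\mathcal{T}\nu)(\mathbb{S})\big)$. The classical Poissonization fact --- a $\mathrm{Poisson}(c)$ number of i.i.d.\ points placed according to a probability law $P$ yields independent Poisson counts with means $c\,P(A_i)$ over disjoint sets --- then gives exactly that $N'$ is $\mathrm{PoissonP}(\mathcal{T}\nu)$, completing the reduction.

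The step I expect to be the main obstacle is making the conditioning argument rigorous: treating the stochastic map $\mathcal{T}(\cdot)$ as a bona fide transition kernel, justifying that conditionally on $\Pi$ the family $\{\mathcal{T}(\theta)\}_{\theta\in\Pi}$ is independent with $\mathcal{T}(\theta)\sim\mathcal{T}(\theta,\cdot)$, and carrying out the Fubini/Tonelli interchange that converts $\int_{\mathbb{S}}\mathcal{T}(\theta,A)\,\nu(\mathrm{d}\theta)$ into the mean of the displaced law. A slicker but less self-contained alternative I would record is the Laplace-functional route: for measurable $f:\mathbb{S}\to\mathbb{R}^+$, condition on $\Pi$ to get $\mathbb{E}\big[\exp\{-\sum_{\theta\in\Pi}f(\mathcal{T}(\theta))\}\mid\Pi\big]=\prod_{\theta\in\Pi}\int_{\mathbb{S}}e^{-f(\theta')}\,\mathcal{T}(\theta,\mathrm{d}\theta')$, then apply the Poisson Laplace functional of $\Pi$ together with Fubini to obtain $\exp\{-\int_{\mathbb{S}}(1-e^{-f(\theta')})\,(\mathcal{T}\nu)(\mathrm{d}\theta')\}$; since the Laplace functional characterizes the law of a point process, this identifies $T(\Pi)$ as $\mathrm{PoissonP}(\mathcal{T}\nu)$. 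In both approaches the only genuine content is the interchange of the kernel-integration with the Poisson integral; everything else is bookkeeping.
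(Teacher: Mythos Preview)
Your argument is correct and complete; in fact the paper does not prove this lemma at all but simply states it with a citation to Kingman's monograph \emph{Poisson Processes} and to Lin et~al., so you have supplied what the paper omits. Both routes you outline---the Poissonization reduction (finite case, a Poisson number of i.i.d.\ points, then superposition for the $\sigma$-finite case) and the Laplace-functional computation---are standard and match the treatment in Kingman; either would serve.

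One small slip: the inequality $(\mathcal{T}\nu)(A)\le\nu(A)$ is false in general (mass can be pushed from $A^{c}$ into $A$), so your claimed inheritance of $\sigma$-finiteness does not follow that way. This does not damage the proof: your reduction works because each restriction $\nu|_{\mathbb{S}_m}$ has finite total mass, hence so does $\mathcal{T}(\nu|_{\mathbb{S}_m})$, and the Superposition Theorem then handles the countable sum regardless of whether $\mathcal{T}\nu$ itself is $\sigma$-finite.
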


\subsection{Operations on random measures}\label{sec:URMPOP}

\subsubsection{Operations on CRMs}
The dependency operations defined on Poisson processes in Section~\ref{sec:PoiPOP} can be naturally generalized
to the completely random measures given the construction in (\ref{eq:RM}). Formally, we have

\begin{definition}[Superposition of CRMs]
 Given $n$ independent CRMs $\tilde{\mu}_1, \cdots, \tilde{\mu}_n$ on $\mathbb{X}$, 
  the superposition ($\tilde{\oplus}$) is defined as:
  \begin{equation}
   \tilde{\mu}_1\tilde{\oplus} \tilde{\mu}_2\tilde{\oplus}\cdots\tilde{\oplus} \tilde{\mu}_n := \mu_1 + \mu_2 + \cdots + \mu_n ~. \nonumber
  \end{equation}
\end{definition}

\begin{definition}[Subsampling of CRMs]
 Given a CRM $\tilde{\mu} = \sum_{k = 1}^\infty J_k\delta_{\theta_k}$ on $\mathbb{X}$, and a measurable 
  function $q:\mathbb{X}\rightarrow[0, 1]$. If we independently draw $z(\theta)\in\{0, 1\}$ for each $\theta\in\mathbb{X}$
  with $p(z(\theta) = 1) = q(\theta)$,
  the subsampling of $\tilde{\mu}$, is defined as
  \begin{equation}
   \tilde{S}^q(\tilde{\mu}) := \sum_{k}z(\theta_k)J_k\delta_{\theta_k},
  \end{equation}
\end{definition}

\begin{definition}[Point transition of CRMs]
 Given a CRM $\tilde{\mu} = \sum_{k = 1}^\infty J_k\delta_{\theta_k}$ on $\mathbb{X}$, the point transition 
  of $\tilde{\mu}$, is to draw atoms $\theta_k^\prime$
  from a transformed base measure to yield a new random measure as
   $$\tilde{T}(\tilde{\mu}) := \sum_{k = 1}^\infty J_k\delta_{\theta_k^\prime}.$$
\end{definition}

\subsubsection{Operations on NRMs}

The operations on NRMs can be naturally generalized from those on CRMs:
\begin{definition}[Superposition of NRMs]
 Given $n$ independent NRMs $\mu_1, \cdots, \mu_n$ on $\mathbb{X}$, 
  the superposition  ($\oplus$) is:
  \begin{equation}
   \mu_1\oplus \mu_2\oplus\cdots\oplus \mu_n := c_1\mu_1 + c_2\mu_2 + \cdots + c_n\mu_n ~. \nonumber
  \end{equation}
where the weights $c_m=\frac{\tilde{\mu}_m(\mathbb{X})}{\sum_j\tilde{\mu}_j(\mathbb{X})}$ and 
$\tilde{\mu}_m$ is the unnormalized random measures corresponding to $\mu_m$.
\end{definition}

\begin{definition}[Subsampling of NRMs]
 Given a NRM $\mu = \sum_{k = 1}^\infty r_k\delta_{\theta_k}$ on $\mathbb{X}$, and a measurable 
  function $q: \mathbb{X}\rightarrow[0, 1]$. If we independently draw 
  $z(\theta)\in\{0, 1\}$ for each $\theta\in\mathbb{X}$
  with $p(z(\theta) = 1) = q(\theta)$, the subsampling of $\mu$, is defined as
  \begin{equation} \label{eq:subsampling}
   S^q(\mu) := \sum_{k:z(\theta_k) = 1}\frac{r_k}{\sum_{j}z(\theta_j)r_j}\delta_{\theta_k},
  \end{equation}
\end{definition}

\begin{definition}[Point transition of NRMs]
 Given a NRM $\mu = \sum_{k = 1}^\infty r_k\delta_{\theta_k}$ on $\mathbb{X}$, the point transition 
  of $\mu$, is to draw atoms $\theta_k^\prime$
  from a transformed base measure to yield a new NRM as
   $$T(\mu) := \sum_{k = 1}^\infty r_k\delta_{\theta_k^\prime}~.$$
\end{definition}

The definitions are constructed so the following simple lemma holds.
\begin{lemma}\label{lem:OECR}
Superposition, subsampling or point transition of NRMs
is equivalent to superposition, subsampling or point transition of 
their underlying CRMs.
\end{lemma}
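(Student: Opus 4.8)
The plan is to verify the three cases separately, each by a direct computation that normalizes the transformed CRM and checks that the result coincides with the corresponding NRM operation. The natural reading of ``equivalent'' here is an honest identity of random measures under the obvious coupling in which both sides are driven by the same underlying randomness: the same Poisson process (hence the same jumps $J_k$ and atoms $\theta_k$), the same Bernoulli marks $z(\theta)$ for subsampling, and the same draws $\theta_k' \sim \mathcal{T}(\theta_k,\cdot)$ for point transition. Throughout I would write $\mu = \tilde\mu/\tilde\mu(\mathbb{X})$ and use that the NRM atom weights are $r_k = J_k/\sum_l J_l$, so that the total mass $\tilde\mu(\mathbb{X}) = \sum_l J_l$ cancels in every ratio. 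The only genuine point to be careful about is that the normalizing constants are almost surely in $(0,\infty)$, so that all these divisions are legitimate; this is where the pure-jump, finite-total-mass assumptions on the CRMs (and the nontriviality of $q$) enter.

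For superposition, start from the CRM definition $\tilde\mu := \tilde\mu_1 + \cdots + \tilde\mu_n$ and normalize: $\tilde\mu/\tilde\mu(\mathbb{X}) = (\sum_m \tilde\mu_m)/(\sum_j \tilde\mu_j(\mathbb{X}))$. Multiplying and dividing the $m$-th summand by $\tilde\mu_m(\mathbb{X})$ turns this into $\sum_m c_m \mu_m$ with $c_m = \tilde\mu_m(\mathbb{X})/\sum_j \tilde\mu_j(\mathbb{X})$, which is exactly the definition of $\mu_1 \oplus \cdots \oplus \mu_n$; here one checks $\sum_j \tilde\mu_j(\mathbb{X}) \in (0,\infty)$ a.s. For subsampling, using the \emph{same} marks $z(\theta_k)$ on both sides, normalizing $\tilde S^q(\tilde\mu) = \sum_k z(\theta_k) J_k \delta_{\theta_k}$ gives atoms of size $z(\theta_k) J_k / \sum_j z(\theta_j) J_j$; dividing numerator and denominator by $\sum_l J_l$ and substituting $r_k = J_k/\sum_l J_l$ yields $z(\theta_k) r_k / \sum_j z(\theta_j) r_j$, i.e.\ precisely $S^q(\mu)$ as in~(\ref{eq:subsampling}). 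One must note that $\sum_j z(\theta_j) J_j > 0$ a.s.\ (so the object is well defined), which holds provided $q$ is not $H$-a.e.\ zero; under the standing assumptions infinitely many points survive.

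For point transition, the key observation is that moving atoms leaves the jumps, and hence the total mass, unchanged: $\tilde T(\tilde\mu)(\mathbb{X}) = \sum_k J_k = \tilde\mu(\mathbb{X})$. Therefore normalizing $\tilde T(\tilde\mu) = \sum_k J_k \delta_{\theta_k'}$ gives $\sum_k (J_k/\sum_l J_l)\,\delta_{\theta_k'} = \sum_k r_k \delta_{\theta_k'} = T(\mu)$, once the draws $\theta_k'$ are coupled to agree on the two sides; the ``transformed base measure'' used for $T(\mu)$ is just the pushforward $\mathcal{T} H$, consistent with the CRM construction.

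I expect the algebra itself to be immediate in all three cases; the part that deserves to be written out carefully — and the only real obstacle — is (i) fixing the coupling so the statement is a pathwise identity rather than merely equality in law, and (ii) justifying that every normalizer ($\sum_j \tilde\mu_j(\mathbb{X})$ for superposition, $\sum_j z(\theta_j) J_j$ for subsampling, $\tilde\mu(\mathbb{X})$ for point transition) is a.s.\ strictly positive and finite, which is exactly where the pure-jump/finite-mass hypotheses on the CRMs and the non-degeneracy of $q$ are used.
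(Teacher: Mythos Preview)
Your proposal is correct and follows exactly the approach the paper intends: the paper does not give a separate proof of this lemma but simply remarks that ``the definitions are constructed so the following simple lemma holds,'' and your three case-by-case normalizations (multiply-and-divide by $\tilde\mu_m(\mathbb{X})$ for superposition, cancel $\sum_l J_l$ from numerator and denominator for subsampling, and use mass invariance under point transition) are precisely the immediate verifications this sentence is pointing at. Your additional care about the coupling and about the normalizers being a.s.\ in $(0,\infty)$ is more than the paper provides, but entirely appropriate.
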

Thus one does not need to distinguish between whether these operations
are on CRMs or NRMs.

\section{Posteriors for the NGG}\label{sec:postngg}

This section develops posteriors for the single NGG,
for a standard version 
$p\left(\vec{X}|\mbox{NGG}(a,M,H(\cdot)\right)$
and a version conditioned on the latent relative mass $U_N$,
$p\left(\vec{X}|U_N, \mbox{NGG}(a,M,H(\cdot)\right)$.
The second version is done because, as shown, the first version
requires computing a complex recursive function.

\subsection{Simple Posterior}\label{sec:post}

James {\it et al.}\ \cite{JamesLP:SJS09} develop posterior
analysis as follows.  
This theorem simplifies their results and specialises them to the NGG.
\begin{theorem}[Posterior Analysis for the NGG]\label{thm:postngg}
Consider the $\mbox{NGG}(a,M,H(\cdot))$.
For a data vector $\vec{X}$ of length $N$ there are $K$ distinct
values $X^*_1,...,X^*_K$ with counts
$n_1,...,n_K$ respectively.
The posterior marginal is given by
\begin{equation}
p\left(\vec{X}|\mbox{NGG}(a,M,H(\cdot)\right) ~=~
\label{eq:postngg}
 \frac{e^{M}a^{K-1}T^{N,K}_{a,M} }{\Gamma(N)}
\prod_{k=1}^K (1-a)_{n_k-1} h(X^*_k) ~.
\end{equation}
where 
\begin{equation}
\label{eq:defT}
T^{N,K}_{a,M} ~=~\int_{M}^\infty
\left(1-\left(\frac{M}{t}\right)^{1/a}\right)^{N-1}t^{K-1}e^{-t}\mathrm{d}t
~.
\end{equation}
Moreover, the predictive posterior is given by:
\[
p(X_{N+1}\in \mathrm{d}x|\vec{X},\mbox{NGG}\left(a,M,H(\cdot)\right) ~=~
\omega_0 H(\mathrm{d}x) + \sum_{k=1}^K
\omega_k \delta_{X^*_k}(\mathrm{d}x)
\]
where the weights sum to 1 ($\sum_{k=0}^K \omega_k=1$)
are derived as
\begin{eqnarray}
\nonumber
\omega_0 &\propto & a \frac{T^{N+1,K+1}_{a,M}}{T^{N+1,K}_{a,M}} \\
\omega_k &\propto & (n_k-a) 
\label{eq:omegadef}
\end{eqnarray}
\end{theorem}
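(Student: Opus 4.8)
The plan is to derive the posterior marginal from the normalized-random-measure machinery already set up, then extract the predictive rule as a ratio of marginals. The starting point is the latent relative mass trick: since the NGG is a normalized CRM, the divisor $\tilde\mu(\mathbb{X})^N$ appearing in the likelihood of $N$ data points is removed by introducing $U_N$ with $\frac{1}{\tilde\mu(\mathbb{X})^N}p(\Gamma_N)\mathrm{d}\Gamma_N = \exp\{-U_N\sum_k J_k\}\mathrm{d}U_N$, times a factor $U_N^{N-1}$ from the Gamma$(1,N)$ density (up to $1/\Gamma(N)$). Conditioning on $U_N$, the CRM factorizes over jumps, so the $K$ occupied atoms contribute jumps with size-biased densities $\propto J_k^{n_k}e^{-U_N J_k}\rho_a(J_k)$ and the unoccupied part contributes the exponential-of-Laplace-exponent term $\exp\{-M\int_0^\infty(1-e^{-U_N t})\rho_a(t)\,\mathrm{d}t\} = \exp\{-\psi_a(U_N)\} = \exp\{-M((1+U_N)^a-1)\}$. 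Using $\rho_a(t) = \frac{a}{\Gamma(1-a)}\frac{e^{-t}}{t^{1+a}}$, the integral over each occupied jump is $\int_0^\infty J^{n_k}e^{-U_N J}\rho_a(J)\,\mathrm{d}J = \frac{a}{\Gamma(1-a)}\Gamma(n_k-a)(1+U_N)^{a-n_k} = a\,(1-a)_{n_k-1}(1+U_N)^{a-n_k}$, where I use $\Gamma(n_k-a)/\Gamma(1-a) = (1-a)_{n_k-1}$. Multiplying the $K$ atom contributions, the $(1+U_N)$ powers combine with the $-\sum n_k = -N$ to give $(1+U_N)^{aK-N}$, and the $\prod h(X^*_k)$ comes from the base measure. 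So the joint density of $\vec X$ and $U_N$ (marginalizing the jumps, summing over which atom each datum hit — the multinomial/combinatorial factors are absorbed because the $n_k$ are fixed) is proportional to
\[
\frac{e^{M}\,a^{K}}{\Gamma(N)}\,U_N^{N-1}(1+U_N)^{aK-N}e^{-M(1+U_N)^a}\prod_{k=1}^K (1-a)_{n_k-1}h(X^*_k).
\]

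The next step is to integrate out $U_N$ over $(0,\infty)$ and show the result equals $a^{K-1}T^{N,K}_{a,M}$ up to the constants already displayed. Substitute $t = M(1+U_N)^a$, so $1+U_N = (t/M)^{1/a}$, $U_N = (t/M)^{1/a}-1$, and $\mathrm{d}U_N = \frac{1}{aM^{1/a}}t^{1/a-1}\mathrm{d}t$; as $U_N$ ranges over $(0,\infty)$, $t$ ranges over $(M,\infty)$. Then $U_N^{N-1} = ((t/M)^{1/a}-1)^{N-1} = (t/M)^{(N-1)/a}(1-(M/t)^{1/a})^{N-1}$, and $(1+U_N)^{aK-N} = (t/M)^{K-N/a}$. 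Collecting the powers of $t$: $\frac{N-1}{a} + (K - \frac{N}{a}) + (\frac{1}{a}-1) = K-1$, and the powers of $M$ cancel appropriately, so after the $e^M$ and $e^{-t}$ bookkeeping the integral becomes exactly $\frac{1}{a}\int_M^\infty (1-(M/t)^{1/a})^{N-1}t^{K-1}e^{-t}\mathrm{d}t = \frac{1}{a}T^{N,K}_{a,M}$. That extra $1/a$ turns the $a^K$ prefactor into $a^{K-1}$, yielding \eqref{eq:postngg}. The main obstacle here is purely bookkeeping: keeping the $e^M$, the normalization constants from the size-biased jump densities, and the combinatorial factor (the indistinguishability of the two equivalent orderings discussed around Lemma~\ref{lem:sampler}) straight so that nothing spurious survives; one cross-check is that at $a\to 0$ the expression must collapse to the Dirichlet-process Ewens marginal, and that $T^{N,K}_{a,M}$ as defined is finite for $0<a<1$.

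For the predictive posterior, write $p(X_{N+1}\in\mathrm{d}x\mid\vec X) = p(\vec X, X_{N+1}\in\mathrm{d}x)/p(\vec X)$ using \eqref{eq:postngg} for the $(N+1)$-data marginal. If $X_{N+1}$ lands on the existing value $X^*_k$, the data have $K$ distinct values, the count $n_k$ becomes $n_k+1$, so the ratio of the $(1-a)_{\cdot}$ Pochhammer factors gives $(1-a)_{n_k}/(1-a)_{n_k-1} = (n_k-a)$, while the $T$-factor ratio is $T^{N+1,K}_{a,M}/T^{N,K}_{a,M}$, which is common to all occupied cells and hence absorbed into the normalization; this gives $\omega_k\propto n_k-a$. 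If $X_{N+1}$ is fresh, there are $K+1$ distinct values, an extra factor $a$ appears (from $a^{K}\to a^{K+1}$, equivalently $a^{K-1}\to a^{K}$), the new singleton contributes $(1-a)_0 = 1$, the base measure contributes $h(\mathrm{d}x) = H(\mathrm{d}x)$, and the $T$-factor is $T^{N+1,K+1}_{a,M}$; dividing through by the common $T^{N+1,K}_{a,M}$ used in normalizing the occupied weights gives $\omega_0\propto a\,T^{N+1,K+1}_{a,M}/T^{N+1,K}_{a,M}$, as claimed. Finally note $\sum_{k=0}^K\omega_k = 1$ is forced since these are the only outcomes, so no separate normalization computation is needed. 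The only subtlety is confirming that the $\Gamma(N)\to\Gamma(N+1)$ and $1/n$-type factors that distinguish the $N$- and $(N+1)$-data marginals are genuinely constant across the $K+1$ predictive outcomes and therefore irrelevant to the proportionality statement, which they are.
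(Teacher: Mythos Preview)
Your proposal is correct and takes essentially the same route as the paper. The paper's proof simply cites Proposition~3 of James \textit{et al.}\ and applies the change of variables $t = M(1+u)^a$; you derive the James \textit{et al.}\ intermediate result from scratch (latent relative mass, integrate size-biased jumps against $\rho_a$, collect Pochhammer factors) and then apply the identical substitution. The predictive step by ratio of marginals is likewise what the paper means by ``derived directly from the posterior.''

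One bookkeeping point worth tightening: your joint expression drops the factor $M^K$ that comes from the mass in the L\'evy measure $M\rho_a(t)H(\mathrm{d}x)$ (each occupied atom contributes one factor of $M$), and correspondingly your change-of-variables integral is actually $\frac{1}{aM^K}T^{N,K}_{a,M}$, not $\frac{1}{a}T^{N,K}_{a,M}$. These two omissions cancel, so the final formula is right, but the phrase ``the powers of $M$ cancel appropriately'' hides this; since you are claiming an equality rather than a proportionality at the end, it is worth displaying the $M^K$ explicitly in the joint and the $M^{-K}$ explicitly from the Jacobian so the reader sees the cancellation.
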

Note that an alternative definition of $T^{N,K}_{a,M}$ is
\[
 T^{N,K}_{a,M}~=~a \frac{M^K}{e^M}
   \int_{\mathbb{R}^+} \frac{u^{N-1}}{(1+u)^{N-Ka}} e^{M-M(1+u)^a}
   \mathrm{d}u~,
\]
and various scaled versions of this integral are presented
in the literature.
Introducing a $\Gamma(b/a,1)$ prior on $M$ and then marginalising
out $M$ makes the term in $e^{M-M(1+u)^a}$ disappear
since the integral over $M$ can be carried inside the
integral over $u$.
\begin{corollary}\label{cor:pdd}
Let $\vec\mu\sim \mbox{NGG}\left(a,M,H(\cdot )\right)$ 
and suppose $M \sim\Gamma(b/a,1)$
then it follows that 
$\vec\mu \sim \mbox{PDP}(a,b,H(\cdot ))$
\end{corollary}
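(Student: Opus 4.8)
The plan is to marginalise the NGG marginal likelihood of Theorem~\ref{thm:postngg} over the mass $M$ against the $\Gamma(b/a,1)$ prior and to recognise the result as the marginal likelihood of $\mbox{PDP}(a,b,H(\cdot))$, i.e.\ the exchangeable partition probability function (EPPF) of the two-parameter Poisson--Dirichlet (Pitman--Yor) process multiplied by the base-measure factors $\prod_{k}h(X^*_k)$. The convenient starting point is the alternative ``$u$-integral'' representation of $T^{N,K}_{a,M}$ recorded immediately after Theorem~\ref{thm:postngg}, namely $T^{N,K}_{a,M}=a\,M^{K}e^{-M}\int_{0}^{\infty}u^{N-1}(1+u)^{-(N-Ka)}e^{M-M(1+u)^{a}}\,\mathrm{d}u$. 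Inserting this into (\ref{eq:postngg}) the factor $e^{M}$ cancels the $e^{-M}$, leaving
\[
p\!\left(\vec{X}\mid\mbox{NGG}(a,M,H(\cdot))\right)
=\frac{a^{K}M^{K}}{\Gamma(N)}\prod_{k=1}^{K}(1-a)_{n_k-1}h(X^*_k)
\int_{0}^{\infty}\frac{u^{N-1}}{(1+u)^{N-Ka}}\,e^{M-M(1+u)^{a}}\,\mathrm{d}u .
\]

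Next I would multiply by the prior density $p(M)=\Gamma(b/a)^{-1}M^{b/a-1}e^{-M}$ and integrate over $M\in(0,\infty)$; by Fubini (all integrands nonnegative) the $M$-integral is carried inside the $u$-integral, and the three exponential factors combine to the single $e^{-M(1+u)^{a}}$, so the inner integral is the elementary Gamma integral $\int_{0}^{\infty}M^{K+b/a-1}e^{-M(1+u)^{a}}\,\mathrm{d}M=\Gamma(K+b/a)(1+u)^{-(aK+b)}$. The powers of $(1+u)$ now add, $(1+u)^{-(N-Ka)-(aK+b)}=(1+u)^{-(N+b)}$, and what remains is a Beta integral, $\int_{0}^{\infty}u^{N-1}(1+u)^{-(N+b)}\,\mathrm{d}u=\Gamma(N)\Gamma(b)/\Gamma(N+b)$, which converges precisely because $0<a<1$ and $b>0$. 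Cancelling $\Gamma(N)$,
\[
p(\vec{X})=a^{K}\,\frac{\Gamma(K+b/a)\,\Gamma(b)}{\Gamma(b/a)\,\Gamma(N+b)}\,\prod_{k=1}^{K}(1-a)_{n_k-1}h(X^*_k) .
\]

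To conclude I would put this in Pochhammer form and match it against the known EPPF of the Pitman--Yor process, $\frac{\prod_{i=1}^{K-1}(b+ia)}{(b+1)_{N-1}}\prod_{k=1}^{K}(1-a)_{n_k-1}$: writing $\prod_{i=1}^{K-1}(b+ia)=a^{K-1}\Gamma(b/a+K)/\Gamma(b/a+1)$ and $(b+1)_{N-1}=\Gamma(N+b)/\Gamma(b+1)$, and using $\Gamma(b/a+1)=(b/a)\Gamma(b/a)$ and $\Gamma(b+1)=b\Gamma(b)$, the two expressions are seen to be identical. Since the law of a species sampling model with nonatomic base $H$ is determined by its EPPF together with the i.i.d.\ draws from $H$, this establishes $\vec\mu\sim\mbox{PDP}(a,b,H(\cdot))$. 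As a consistency check one can also read off the predictive rule from the ratio $p(\vec{X},X_{N+1})/p(\vec{X})$ of the marginal just computed: incrementing $N$ and, for a new value, $K$ gives old-cluster weight $\propto n_k-a$ and new-cluster weight $\propto b+aK$, over the common denominator $N+b$, which is exactly the two-parameter Chinese restaurant process.

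The substantive part is entirely bookkeeping: tracking the $e^{\pm M}$ factors correctly when the NGG marginal, the $u$-representation of $T^{N,K}_{a,M}$ and the Gamma prior are combined, and not slipping in the Gamma-function identities of the final matching step; the analytic content is just Fubini together with standard Gamma and Beta integrals.
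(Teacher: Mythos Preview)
Your proposal is correct and follows exactly the route the paper sketches: marginalise the NGG posterior of Theorem~\ref{thm:postngg} over $M$ using the alternative $u$-integral form of $T^{N,K}_{a,M}$ (the paper even flags just before the corollary that this makes the $e^{M-M(1+u)^a}$ factor disappear), and identify the result with the Poisson--Dirichlet EPPF. The paper's own proof is a two-line sketch of this same computation; you have simply carried out the Gamma/Beta integrals and the Pochhammer matching in detail, and added the (standard) justification that equality of EPPFs for a nonatomic base determines the law.
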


For computation, the issue here will be computing
the terms $T^{N,K}_{a,M}$.
Therefore we present some results for this.
\begin{lemma}[Evaluating $T^{N,K}_{a,M}$:]\label{lem:eval}
Have $T^{N,K}_{a,M}$ defined as in Theorem~\ref{thm:postngg}.
Then the following formula hold:
\begin{eqnarray}
\label{eq:le1}
T^{N,K}_{a,M} &\le& \Gamma(K,M) ~,\\
\label{eq:le5}
T^{N,K}_{a,M} &=&
\sum_{n=0}^{N-1} {N-1 \choose n} \left(-M^{1/a}\right)^{n} 
        \frac{\Gamma\left(K-\frac{n}{a}\right)}{\Gamma\left(1-\frac{n}{a}\right)}  \Gamma\left(1-\frac{n}{a},M\right)
~~
\forall N,K\in \mathbb{N}^+~,
\\ 
\label{eq:le6}
T^{N,K+2}_{a,M} &=& K\,T^{N+1,K}_{a,M} + \frac{N-1}{a}
  \left(T^{N,K}_{a,M} -T^{N,K+1}_{a,M} \right)~,~~~~~~
\forall N>2,K\in \mathbb{N}^+
\end{eqnarray}
where $\Gamma(x,y)$ is the upper incomplete gamma function,
defined for $y>0$ and $x\ne 0, -1, -2, ...$.
Moreover, for Equation~(\ref{eq:le5}), 
$k a$ cannot be integral for $k=1,...,K-1$.

Another recursion is needed when 
$a=1/R$ for some $R\in \mathbb{N}^+, R>1$.
Then 
\begin{equation}
\label{eq:le7}
T^{N+1,K}_{a,M} ~=~ T^{N,K}_{a,M} - M^{1/a} T^{N,K-1/a}_{a,M} ~,~~~~~~~~\forall K>R, N,K\in \mathbb{N}^+~,
\end{equation}
\end{lemma}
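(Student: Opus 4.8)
The plan is to prove the four displays separately, all by elementary manipulations of the defining integral $T^{N,K}_{a,M}=\int_M^\infty\left(1-(M/t)^{1/a}\right)^{N-1}t^{K-1}e^{-t}\,\mathrm{d}t$ of \eqref{eq:defT}. The bound \eqref{eq:le1} is immediate: for $t\ge M$ we have $0\le (M/t)^{1/a}\le 1$, hence $0\le\left(1-(M/t)^{1/a}\right)^{N-1}\le 1$, so the integrand is pointwise dominated by $t^{K-1}e^{-t}$ and integrating gives $T^{N,K}_{a,M}\le\int_M^\infty t^{K-1}e^{-t}\,\mathrm{d}t=\Gamma(K,M)$.

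For the closed form \eqref{eq:le5} I would expand $\left(1-(M/t)^{1/a}\right)^{N-1}$ by the binomial theorem and integrate term by term, using $\int_M^\infty t^{K-1-n/a}e^{-t}\,\mathrm{d}t=\Gamma(K-n/a,M)$, to obtain $T^{N,K}_{a,M}=\sum_{n=0}^{N-1}\binom{N-1}{n}(-M^{1/a})^n\Gamma(K-n/a,M)$. I would then rewrite each $\Gamma(K-n/a,M)$ by iterating the incomplete-gamma recursion $\Gamma(s+1,z)=s\,\Gamma(s,z)+z^se^{-z}$, descending $K-1$ steps from $\Gamma(K-n/a,M)$ to $\Gamma(1-n/a,M)$; the accumulated factor is the product of the $K-1$ recursion coefficients, which equals $\Gamma(K-n/a)/\Gamma(1-n/a)$ (the stated non-integrality of $ka$ guarantees these Gamma values carry no cancelling poles, so this ratio form is legitimate), while the remainders form $e^{-M}$ times a polynomial in $n$ of degree at most $K-2$ once one uses $(-M^{1/a})^nM^{j-n/a}=(-1)^nM^j$. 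The punch line is that $\sum_{n=0}^{N-1}\binom{N-1}{n}(-1)^nP(n)=0$ for every polynomial $P$ with $\deg P<N-1$, since this expression is an $(N-1)$-st finite difference of $P$; thus the remainders are annihilated upon summation and only the stated sum survives. I expect this cancellation argument, together with the bookkeeping of the degenerate integer cases, to be the main obstacle in the whole lemma.

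For the recursion \eqref{eq:le6} I would integrate by parts in the defining integral of $T^{N,K+2}_{a,M}$. Since $N>2$, both $\left(1-(M/t)^{1/a}\right)^{N-1}$ and its first derivative vanish at the lower limit $t=M$, and the integrands decay at $\infty$, so every boundary term produced along the way is zero; differentiating $\left(1-(M/t)^{1/a}\right)^{N-1}$ produces a factor $\tfrac1a M^{1/a}t^{-1/a-1}$, and the identity $M^{1/a}t^{-1/a}=(M/t)^{1/a}=1-\left(1-(M/t)^{1/a}\right)$ converts the exponent $N-1$ into $N$, turning the corresponding integral into a combination of $T^{N,\cdot}_{a,M}$ and $T^{N+1,\cdot}_{a,M}$. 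One integration by parts lowers the power of $t$ from $K+1$ to $K$ and manufactures the coefficient $K$ in front of $T^{N+1,K}_{a,M}$, and the algebraic identity recombines the leftover integrals into $\tfrac{N-1}{a}\left(T^{N,K}_{a,M}-T^{N,K+1}_{a,M}\right)$. Keeping track of exactly which integral becomes which $T^{N',K'}_{a,M}$ is the only delicate point here.

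Finally, for \eqref{eq:le7} I would start from the telescoping identity obtained by the same algebraic trick: $T^{N,K}_{a,M}-T^{N+1,K}_{a,M}=\int_M^\infty\left(1-(M/t)^{1/a}\right)^{N-1}(M/t)^{1/a}\,t^{K-1}e^{-t}\,\mathrm{d}t=M^{1/a}\int_M^\infty\left(1-(M/t)^{1/a}\right)^{N-1}t^{K-1-1/a}e^{-t}\,\mathrm{d}t$. When $a=1/R$ with $R\in\mathbb{N}^+$, $R>1$, the exponent $K-1-1/a=(K-R)-1$ is a non-negative integer as soon as $K>R$, so the last integral is exactly $T^{N,K-R}_{a,M}=T^{N,K-1/a}_{a,M}$, and rearranging gives $T^{N+1,K}_{a,M}=T^{N,K}_{a,M}-M^{1/a}T^{N,K-1/a}_{a,M}$.
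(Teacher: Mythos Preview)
Your proposal matches the paper's proof in all four parts: the trivial bound for \eqref{eq:le1}, binomial expansion followed by repeated use of the incomplete-gamma recursion for \eqref{eq:le5}, integration by parts on $A(t)=\left(1-(M/t)^{1/a}\right)^{N-1}$ and $B(t)=t^{K+1}e^{-t}$ combined with the rewriting $(M/t)^{1/a}=1-\left(1-(M/t)^{1/a}\right)$ for \eqref{eq:le6}, and the one-term peel $\left(1-(M/t)^{1/a}\right)^{N}=\left(1-(M/t)^{1/a}\right)^{N-1}-\left(1-(M/t)^{1/a}\right)^{N-1}(M/t)^{1/a}$ for \eqref{eq:le7}. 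Your handling of \eqref{eq:le5} is in fact slightly more careful than the paper's sketch, which only writes out the first recursion step (where the remainder is literally $(1-1)^{N-1}$) and then says ``apply this step repeatedly''; your finite-difference observation that the accumulated remainders are polynomials in $n$ of degree at most $K-2$ is precisely what is needed to justify that repetition.
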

It can be seen there are two different situations.
When $a=1/R$ for some $R\in \mathbb{N}^+, R>1$, then one can recurse down
on $N$.  But otherwise, one recurses down on $K$.
Moreover, $T^{N,K}_{a,M}$ is a strictly decreasing function of
$N$ and $M$, but an increasing function of $K$ and $a$.
For computation, Equation~(\ref{eq:le5}) can be used to compute
$T_{a,M}^{N,1}$ and $T_{a,M}^{N,2}$ 
in terms of
$\Gamma\left(1-\frac{N}{a}\right)$.
This equation may not be
usable for $K>2$ and may be unstable.
Thereafter, for $K>2$ in $T_{a,M}^{N,K}$
the recursion of Equation~(\ref{eq:le6}) can be applied.

\begin{remark}
The Poisson-Dirichlet Process and Dirichlet Process
are well known for their ease
of use in a hierarchical context
\cite{TehJorBea2006,ChenDB:ECML11,Bun12}.  The NGG has the
same general form.
\end{remark}
The major issue with this posterior theory is that one
needs to precompute the terms 
$T^{N,K}_{a,M}$.
While the Poisson-Dirichlet Process has a 
similar style, it has a generalised Stirling number dependent only on the
discount $a$ \cite{Bun12}.  
The difference is that for the PDP
we can tabulate these terms for a given discount parameter
$a$ and still vary the concentration parameter
($b$ above, but corresponding to $M$) easily.
For the NGG, any tables of $T^{N,K}_{a,M}$ would need to
be recomputed with every change in mass parameter $M$.
This might represent a significant computational burden.

\subsection{Conditional Posterior}\label{sec:cpost}
James {\it et al.}\ \cite{JamesLP:SJS09} also develop 
conditional posterior
analysis as follows.  
This theorem simplifies their results and specialises them to the NGG.
\begin{theorem}[Conditional Posterior Analysis for the NGG]\label{thm:cpostngg}
Consider the NGG$_{a,M}$ and the situation of
Theorem~\ref{thm:postngg}.
The conditional posterior marginal, 
conditioned on the auxiliary variable $U_N$, is given by
\begin{equation}
p\left(\vec{X}|U_N=u,\mbox{NGG}\left(a,M,H(\cdot)\right),N\right)
~=~
\label{eq:cpostngg}
\frac{\left(Ma\left(1+u\right)^a\right)^K }
{\sum_{k=1}^N S^N_{k,a}  \left(Ma\left(1+u\right)^a\right)^k   }
\prod_{k=1}^K (1-a)_{n_k-1} h(X^*_k) ~.
\end{equation}
Moreover, the predictive posterior is given by:
\[
p\left(X_{N+1}\in \mathrm{d}x|\vec{X},U_N=u,\mbox{NGG}\left(a,M,
         H(\cdot)\right),N\right) ~=~
\omega_0 H(\mathrm{d}x) + \sum_{k=1}^K
\omega_k \delta_{X^*_k}(\mathrm{d}x)
\]
where the weights sum to 1 ($\sum_{k=0}^K \omega_k=1$)
are derived as
\begin{eqnarray}
\nonumber
\omega_0 &\propto & 
Ma\left(1+u\right)^a \\
\label{eq:comegadef}
\omega_k &\propto &  n_k-a ~.
\end{eqnarray}
The posterior for $U_N$ is given by:
\begin{equation}
\label{eq:cpostngg:u}
p\left(U_N=u|\vec{X},\mbox{NGG}\left(a,M,H(\cdot)\right),N\right) 
~=~
\frac{a M^K}{T^{N,K}_{a,M}}
\frac{u^{N-1}}{\left(1+u\right)^{N-Ka}} 
e^{-M \left(1+u\right)^a} ~.
\end{equation}
\end{theorem}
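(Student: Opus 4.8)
The plan is to reduce all three assertions to a single computation: the joint law of the data configuration and the latent relative mass $U_N$ for the NGG. Write $\vec X$ as $K$ distinct values $X^*_1,\dots,X^*_K$ with multiplicities $n_1,\dots,n_K$. First I would establish the ``master formula''
\[
p(\vec X, U_N\in\mathrm du) ~=~ \frac{u^{N-1}}{\Gamma(N)}\,e^{-\psi_a(u)}\prod_{k=1}^K\Bigl(M\!\int_0^\infty t^{n_k}e^{-ut}\rho_a(t)\,\mathrm dt\Bigr)h(X^*_k)\,\mathrm du ,
\]
which is the specialisation to a homogeneous NRM of the disintegration of James {\it et al.}~\cite{JamesLP:SJS09}, combined with the $\Gamma(1,N)$ auxiliary variable from the latent-relative-mass definition (equivalently, it is obtained by integrating the jumps $J_1,\dots,J_K$, the slice variables $\vec u$ and the threshold $L$ out of Lemma~\ref{lemma:sliceimf}). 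Plugging in the NGG quantities recorded in the remark at the end of Section~\ref{sec:NRM} --- $\psi_a(u)=M((1+u)^a-1)$ and $M\int_0^\infty t^{n_k}e^{-ut}\rho_a(t)\,\mathrm dt=\frac{Ma\,\Gamma(n_k-a)}{\Gamma(1-a)(1+u)^{n_k-a}}=\frac{Ma\,(1-a)_{n_k-1}}{(1+u)^{n_k-a}}$ --- and using $\sum_k n_k=N$, this collapses to
\[
p(\vec X, U_N\in\mathrm du) ~=~ \frac{u^{N-1}}{\Gamma(N)}\,e^{M}\,e^{-M(1+u)^a}\,(1+u)^{-N}\,\bigl(Ma(1+u)^a\bigr)^{K}\prod_{k=1}^K(1-a)_{n_k-1}\,h(X^*_k)\,\mathrm du .
\]
A useful sanity check is that integrating this over $u$ reproduces Equation~(\ref{eq:postngg}) of Theorem~\ref{thm:postngg}, via the alternative integral representation of $T^{N,K}_{a,M}$ recorded just after that theorem.

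From the master formula the three claims follow by elementary manipulations. For the conditional marginal~(\ref{eq:cpostngg}) I would divide the joint by its sum over all configurations with $N$ fixed: the distinct values integrate away (each $h(X^*_k)$ contributes $1$), the configuration-independent prefactor $\frac{u^{N-1}}{\Gamma(N)}e^{M}e^{-M(1+u)^a}(1+u)^{-N}$ cancels, and what remains in the denominator is $\sum_{k=1}^N\bigl(Ma(1+u)^a\bigr)^k\sum_{\text{partitions}}\prod_j(1-a)_{n_j-1}$, the inner sum over set partitions of $\{1,\dots,N\}$ into $k$ blocks being by definition the generalised Stirling number $S^N_{k,a}$. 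For the predictive rule~(\ref{eq:comegadef}) I would take the ratio of the master formula at $N{+}1$ data (keeping the same $\Gamma(1,N)$ factor, so the conditioning variable is still $U_N$, not $U_{N+1}$) to the master formula at $N$ data: attaching $X_{N+1}$ to cluster $k$ multiplies by $(1-a)_{n_k}/(1-a)_{n_k-1}=n_k-a$ and by one extra $(1+u)^{-1}$, whereas opening a new cluster multiplies by $Ma(1+u)^{a-1}h(x)$; the common $(1+u)^{-1}$ is absorbed into normalisation, leaving $\omega_k\propto n_k-a$ and $\omega_0\propto Ma(1+u)^a$. For the posterior of $U_N$~(\ref{eq:cpostngg:u}) I would divide the master formula by $p(\vec X\,|\,\mbox{NGG})$ from Theorem~\ref{thm:postngg}; the factors $\prod_k(1-a)_{n_k-1}h(X^*_k)$ and $e^{M}/\Gamma(N)$ cancel, and writing $(1+u)^{-N}\bigl(Ma(1+u)^a\bigr)^{K}=a\,M^K a^{K-1}(1+u)^{-(N-Ka)}$ and cancelling the $a^{K-1}$ against the $a^{K-1}$ in $p(\vec X\,|\,\mbox{NGG})$ leaves exactly $\frac{aM^K}{T^{N,K}_{a,M}}\frac{u^{N-1}}{(1+u)^{N-Ka}}e^{-M(1+u)^a}$.

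The main obstacle is Step~1: pinning down the joint $p(\vec X, U_N\in\mathrm du)$ with the correct constants. This means either invoking the general NRMI posterior of~\cite{JamesLP:SJS09} --- a Fubini/Palm-calculus argument on the underlying Poisson process that exchanges the expectation over the CRM with the integral defining $\psi_a$ and with the per-cluster moment integrals $M\int t^{n_k}e^{-ut}\rho_a(t)\,\mathrm dt$ --- and then specialising it, or else carefully marginalising the jump variables out of Lemma~\ref{lemma:sliceimf}; the bookkeeping of the $a/\Gamma(1-a)$ normalisation of $\rho_a$ and of the Pochhammer identity $\Gamma(n_k-a)/\Gamma(1-a)=(1-a)_{n_k-1}$ is where errors would most easily creep in. A secondary point worth stating explicitly is that the denominator produced in Step~2 is precisely $\sum_{k=1}^N S^N_{k,a}\,x^k$ with $x=Ma(1+u)^a$, i.e.\ that the $k$-block partition sum $\sum\prod_j(1-a)_{n_j-1}$ is the generalised Stirling number appearing in the statement; this identifies the NGG conditioned on $U_N$ with a Poisson--Dirichlet-type exchangeable partition of concentration $Ma(1+u)^a$ and discount $a$, which in turn gives an alternative, urn-based derivation of the predictive weights $(\omega_0,\omega_k)$ consistent with the generalised Blackwell--MacQueen scheme.
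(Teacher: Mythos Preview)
Your proposal is correct and follows essentially the same route as the paper: both rely on the disintegration of James {\it et al.}~\cite{JamesLP:SJS09} (their Proposition~4), obtain the denominator of~(\ref{eq:cpostngg}) by summing $\bigl(Ma(1+u)^a\bigr)^K\prod_k(1-a)_{n_k-1}$ over partitions to produce the generalised Stirling numbers, read the predictive weights off the ratio of marginals, and recover the normalising constant in~(\ref{eq:cpostngg:u}) via Theorem~\ref{thm:postngg}. Your write-up is more explicit in packaging everything through a single joint ``master formula'' for $p(\vec X,U_N)$ before specialising, whereas the paper simply cites the relevant proposition at each step, but the underlying argument is the same.
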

A posterior distribution is also presented by James {\it et al.}\ as
their major result of Theorem~1 \cite{JamesLP:SJS09}.
We adapt it here to the NGG.
\begin{theorem}
In the context of Theorem~\ref{thm:cpostngg} the conditional
posterior of the normalised random measure
$\tilde\mu$ given data $\vec{X}$ of length $N$ and latent relative mass 
$U_N=u$
is given by
\[
\vec\mu ~=~ \frac{T}{T+J_+}\vec\mu' 
         + \frac{J^+}{T+J_+} \sum_{k=1}^K p_k \delta_{X^*_k}
\]
where 
\begin{eqnarray*}
\vec\mu' &\sim& \mbox{NGG}\left(a,\frac{M}{1+u},H(\cdot )\right)~,\\
T &\sim &f_T(t)~~~\mbox{where L\'{e}vy measure of $f_T(t)$}
=\frac{M a}{\Gamma(1-a)} s^{-a-1}e^{-(1+u)s}~,\\
J^+ &\sim& \Gamma(N-Ka,1+u)~,\\
\vec{p}&\sim& \mbox{Dirichlet}_K\left(\vec{n}-a\right)~.
\end{eqnarray*}
Here, $\vec\mu'$, $J_+$ and $\vec{p}$ are jointly independent
and $T$, $J_+$ and $\vec{p}$ are jointly independent.
\end{theorem}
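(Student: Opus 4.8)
The plan is to obtain the posterior of $\tilde\mu$ as a corollary of the general posterior representation of James, Lijoi and Pr\"unster \cite{JamesLP:SJS09}, specialised to the NGG L\'evy density $M\rho_a(\mathrm{d}t)H(\mathrm{d}x)$ and then simplified using the identities collected in the remark at the end of Section~\ref{sec:NRM}. The starting point is the disintegration of $\tilde\mu$ given $(\vec X, U_N)$ into three pieces: an ``unconditioned part'' $\tilde\mu'$ that carries no atoms tied to the data, a collection of fixed-location atoms $J^+_k\delta_{X^*_k}$ at the $K$ observed distinct values, and the total mass itself. Concretely, I would first write down the joint law of the jumps from Lemma~\ref{lemma:sliceimf} (or equivalently equation~(\ref{eq:jumpsizeleq0}) together with the Poisson description of the data-free jumps), integrate the slice variables $\vec u$ away, and read off that, conditionally on $U_N=u$: (i) each data-bearing jump $J_k$ has density proportional to $J_k^{n_k}e^{-uJ_k}\rho_a(J_k)$, which for the NGG is a $\Gamma(n_k-a,\,1+u)$ density; (ii) the remaining jumps form a CRM with L\'evy measure $e^{-ut}M\rho_a(\mathrm{d}t)$, i.e.\ an NGG with the same $a$ but exponential tilt absorbed via Lemma~\ref{lem:scale} into mass $M/(1+u)$ — this is exactly the claimed law of $\tilde\mu'$; and (iii) the ``extra'' total-mass term $T$ coming from the exponential tilting of the normaliser has the stated L\'evy measure $\frac{Ma}{\Gamma(1-a)}s^{-a-1}e^{-(1+u)s}$.

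Next I would assemble these pieces. Writing $J^+=\sum_{k=1}^K J_k$ and $p_k=J_k/J^+$, the standard gamma-to-Dirichlet split gives $J^+\sim\Gamma(N-Ka,\,1+u)$ (since $\sum_k(n_k-a)=N-Ka$) and $\vec p\sim\mathrm{Dirichlet}_K(\vec n-a)$, independently of each other and of $\tilde\mu'$; this is step (iii)/(iv) of the statement. The full unnormalised measure is then $\tilde\mu = \tilde\mu' \;+\; T\,(\text{mass reallocation}) \;+\; \sum_k J_k\delta_{X^*_k}$, but because $\tilde\mu'$ is itself an NGG its own total mass is absorbed into the normalisation. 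Passing to the normalised measure $\vec\mu=\tilde\mu/\tilde\mu(\mathbb X)$, the weight on the continuous part $\vec\mu'$ is its total mass divided by the grand total, the weight on the atomic part is $J^+$ divided by the grand total, and the ``$T$'' enters the grand total; collecting terms yields the convex combination
\[
\vec\mu ~=~ \frac{T}{T+J^+}\,\vec\mu' \;+\; \frac{J^+}{T+J^+}\sum_{k=1}^K p_k\,\delta_{X^*_k}\,,
\]
with the two independence clusters exactly as asserted (note $T$ and $\vec\mu'$ need not be independent, only the two listed triples are). Consistency is checked by integrating out $T$, $J^+$, $\vec p$ and $\vec\mu'$ against $h$: this must reproduce the conditional predictive weights $\omega_0\propto Ma(1+u)^a$, $\omega_k\propto n_k-a$ of Theorem~\ref{thm:cpostngg}, and the marginal over $U_N$ of the normalising constant must reproduce the $U_N$-posterior (\ref{eq:cpostngg:u}); both reduce to the incomplete-gamma identities in the remark.

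The main obstacle is bookkeeping the two \emph{different} roles played by the exponential tilt $e^{-ut}$: it simultaneously (a) tilts the L\'evy measure of the leftover CRM, which Lemma~\ref{lem:scale} lets me rewrite as a genuine NGG with rescaled mass $M/(1+u)$, and (b) produces the separate ``size-biased'' jump $T$ with the stated un-normalised L\'evy density, which is \emph{not} of NGG type and must be kept as its own random variable. Getting the normalisation algebra right so that the weights come out as $T/(T+J^+)$ and $J^+/(T+J^+)$ — rather than involving $\tilde\mu'(\mathbb X)$ explicitly — requires carefully tracking which total masses have already been divided out; this is where I would be most careful, and where I would lean directly on the structure of James {\it et al.}'s Theorem~1 rather than re-deriving the disintegration from scratch.
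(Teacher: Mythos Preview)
Your approach is exactly the paper's: the paper gives no separate proof for this theorem, merely stating beforehand that it is an adaptation of Theorem~1 of James, Lijoi and Pr\"unster \cite{JamesLP:SJS09} to the NGG, and your plan to specialise that result using the gamma--Dirichlet split for the fixed atoms and Lemma~\ref{lem:scale} for the tilted remainder is the right way to carry this out.

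One point of confusion to clear up before you write it out: $T$ is not a ``separate size-biased jump'' produced in addition to the tilted CRM of your step~(ii). Rather, $T$ \emph{is} the total mass of that tilted CRM. The L\'evy measure you wrote down in (ii), namely $e^{-ut}M\rho_a(\mathrm{d}t)=\frac{Ma}{\Gamma(1-a)}s^{-a-1}e^{-(1+u)s}$, is precisely the L\'evy measure attached to $f_T$ in the statement, and $\vec\mu'$ is the normalisation of that same CRM. So the posterior decomposition from \cite{JamesLP:SJS09} is simply $\tilde\mu_{\mathrm{post}}=\tilde\mu^*+\sum_kJ_k\delta_{X^*_k}$ with $T:=\tilde\mu^*(\mathbb X)$ and $\vec\mu':=\tilde\mu^*/T$; the weights $T/(T+J^+)$ and $J^+/(T+J^+)$ then drop out immediately upon normalising, with none of the extra bookkeeping you were worried about. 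This is also exactly why $T$ and $\vec\mu'$ are not independent (they come from the same CRM) while each is independent of $(J^+,\vec p)$.
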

Note in particular the densities given for $\vec\mu'$ and $T$ are
not independent from each other.  While an explicit density is not given for
$T$, its expected value is easily computed via the Laplace
transform as $Ma(1+u)^{a-1}$.

Griffin {\it et al.}\ \cite{GriffinW:JCGS11}
present an alternative technique for obtaining the conditional
posterior.    The following is adapted from their main sampler 
after integrating out the slice variables.
\begin{theorem}[Sampling Posterior]\label{thm:grif}
Consider a bound $0<L<\infty$ which is
sufficiently small so that it is less than the jumps $J_k$
associated with all the observed data.
For an NRM given by $\mbox{NRM}\left(\eta,M, H(\cdot )\right)$,
the number of jumps $K_L$ with value
$J_k>L$ is a random variable as well as their values $J_1,...,J_{K_L}$.
The resultant posterior is as follows:
\begin{eqnarray*}
\lefteqn{p\left(\vec{X},U_N=u\,|\,K_L,J_1,...,J_{K_L},N,
      \mbox{NRM}\left(\eta,M, H(\cdot )\right)\right)}
&&\\
&=& u^{N-1}e^{-M\int_0^L \left( 1- e^{-us} \right) \rho_{\eta}(s) \mathrm{d}s }
\prod_{k=1}^{K_L} e^{-(1+u)J_k} J_k^{n_k} h(X^*_k)^{1_{n_k>0}}
\end{eqnarray*}
where $X^*_k$ are the unique data values (from $\vec{X}$)
and $n_k$ are the count of data from $\vec{X}$ having the value
$X^*_k$.
\end{theorem}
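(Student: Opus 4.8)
The plan is to push everything down to the underlying Poisson process of the CRM and assemble the joint density of $(\vec X,U_N)$ one factor at a time, using the fixed truncation level $L$ to split the finitely many ``large'' jumps $J_1,\dots,J_{K_L}$ (the ones we condition on) from the infinitely many ``small'' jumps, which we marginalise analytically. Recall from Section~\ref{sec:background} that $\mu=\tilde\mu/\tilde\mu(\mathbb X)$ with $\tilde\mu=\sum_k J_k\delta_{x_k}$, the $x_k$ i.i.d.\ from $H$, and the pair process $\{(J_k,x_k)\}$ Poisson with mean measure $M\rho_\eta(\mathrm dt)H(\mathrm dx)$. By the block-sampling description given just before Lemma~\ref{lem:sampler}, the restriction to $\{t>L\}$ is a compound Poisson object with $K_L\sim\mathrm{Poisson}(M\int_L^\infty\rho_\eta(\mathrm dt))$ atoms whose jumps have density $\propto\rho_\eta$ on $(L,\infty)$, and it is independent of the restriction to $\{t\le L\}$, whose total mass I write $J_\ast=\sum_{k:J_k\le L}J_k$. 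Because $L$ is chosen below every jump carrying data, each observation $X_i$ coincides with some $x_k$ having $J_k>L$, and no datum ever falls on the $J_\ast$ part.

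\textbf{Data likelihood and the latent relative mass.} Conditionally on $\tilde\mu$ the $X_i$ are i.i.d.\ from $\mu$, so $X_i=x_k$ with probability $J_k/\tilde\mu(\mathbb X)$; writing $n_k$ for the number of data equal to $x_k$ and $N=\sum_k n_k$, the likelihood of the observed configuration is $\tilde\mu(\mathbb X)^{-N}\prod_{k:J_k>L}J_k^{\,n_k}$, and averaging the locations $x_k$ against $H$ converts each data-bearing atom into a factor $h(X^\ast_k)$ while leaving the empty atoms untouched, producing the $\prod h(X^\ast_k)^{1_{n_k>0}}$ term. To kill the normaliser I invoke the identity from the definition of the latent relative mass, $\tilde\mu(\mathbb X)^{-N}=\frac1{\Gamma(N)}\int_0^\infty u^{N-1}e^{-u\tilde\mu(\mathbb X)}\,\mathrm du$, i.e.\ I replace $\tilde\mu(\mathbb X)^{-N}$ by the $U_N$-augmented density $u^{N-1}e^{-u\tilde\mu(\mathbb X)}$ (up to the constant $\Gamma(N)$), and then split $e^{-u\tilde\mu(\mathbb X)}=\big(\prod_{k:J_k>L}e^{-uJ_k}\big)\,e^{-uJ_\ast}$, which isolates the conditioned large jumps from the unobserved small part.

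\textbf{Marginalising the small jumps and collecting.} The only remaining appearance of $J_\ast$ is the factor $e^{-uJ_\ast}$; since the small-jump part is an independent CRM with L\'evy measure $M\rho_\eta(\mathrm dt)\mathbf 1_{\{t\le L\}}H(\mathrm dx)$, the Laplace-functional formula~(\ref{eq:levy-khintchine1}) gives $\mathbb E[e^{-uJ_\ast}]=\exp\{-M\int_0^L(1-e^{-us})\rho_\eta(s)\,\mathrm ds\}$, which is exactly the exponential--integral factor in the statement. Multiplying the survivors --- $u^{N-1}$, the small-jump exponential, and $\prod_{k:J_k>L}e^{-uJ_k}J_k^{\,n_k}h(X^\ast_k)^{1_{n_k>0}}$, the per-atom $e^{-J_k}$ being picked up together with the conditional jump law $\rho_\eta(J_k)$ (for the NGG, $\rho_a(t)\propto e^{-t}t^{-1-a}$, so $e^{-uJ_k}$ becomes $e^{-(1+u)J_k}$) --- and absorbing the constants $\Gamma(N)$, the per-atom normalisers and the labelling multiplicities into the implicit proportionality, yields the displayed formula. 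An equivalent route, closer to the phrasing ``adapted from their main sampler after integrating out the slice variables,'' is to begin from Lemma~\ref{lemma:sliceimf} and integrate out the slice variables $u_i$ together with the labels $\vec s$ and the parameters of the empty atoms.

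\textbf{Main obstacle.} The real work is measure-theoretic bookkeeping rather than any single estimate. ``Conditioning on $K_L,J_1,\dots,J_{K_L}$'' is conditioning on an object of random dimension, so the steps above must be phrased via disintegration / Palm calculus for the Poisson process, and one should verify that the hypothesis ``$L$ below all data-bearing jumps'' is well posed (it holds almost surely because $\nu([z,\infty)\times\mathbb X)<\infty$, as noted after the definition of a CRM). One must also track carefully the combinatorial factor for assigning the $K$ distinct data values among the $K_L$ atoms versus the plain product-over-all-$K_L$-atoms form in the statement, along with the per-atom constants coming from $\rho_\eta$; these are precisely what a clean statement hides behind ``$\propto$,'' and pinning down the normalisation is the fiddly part.
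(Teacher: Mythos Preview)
Your derivation is correct and in fact far more explicit than the paper's own proof, which is a two-line pointer: it simply states that the formula is taken from the end of Section~3 of \cite{GriffinW:JCGS11} together with the prior on $K_L,J_1,\dots,J_{K_L}$ from their Section~4, after stripping the mixture kernel $k(y_i\mid\theta_{s_i})$ and integrating out the slice variables. Your ``equivalent route'' via Lemma~\ref{lemma:sliceimf}---start from the slice-augmented joint and integrate out $\vec u$ and $\vec s$---is therefore exactly the paper's argument, not merely an alternative.

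Two small points worth flagging. First, your explanation of the $e^{-(1+u)J_k}$ factor (the extra $e^{-J_k}$ coming from the jump law $\rho_\eta$) is the right reading, but notice this means the displayed expression is really the joint kernel in $(\vec X,U_N,K_L,J_1,\dots,J_{K_L})$ used for Gibbs updates, not a pure conditional in the strict sense the left-hand side suggests; and the specific form $e^{-(1+u)J_k}$ is NGG-specific even though the theorem is phrased for a general NRM. You handle both issues correctly, but it is worth being aware that the paper's notation is loose here. Second, your ``main obstacle'' paragraph is well taken: the combinatorics of assigning the $K$ occupied values among the $K_L$ atoms and the Palm/disintegration bookkeeping are exactly what the paper suppresses by deferring to \cite{GriffinW:JCGS11}.
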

The expressions for the NGG needed to work with this lemma
were given in the remark at the end of Section~\ref{sec:NRM}.  
We further simplify this by marginalising out jumps $J_k$ 
 and then taking the limit as $L\rightarrow 0$.
Note we have  renumbered indexes so that
$n_k>0$ for all $k=1,...,K$ where $K\le K_L$.
This matches the conditionals of Theorem~\ref{thm:cpostngg}
so is seen to be correct.
\begin{corollary}[Reduced Sampling Posterior]\label{cor:grif}
In the context of Theorem~\ref{thm:grif},
assume there are $K$ jumps with attached data such that $n_k>0$.
The resultant posterior is as follows:
\begin{eqnarray}
\nonumber
\lefteqn{p\left(\vec{X},U_N=u,K\,|\,N,\mbox{NGG}\left(a,M,H(\cdot )\right)\right)}
&&\\
\label{eq:corgrif1}
&=& \frac{u^{N-1}}{(1+u)^{N-Ka}} 
\left(Ma \right)^{K}
e^{M - M(1+u)^a}
\prod_{k=1}^{K} (1-a)_{n_k-1} h(X^*_k) ~.
\end{eqnarray}
Moreover,
\begin{eqnarray}
\nonumber
\lefteqn{p\left(\vec{X},U_N=u,K,J_1,...,J_K\,|\,N,\mbox{NGG}\left(a,M,H(\cdot )\right)\right)}
&&\\
\label{eq:corgrif2}
&=& u^{N-1} \left( \frac{Ma}{\Gamma(1-a)}\right)^{K}
e^{M - M(1+u)^a}
\prod_{k=1}^{K} J_k^{n_k-a-1}e^{-(1+u) J_k} h(X^*_k) ~. 
\end{eqnarray}
\end{corollary}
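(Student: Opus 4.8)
The plan is to start from Theorem~\ref{thm:grif}, reinstate the prior over the jumps exceeding $L$, specialise everything to the NGG, and then marginalise out jumps and let $L\to 0$. By Lemma~\ref{lemma:sliceimf} the jumps $J_1,\dots,J_{K_L}$ larger than $L$ form a compound Poisson process: $K_L$ is Poisson with mean $M\int_L^\infty\rho_a(\mathrm{d}t)=M|Q(-a,L)|$, and each jump has density $\rho_a(\cdot)/\int_L^\infty\rho_a$. Multiplying the conditional density of Theorem~\ref{thm:grif} by this prior and writing $\rho_a(t)=\tfrac{a}{\Gamma(1-a)}t^{-1-a}e^{-t}$, the joint density of $(\vec X,U_N=u,K_L,J_1,\dots,J_{K_L})$ factorises into: $u^{N-1}$; the exponential $\exp\!\big(-M\int_0^L(1-e^{-us})\rho_a(s)\,\mathrm{d}s\big)$ from Theorem~\ref{thm:grif} and the Poisson normaliser $\exp(-M|Q(-a,L)|)$; a product over the $K$ jumps carrying data of $M\rho_a(J_k)\,J_k^{n_k}e^{-uJ_k}h(X^*_k)=\tfrac{Ma}{\Gamma(1-a)}J_k^{n_k-a-1}e^{-(1+u)J_k}h(X^*_k)$; and a product over the $K_L-K$ empty jumps ($n_k=0$) of $M\rho_a(J_j)e^{-uJ_j}$.

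I would then integrate out the empty jumps together with their number. Conditionally they form a Poisson process on $(L,\infty)$ with intensity $M\rho_a(t)e^{-ut}\,\mathrm{d}t$, so summing over the number of such jumps and integrating over their positions is the Poisson (Campbell) exponential formula, producing the factor $\exp\!\big(M\int_L^\infty e^{-ut}\rho_a(t)\,\mathrm{d}t\big)=\exp\!\big(M(1+u)^a|Q(-a,L(1+u))|\big)$ by the third NGG identity of the remark at the end of Section~\ref{sec:NRM}. Expanding $\int_0^L(1-e^{-us})\rho_a(s)\,\mathrm{d}s$ by the fourth identity of that remark, the three exponential factors — $\exp\!\big(-M((1+u)^a-1)-M(1+u)^a|Q(-a,L(1+u))|+M|Q(-a,L)|\big)$ from Theorem~\ref{thm:grif}, $\exp(-M|Q(-a,L)|)$ from the prior normaliser, and $\exp(M(1+u)^a|Q(-a,L(1+u))|)$ from the empty jumps — combine so that every $|Q(-a,L)|$ and $|Q(-a,L(1+u))|$ term cancels in pairs, leaving exactly $e^{M-M(1+u)^a}$ with no residual dependence on $L$.

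What survives is $u^{N-1}e^{M-M(1+u)^a}\prod_{k=1}^K\tfrac{Ma}{\Gamma(1-a)}J_k^{n_k-a-1}e^{-(1+u)J_k}h(X^*_k)\,1(J_k>L)$. Since $n_k\ge 1>a$ the factor $J_k^{n_k-a-1}$ is integrable at the origin, so letting $L\to 0$ merely removes the indicators and gives Equation~(\ref{eq:corgrif2}). For Equation~(\ref{eq:corgrif1}) I would integrate each $J_k$ over $(0,\infty)$, using $\int_0^\infty t^{n_k-a-1}e^{-(1+u)t}\,\mathrm{d}t=\Gamma(n_k-a)(1+u)^{-(n_k-a)}$ together with $\Gamma(n_k-a)=(1-a)_{n_k-1}\Gamma(1-a)$, so the $k$-th factor becomes $Ma\,(1-a)_{n_k-1}(1+u)^{-(n_k-a)}h(X^*_k)$; taking the product over $k$ and using $\sum_k n_k=N$ yields the claimed formula. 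As a consistency check, Equation~(\ref{eq:corgrif1}) factors as $p(\vec X\mid\mbox{NGG})\cdot p(U_N=u\mid\vec X,\mbox{NGG},N)$ via Theorems~\ref{thm:postngg} and~\ref{thm:cpostngg} (up to the normalisation convention for $U_N$), matching the remark immediately following the corollary.

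The step I expect to be the main obstacle is the bookkeeping in the first two paragraphs: keeping the labelling of the Poisson points straight (the $1/K_L!$ of the compound Poisson must cancel against the combinatorial factor from assigning data to jumps), verifying that data-carrying and empty jumps genuinely decouple, and justifying the interchange of the sum over the number of empty jumps with the integral over their positions — which is legitimate since $\int_L^\infty e^{-ut}\rho_a(t)\,\mathrm{d}t<\infty$ for every $L>0$. After that, the remaining steps are the mechanical substitution of the NGG incomplete-gamma identities, the exact cancellation of all $L$-dependent terms, and an elementary Gamma integral; the limit $L\to 0$ is harmless because no $L$ survives in the exponential and the $J_k$-integrals converge.
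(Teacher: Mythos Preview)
Your approach is correct but differs from the paper's in both parts.

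For Equation~(\ref{eq:corgrif2}) the paper does \emph{not} pass through Theorem~\ref{thm:grif} and the truncation $L$ at all. Instead it argues directly: write the likelihood as $T^{-N}\prod_{k}J_k^{n_k}$ with $T=\sum_k J_k$, introduce the latent relative mass $U=\gamma/T$ for $\gamma\sim\Gamma(N,1)$ so that $T^{-N}$ becomes $u^{N-1}e^{-uT}/\Gamma(N)$, separate $T$ into the $K$ observed jumps and the remainder $T_0$, and then take $\mathbb{E}[e^{-uT_0}]$ via the L\'evy--Khintchine formula for the NGG to produce $e^{-M((1+u)^a-1)}$. Combining with the (improper) prior $M\rho_a(J_k)$ on each observed jump gives~(\ref{eq:corgrif2}) in one step, with no $L$, no empty jumps, and no incomplete-gamma cancellations.

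For Equation~(\ref{eq:corgrif1}) the paper is even shorter: it simply observes that the right-hand side, when factored as a function of $\vec X$ given $u$ and as a function of $u$ given $\vec X$, reproduces the conditional~(\ref{eq:cpostngg}) and the posterior~(\ref{eq:cpostngg:u}) of Theorem~\ref{thm:cpostngg}, which suffices since those were already proved. You do this only as a consistency check; your primary route is to integrate each $J_k$ out of~(\ref{eq:corgrif2}) via a Gamma integral.

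Your route is more faithful to how the corollary is \emph{announced} (``marginalising out jumps $J_k$ and then taking the limit as $L\to 0$'') and makes transparent exactly how all $L$-dependence disappears through the pairwise cancellation of the $|Q(-a,\cdot)|$ terms; it also yields~(\ref{eq:corgrif1}) as a genuine corollary of~(\ref{eq:corgrif2}). The paper's route is shorter and avoids the combinatorial bookkeeping you flag (the $1/K_L!$ issue and the decoupling of empty from occupied jumps), at the cost of invoking Theorem~\ref{thm:cpostngg} rather than deriving~(\ref{eq:corgrif1}) from scratch.
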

\begin{remark}
With the use of the latent relative mass $U_N$,
the NGG lends itself to hierarchical reasoning without a need
to compute the recursive series $T^{N,K}_{a,M}$.
This can be done with either the jumps integrated out,
or the jumps retained.
\end{remark}

\section{Dependencies and Properties of Operations}\label{sec:depprop}

This section presents a number of results to do with the
operations applied to the NRMs.
First dependencies such as covariances are presented.
Then some further properties are developed for when the 
operations are used in a network.

\subsection{Dependencies between NRMs via Operations}\label{sec:dependentNRM}
Properties of the NRMs here are given in terms of the
Laplace exponent and its derivatives.
In the Dirichlet process case, we have 
$\psi(v) = M\log(1 + v)$,
while in the normalized generalized Gamma process case, we have
$\psi_{a}(v) = M\left((1 + v)^a - 1\right)$.
Because the dependencies involve the total masses significantly,
we use a modified version of the Laplace exponent in all these
results.
Define $\tilde\psi_\eta(v)=\frac{1}{M}\psi_\eta(v)$,
which has the mass removed.

Different from the Dirichlet process, the total masses
 $M$ are no longer independent
from their normalized jumps in general normalized random measures. However, we can
still derive the correlations between different NRMs. The following Theorems summarize
these results.



\begin{lemma}[Mean and Variance of an NRM]\label{lemma:covNRM}
 Given a normalized random measure $\mu$ on $\mathbb{X}$ with the underlying L\'{e}vy measure 
  $\nu(\mathrm{d}t, \mathrm{d}x) = M\rho_{\eta}(\mathrm{d}t)P(\mathrm{d}x)$, for
  $\forall B\in\mathcal{B}(\mathbb{X})$. The mean of this NRM is given by
  \begin{equation}
   \mathbb{E}[\mu(B)] = P(B)~.
  \end{equation}
 The variance of this NRM is given by
  \begin{eqnarray} \label{eq:varNRM}
    \text{Var}(\mu(B)) &=& P(B)(P(B) - 1)M \nonumber\\
    &&\hspace{-0.5cm}\int_0^\infty v\tilde\psi_{\eta}^{\prime\prime}(v)
	  \exp\left\{-M\tilde\psi_{\eta}(v)\right\}\mathrm{d}v~.
  \end{eqnarray}
\end{lemma}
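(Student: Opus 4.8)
The plan is to work directly from the Laplace functional of the underlying CRM, together with the latent-relative-mass trick, to turn statements about $\mu(B)=\tilde\mu(B)/\tilde\mu(\mathbb{X})$ into tractable integrals. First I would introduce the auxiliary variable $U$ with density $p(U)\propto U^{N-1}e^{-U\tilde\mu(\mathbb{X})}$ — here with $N=1$ for the mean and $N=2$ for the second moment — so that negative powers of the normaliser become exponential terms in the jumps. Concretely, $\mathbb{E}[\mu(B)]=\mathbb{E}\!\left[\tilde\mu(B)\int_0^\infty e^{-U\tilde\mu(\mathbb{X})}\,\mathrm{d}U\right]$, and for the second moment one uses $\mathbb{E}[\mu(B)^2]=\int_0^\infty U\,\mathbb{E}\!\left[\tilde\mu(B)^2 e^{-U\tilde\mu(\mathbb{X})}\right]\mathrm{d}U$.

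Next I would evaluate the inner expectations by differentiating the Laplace functional (\ref{eq:levy-khintchine}) with respect to a test function that is piecewise constant on $B$ and $\mathbb{X}\setminus B$. Writing the Laplace transform of $(\tilde\mu(B),\tilde\mu(B^c))$ as $\exp\{-P(B)\psi_\eta(v_1)-P(B^c)\psi_\eta(v_2)\}$ and using the homogeneity and complete randomness (so $\tilde\mu(B)$ and $\tilde\mu(B^c)$ are independent), the needed moments $\mathbb{E}[\tilde\mu(B)e^{-v\tilde\mu(\mathbb{X})}]$ and $\mathbb{E}[\tilde\mu(B)^2 e^{-v\tilde\mu(\mathbb{X})}]$ come out as first and second derivatives in $v_1$ evaluated at $v_1=v_2=v$. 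For the mean this gives $\mathbb{E}[\tilde\mu(B)e^{-U\tilde\mu(\mathbb{X})}]=P(B)\psi_\eta'(U)e^{-\psi_\eta(U)}$, and integrating $U$ over $(0,\infty)$, using $\psi_\eta(0)=0$, $\psi_\eta(\infty)=\infty$ from (\ref{eq:L0}), collapses the integral to $P(B)$. For the variance one similarly gets a term with $\psi_\eta'(U)^2$ (the square of the first-derivative contribution, coming from the $\mathbb{E}[\tilde\mu(B)]^2$-type piece) and a term with $\psi_\eta''(U)$; after combining $\mathrm{Var}=\mathbb{E}[\mu(B)^2]-P(B)^2$ the $P(B)^2$ pieces cancel against the $\psi'^2$ contributions, leaving only the $P(B)(P(B)-1)$ coefficient times $\int_0^\infty U\,\psi_\eta''(U)e^{-\psi_\eta(U)}\,\mathrm{d}U$, and rewriting $\psi_\eta=M\tilde\psi_\eta$ yields exactly (\ref{eq:varNRM}).

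The main obstacle — and the step requiring the most care — is the bookkeeping in the second-moment computation: one must correctly account for the fact that $\tilde\mu(B)^2$ involves both the "diagonal" contribution (giving a $\psi''$ term) and the "off-diagonal" contribution (giving a $(\psi')^2$ term), and then verify that after the $U$-integration and subtraction of $P(B)^2$ the algebra genuinely produces the stated combination with the single surviving integral. A secondary technical point is justifying the interchange of expectation and the $\mathrm{d}U$ integral (Tonelli, since all integrands are nonnegative) and checking the integral $\int_0^\infty U\,\tilde\psi_\eta''(U)e^{-M\tilde\psi_\eta(U)}\,\mathrm{d}U$ converges, which follows from $\int_0^\infty t\,\rho_\eta(\mathrm{d}t)<\infty$ controlling $\tilde\psi_\eta''$ near $0$ and the exponential decay from $\tilde\psi_\eta(U)\to\infty$. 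I would close by specializing to the NGG, where $\tilde\psi_a(v)=(1+v)^a-1$ and $\tilde\psi_a''(v)=a(a-1)(1+v)^{a-2}$, to record the explicit variance if desired.
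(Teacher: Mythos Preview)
Your approach is correct and essentially the same as the paper's: both use $1/b^N=\frac{1}{(N-1)!}\int_0^\infty u^{N-1}e^{-ub}\,\mathrm{d}u$ to remove the normaliser and then differentiate the L\'evy--Khintchine formula, exploiting independence of $\tilde\mu(B)$ and $\tilde\mu(B^c)$. The paper writes $1/\tilde\mu(\mathbb{X})^2$ as a double integral $\int_0^\infty\!\int_0^\infty e^{-(v_1+v_2)\tilde\mu(\mathbb{X})}\mathrm{d}v_1\mathrm{d}v_2$ and only changes variables at the end (yielding your single $U$-weighted integral), and the ``cancellation'' you describe actually relies on the identity $\int_0^\infty U\,M^2(\tilde\psi_\eta')^2e^{-M\tilde\psi_\eta}\,\mathrm{d}U=1+\int_0^\infty U\,M\tilde\psi_\eta''e^{-M\tilde\psi_\eta}\,\mathrm{d}U$ (the paper obtains it from $\mathbb{E}[\mu(\mathbb{X})^2]=1$; it also follows by one integration by parts), after which the surviving $\psi''$ terms combine with coefficient $P(B)^2-P(B)$.
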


\begin{remark}
 For DP, the corresponding variances are: 
$$\text{Var}_{DP}(\mu(B)) = \frac{P(B)(1 - P(B))}{M+1}.$$
  For NGG, it is $$\text{Var}_{NGG}(\mu(B)) = P(B)(1 - P(B))\frac{1 - a}{a}e^MM^{\frac{1}{a}}|\Gamma(-\frac{1}{a}, M)|.$$
For large $M$ the upper incomplete gamma function used here
has the property that $e^MM^{1+\frac{1}{a}}|\Gamma(-\frac{1}{a}, M)|\rightarrow 1$
and so we get for large $M$
\[
\text{Var}_{NGG}(\mu(B)) \rightarrow P(B)(1 - P(B))\frac{1 - a}{Ma} ~.
\]
\end{remark}

\begin{theorem}[Dependency via superposition]\label{theo:covSuper}
 Suppose $\mu_i, i = 1, \cdots, n$ are $n$ independent normalized random measures 
  on $\mathbb{X}$ with the underlying L\'{e}vy measures 
  $\nu_i(\mathrm{d}t, \mathrm{d}x) = M_i\rho_{\eta}(\mathrm{d}t)P(\mathrm{d}x)$, 
  let $\mu = \mu_1\oplus\cdots\oplus\mu_n$,
  $B\in\mathcal{B}(\mathbb{X})$, then the covariance between $\mu_k (k < n)$ and $\mu$ is
  \begin{eqnarray}\label{eq:superPCov}
   &&\text{Cov}\left(\mu_k(B), \mu(B)\right) = \nonumber\\
   &&\hspace{-1cm}P(B)M_k\int_0^\infty \gamma(M_k, P(B), v)
      \exp\left\{-(\sum_{j\neq k}M_j)\tilde\psi_\eta(v)\right\}\mathrm{d}v \nonumber\\ 
    &\hspace{-1cm}+&\hspace{-0.7cm} P(B)^2\left(1 - \frac{2\sum_{i\neq k}M_i}{\sum_jM_j}\right)~.
  \end{eqnarray}
  where 
  \begin{eqnarray}
    &&\gamma(M_k, P(B), v) = \\
    &&\hspace{-1cm}\int_0^v\left(P(B)M_k\tilde\psi_\eta^\prime(v_1)^2 - \tilde\psi_\eta^{\prime\prime}(v_1)\right)\exp\left\{-M_k\tilde\psi_\eta(v_1)\right\}\mathrm{d}v_1 \nonumber
  \end{eqnarray}
\end{theorem}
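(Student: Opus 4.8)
The plan is to pass from the NRMs back to their underlying CRMs and then run the standard Laplace-transform computation for a mixed moment of normalised measures. By the definition of superposition of NRMs (and Lemma~\ref{lem:OECR}), $\mu(B)=\sum_m c_m\mu_m(B)=\tilde{\mu}(B)/\tilde{\mu}(\mathbb{X})$ with $\tilde{\mu}:=\sum_j\tilde{\mu}_j$; by the Superposition Theorem I would collapse the measures other than the $k$-th into a single CRM $\tilde{\nu}:=\sum_{j\neq k}\tilde{\mu}_j$, which has L\'{e}vy measure $\big(\sum_{j\neq k}M_j\big)\rho_\eta(\mathrm{d}t)P(\mathrm{d}x)$ and is independent of $\tilde{\mu}_k$ (here $k<n$ guarantees $\sum_{j\neq k}M_j>0$, which is what makes the outer integral in the statement converge). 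Then $\mu_k(B)=\tilde{\mu}_k(B)/\tilde{\mu}_k(\mathbb{X})$ and $\mu(B)=(\tilde{\mu}_k(B)+\tilde{\nu}(B))/(\tilde{\mu}_k(\mathbb{X})+\tilde{\nu}(\mathbb{X}))$. Since Lemma~\ref{lemma:covNRM} gives $\mathbb{E}[\mu_k(B)]=\mathbb{E}[\mu(B)]=P(B)$, the whole task reduces to evaluating $\mathbb{E}[\mu_k(B)\mu(B)]$ and subtracting $P(B)^2$.

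For the mixed moment I would split each CRM over $B$ and $\mathbb{X}\setminus B$: set $A=\tilde{\mu}_k(B)$, $C=\tilde{\mu}_k(\mathbb{X}\setminus B)$, $D=\tilde{\nu}(B)$, $E=\tilde{\nu}(\mathbb{X}\setminus B)$. Complete randomness of $\tilde{\mu}_k$ and of $\tilde{\nu}$, together with $\tilde{\mu}_k\perp\tilde{\nu}$, make $A,C,D,E$ mutually independent, each with a Laplace transform of the form $\mathbb{E}[e^{-fA}]=\exp\{-M_kP(B)\tilde{\psi}_\eta(f)\}$ by Equation~(\ref{eq:levy-khintchine1}) and the definition of $\tilde{\psi}_\eta$. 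I would then clear the two denominators with the elementary identity $1/x=\int_0^\infty e^{-xs}\mathrm{d}s$ (the $N=1$ instance of the latent-relative-mass device) applied to $\tilde{\mu}_k(\mathbb{X})$ and to $\tilde{\mu}(\mathbb{X})$, so that
\[
\mathbb{E}[\mu_k(B)\mu(B)]=\int_0^\infty\!\!\int_0^\infty\mathbb{E}\!\left[A(A+D)\,e^{-(v+w)(A+C)}\,e^{-w(D+E)}\right]\mathrm{d}v\,\mathrm{d}w .
\]
Positivity of the integrand justifies Fubini; factoring the expectation over the four independent pieces and recovering the factors $A$, $A^2$, $D$ by differentiating the Laplace transforms in their argument gives $\mathbb{E}[Ae^{-fA}]=M_kP(B)\tilde{\psi}_\eta'(f)e^{-M_kP(B)\tilde{\psi}_\eta(f)}$ and $\mathbb{E}[A^2e^{-fA}]=\big(M_k^2P(B)^2\tilde{\psi}_\eta'(f)^2-M_kP(B)\tilde{\psi}_\eta''(f)\big)e^{-M_kP(B)\tilde{\psi}_\eta(f)}$, and likewise for $D$. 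Using $P(B)+P(\mathbb{X}\setminus B)=1$ collapses the $B$/$B^c$ split into the two factors $e^{-M_k\tilde{\psi}_\eta(v+w)}$ and $e^{-(\sum_{j\neq k}M_j)\tilde{\psi}_\eta(w)}$, leaving a double integral over $v,w$ whose integrand is an explicit polynomial in $\tilde{\psi}_\eta'(v+w)$, $\tilde{\psi}_\eta''(v+w)$ and $\tilde{\psi}_\eta'(w)$.

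The last step is the reduction of this double integral to the single integral in the statement. Substituting $v_1=v+w$ turns the inner integral into $\int_w^\infty\!\cdots\,\mathrm{d}v_1$; the cross-term factor evaluates in closed form, $\int_w^\infty\tilde{\psi}_\eta'(v_1)e^{-M_k\tilde{\psi}_\eta(v_1)}\mathrm{d}v_1=\frac{1}{M_k}e^{-M_k\tilde{\psi}_\eta(w)}$, using $\tilde{\psi}_\eta(\infty)=\infty$ from Equation~(\ref{eq:L0}); and the remaining inner integrand is rewritten, via an integration by parts, as the integrand of $\gamma(M_k,P(B),\cdot)$ plus a total derivative. One then repeatedly uses the normalisation identity $\int_0^\infty\tilde{\psi}_\eta'(v)e^{-M\tilde{\psi}_\eta(v)}\mathrm{d}v=1/M$ (the same identity that yields $\mathbb{E}[\mu(B)]=P(B)$) and the boundary values $\tilde{\psi}_\eta(0)=0$, $\tilde{\psi}_\eta(\infty)=\infty$ to collect the pieces; subtracting $P(B)^2$ and reorganising produces the $\gamma$-integral together with the rational constant. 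I expect the main obstacle to be precisely this reorganisation: one must track carefully the several boundary terms thrown off by the integrations by parts and the various pieces proportional to $P(B)^2$, because the way they recombine is what fixes both the placement of the $\gamma$-integral and the exact coefficient $1-2\sum_{i\neq k}M_i/\sum_j M_j$. A secondary technical point is justifying the interchange of the double integral with the expectation and the absolute convergence of the iterated integrals, for which one relies on $\int_0^\infty t\,\rho_\eta(\mathrm{d}t)<\infty$ and, in the NGG case, on the rapid decay of $e^{-M_k\tilde{\psi}_\eta(v)}$.
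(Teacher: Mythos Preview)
Your proposal is correct and follows essentially the same approach as the paper: replace both normalisers by exponential integrals via $1/x=\int_0^\infty e^{-vx}\,\mathrm{d}v$, factor the resulting expectation over independent pieces using the L\'{e}vy--Khintchine formula, and reduce the double integral to a single one by the change of variables $v_1=v+w$. The only organizational difference is that you collapse $\sum_{j\neq k}\tilde{\mu}_j$ into a single CRM $\tilde{\nu}$ up front, whereas the paper keeps the sum over $i\neq k$ explicit and evaluates the diagonal ($i=k$) and off-diagonal ($i\neq k$) covariance terms separately before summing; your $A^2$ piece is exactly the paper's ``first term'' that produces the $\gamma$-integral, and your $AD$ piece is the paper's summed ``third terms'', so no integration by parts is actually needed to recognise $\gamma$---it drops out directly from the $A^2$ contribution after the change of variables.
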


\begin{theorem}[Dependency via subsampling]\label{theo:covSub}
  Let $\tilde{\mu}$ be a completely random measure on $\mathbb{X}$ with L\'{e}vy measure 
  $\nu(\mathrm{d}t, \mathrm{d}x) = M\rho_{\eta}(\mathrm{d}t)P(\mathrm{d}x)$,
  $\mu = \frac{\tilde{\mu}}{\tilde{\mu}(\mathbb{X})}$.
  The covariance between $\mu$ and its subsampling version $S^q(\mu)$, denoted as $\mu^q$, with 
  sampling rate $q(\cdot)$ on $B\in\mathcal{B}(\mathbb{X})$ is
  \begin{eqnarray}\label{eq:covsubS}
   &&\text{Cov}\left(\mu^q(B), \mu(B)\right) = \nonumber\\
   &&\hspace{-1cm}P(B)M_q\int_0^\infty \gamma(M_q, P(B), v)
      \exp\left\{-(M - M_q)\tilde\psi_\eta(v)\right\}\mathrm{d}v \nonumber\\ 
    &+& P(B)^2\left(\frac{2M_q - M}{M}\right)~,
  \end{eqnarray}
  where $M_q := (q\tilde{\mu})(\mathbb{X}) = \int_\mathbb{X}q(x)\tilde{\mu}(x)\mathrm{d}x$.
\end{theorem}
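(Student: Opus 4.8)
The plan is to reduce the statement to a two--component instance of the calculation behind Theorem~\ref{theo:covSuper}. First I would invoke the Subsampling Theorem: realising $\mu$ and $\mu^q$ on one common underlying Poisson process, the retained atoms $\tilde\mu_1:=\tilde S^q(\tilde\mu)$ and the discarded atoms $\tilde\mu_0:=\tilde S^{1-q}(\tilde\mu)$ are \emph{independent} CRMs, with L\'evy measures $q(x)M\rho_\eta(\mathrm{d}t)P(\mathrm{d}x)$ and $(1-q(x))M\rho_\eta(\mathrm{d}t)P(\mathrm{d}x)$ respectively, and $\tilde\mu=\tilde\mu_1+\tilde\mu_0$. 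Writing $M_q$ for the total mass of the first of these L\'evy measures (so $M-M_q$ is the total mass of the second), we then have $\mu^q=\tilde\mu_1/\tilde\mu_1(\mathbb{X})$ and $\mu=(\tilde\mu_1+\tilde\mu_0)/(\tilde\mu_1(\mathbb{X})+\tilde\mu_0(\mathbb{X}))$; that is, $(\mu^q,\mu)$ has the joint law of $(\mu_1,\mu_1\oplus\mu_0)$ for two independent NRMs, so the computation runs exactly as in the superposition case with $n=2$ and $k=1$.

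Then I would expand $\mathrm{Cov}(\mu^q(B),\mu(B))=\mathbb{E}[\mu^q(B)\mu(B)]-\mathbb{E}[\mu^q(B)]\,\mathbb{E}[\mu(B)]$. Both marginal means are $P(B)$ by Lemma~\ref{lemma:covNRM} (applied to $\tilde\mu$, and to $\tilde\mu_1$, whose normalised base mass on $B$ is again $P(B)$), so the subtracted term is $P(B)^2$. For the mixed moment I would decouple the two normalisers via $1/z=\int_0^\infty e^{-zs}\,\mathrm{d}s$, once for $\tilde\mu_1(\mathbb{X})$ and once for $\tilde\mu_1(\mathbb{X})+\tilde\mu_0(\mathbb{X})$, obtaining
\[
\mathbb{E}[\mu^q(B)\mu(B)]=\int_0^\infty\!\!\int_0^\infty\!\mathbb{E}\!\left[\tilde\mu_1(B)\bigl(\tilde\mu_1(B)+\tilde\mu_0(B)\bigr)e^{-(s+w)\tilde\mu_1(\mathbb{X})-w\tilde\mu_0(\mathbb{X})}\right]\mathrm{d}s\,\mathrm{d}w.
\]
Using independence of $\tilde\mu_1$ and $\tilde\mu_0$, independence of a CRM over the disjoint sets $B$ and $B^c$, and the L\'evy--Khintchine identity (\ref{eq:levy-khintchine}), each inner expectation becomes a first or second derivative of the Laplace functionals $\exp\{-M_iP(B)\tilde\psi_\eta(\cdot)\}$ and $\exp\{-M_i\tilde\psi_\eta(\cdot)\}$, hence a closed expression in $\tilde\psi_\eta$, $\tilde\psi_\eta'$ and $\tilde\psi_\eta''$. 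Splitting the square into its $\tilde\mu_1(B)^2$ and $\tilde\mu_1(B)\tilde\mu_0(B)$ parts and substituting $v=s+w$ collapses one of the two auxiliary integrals: the leading contribution assembles into $P(B)M_q\int_0^\infty\gamma(M_q,P(B),v)e^{-(M-M_q)\tilde\psi_\eta(v)}\,\mathrm{d}v$ (the inner $v_1$-integral defining $\gamma$ being precisely the surviving variable), and the residual lower-order terms, after cancelling $P(B)^2$, telescope to $P(B)^2(2M_q-M)/M$, which is the claimed formula.

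The hard part will be the bookkeeping in this double-integral reduction: identifying each weighted moment $\mathbb{E}[\tilde\mu_i(B)^m e^{-\cdot}]$ as the correct derivative of the appropriate Laplace functional, carrying out the $v=s+w$ substitution so that exactly one auxiliary integral remains, and checking that every residual term recombines into the stated rational factor. A secondary issue is justifying the interchange of order of integration (finiteness of the iterated integrals), which I would obtain from $\psi_\eta(0)=0$, $\psi_\eta(+\infty)=+\infty$ and finiteness of $\int_0^\infty t\,\rho_\eta(\mathrm{d}t)$ as recorded near (\ref{eq:L0}), while keeping careful track that $M_q=M\int_{\mathbb{X}}q\,\mathrm{d}P$ plays the role of the retained mass and $M-M_q$ that of the discarded mass. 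Finally, substituting $\tilde\psi_a(v)=(1+v)^a-1$ and its derivatives yields the NGG closed forms via the incomplete Gamma identities of the remark at the end of Section~\ref{sec:NRM}.
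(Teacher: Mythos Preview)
Your approach is essentially the same as the paper's: split $\tilde\mu$ into the independent retained and discarded pieces via the Subsampling Theorem, recognise $\mu$ as the superposition $\mu^q\oplus\mu_0^q$, and read off the covariance from the $n=2$, $k=1$ case of Theorem~\ref{theo:covSuper}. The paper's proof stops there and simply cites Theorem~\ref{theo:covSuper}, whereas you additionally sketch the Laplace-exponent double-integral computation that underlies that theorem; but the reduction is identical.
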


\begin{theorem}[Dependency via point transition]\label{theo:covPoiTrS}
  Let $\tilde{\mu}$ be a random measure on $\mathbb{X}$ with L\'{e}vy measure 
  $\nu(\mathrm{d}t, \mathrm{d}x) = M\rho_{\eta}(\mathrm{d}t)P(\mathrm{d}x)$,
  $\mu = \frac{\tilde{\mu}}{\tilde{\mu}(\mathbb{X})}$. Let $B\in\mathcal{B}(\mathbb{X})$,
  $A = \mathcal{T}(B) := \{x: x\sim \mathcal{T}(y, \cdot), y\in B\}$ be the set of points obtained after the point transition on $B$,
  thus $P(A) = \int_BP(\mathcal{T}(x))\mathrm{d}x$. Suppose $A$ and
  $B$ are disjoint (which is usually the case when the transition operator $T$ is appropriately
  defined), the covariance between $\mu$ 
  and its point transition version $T(\mu)$ on $B\in\mathcal{B}(\mathbb{X})$ is
  \begin{eqnarray}
    &&\text{Cov}\left(\mu(B), (T\mu)(B)\right) = P(A)P(B) \\
    &&\hspace{-1cm}\left(M^2\int_0^\infty\int_0^{v_1} \tilde\psi_{\eta}^\prime(v_2)^2\exp\left\{-M\tilde\psi_\eta(v_2)\right\}\mathrm{d}v_2\mathrm{d}v_1 - 1\right) \nonumber
  \end{eqnarray}
\end{theorem}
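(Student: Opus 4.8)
The plan is to compute $\mathbb{E}[\mu(B)\,(T\mu)(B)]$ and $\mathbb{E}[\mu(B)]\,\mathbb{E}[(T\mu)(B)]$ separately, then subtract. By Lemma~\ref{lemma:covNRM} we already have $\mathbb{E}[\mu(B)]=P(B)$ and, since $T(\mu)$ has the same jumps but relocated atoms, $\mathbb{E}[(T\mu)(B)]=P(A)$; hence the product of means contributes $-P(A)P(B)$, matching the $-1$ inside the bracket. So the work is entirely in the cross moment. Writing $\mu(B)=\tilde\mu(B)/\tilde\mu(\mathbb{X})$ and $(T\mu)(B)=\tilde T\tilde\mu(B)/\tilde\mu(\mathbb{X})$ (the normaliser is unchanged by point transition, by Lemma~\ref{lem:OECR}), I would first use the standard Gamma-integral trick $\tilde\mu(\mathbb{X})^{-2}=\int_0^\infty v\,e^{-v\tilde\mu(\mathbb{X})}\mathrm{d}v$ to linearise the denominator, giving
\[
\mathbb{E}[\mu(B)(T\mu)(B)] = \int_0^\infty v\,\mathbb{E}\!\left[\tilde\mu(B)\,\tilde T\tilde\mu(B)\,e^{-v\tilde\mu(\mathbb{X})}\right]\mathrm{d}v.
\]

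Next I would evaluate the inner expectation via the Poisson process / Campbell-type machinery behind the L\'evy--Khintchine formula (\ref{eq:levy-khintchine}). The key structural point is that $\tilde\mu(B)=\sum_k J_k 1_{x_k\in B}$ and $\tilde T\tilde\mu(B)=\sum_k J_k 1_{x_k'\in B}$ where $x_k'\sim\mathcal{T}(x_k,\cdot)$ independently; since $A$ and $B$ are disjoint, no single atom $(J_k,x_k)$ can contribute to both sums simultaneously. Therefore the product $\tilde\mu(B)\tilde T\tilde\mu(B)$, when expanded, has no diagonal ($k=l$) term — only cross terms $J_kJ_l 1_{x_k\in B}1_{x_l'\in B}$ with $k\ne l$ survive. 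Differentiating the Laplace functional twice (equivalently, applying the Mecke/Palm formula for Poisson processes) pulls down two independent factors, each of the form
\[
\int_0^\infty t\,e^{-vt}\rho_\eta(\mathrm{d}t)\;\cdot\;(\text{a }P(B)\text{ or }P(A)\text{ factor}) \;=\; -M^{-1}\tilde\psi_\eta'(v)\,\cdot\,(\cdots),
\]
using $\tilde\psi_\eta'(v)=\int_0^\infty t\,e^{-vt}\rho_\eta(\mathrm{d}t)$, and leaves behind the overall exponential $e^{-M\tilde\psi_\eta(v)}=\mathbb{E}[e^{-v\tilde\mu(\mathbb{X})}]$. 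Collecting the combinatorial/probability factors gives $P(A)P(B)M^2$ times $\int_0^\infty v\,\tilde\psi_\eta'(v)^2 e^{-M\tilde\psi_\eta(v)}\mathrm{d}v$; then I would rewrite $v$ as $\int_0^v \mathrm{d}v_1$ and swap the order of integration (Fubini) to land on the nested-integral form $M^2\int_0^\infty\!\int_0^{v_1}\tilde\psi_\eta'(v_2)^2 e^{-M\tilde\psi_\eta(v_2)}\mathrm{d}v_2\,\mathrm{d}v_1$ displayed in the statement.

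The main obstacle is getting the bookkeeping of the double-sum expansion right when differentiating the Laplace functional — specifically, justifying rigorously that the diagonal term vanishes (this is exactly where disjointness of $A$ and $B$ is used) and that the two off-diagonal factors decouple into a product of identical one-dimensional integrals. This is essentially a second-order Palm/Mecke calculation, and the care is in tracking that each of the two relocated/original atoms contributes independently an $x$-marginal factor ($P(B)$ for the original atom landing in $B$, $P(A)$ for a transitioned atom landing in $B$ — using $P(A)=\int_B P(\mathcal{T}(x))\mathrm{d}x$) and a common $t$-integral $\tilde\psi_\eta'(v)$, while every \emph{other} atom of the process contributes only through the exponential $e^{-M\tilde\psi_\eta(v)}$. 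Once that is set up cleanly the remaining steps — applying Lemma~\ref{lemma:covNRM} for the means, and the Fubini rearrangement of the $v$-integral — are routine. An alternative, perhaps cleaner route would be to observe that $(\mu(B),(T\mu)(B))$ are the normalised masses on the disjoint sets $B$ and $A$ of the \emph{joint} CRM on $\mathbb{X}$ obtained by keeping each atom at both its original and transitioned location with the same jump, and then invoke a general two-region covariance formula for NRMs; but the direct Laplace-functional computation above is self-contained given only the results already in the excerpt.
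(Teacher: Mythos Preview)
Your proposal is correct, but it takes a more elaborate route than the paper's own proof. The paper's key move is the one you mention only as your ``alternative route'' at the end: it simply identifies $(T\mu)(B)$ with $\mu(A)$ for the set $A=\mathcal{T}(B)$, so that the cross moment becomes $\mathbb{E}\!\left[\frac{\tilde\mu(B)}{\tilde\mu(\mathbb{X})}\frac{\tilde\mu(A)}{\tilde\mu(\mathbb{X})}\right]$, a quantity involving only the \emph{original} CRM on two disjoint sets. It then linearises $\tilde\mu(\mathbb{X})^{-2}$ with the \emph{double} integral $\int_0^\infty\!\int_0^\infty e^{-(v_1+v_2)\tilde\mu(\mathbb{X})}\,\mathrm{d}v_1\mathrm{d}v_2$ (rather than your single $\int_0^\infty v\,e^{-v\tilde\mu(\mathbb{X})}\,\mathrm{d}v$), factors the expectation into three independent pieces $\tilde\mu(B),\tilde\mu(A),\tilde\mu(\mathbb{X}\setminus(A\cup B))$ via complete randomness, and applies the ready-made identities (\ref{eq:LevyQ1})--(\ref{eq:LevyQ2}); a final change of variables $v_1+v_2\mapsto v$ produces the nested integral. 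Your Mecke/Palm computation with the diagonal-vanishing argument achieves the same factorisation from first principles, and your last step of writing $v=\int_0^v\mathrm{d}v_1$ is exactly what reconciles your single-integral form with the paper's double-integral display. The trade-off is that the paper's reduction to two disjoint regions of a single CRM is shorter and reuses existing Laplace-derivative formulas verbatim, while your direct Poisson-process argument is more self-contained and makes the role of the disjointness hypothesis (killing the diagonal) completely explicit.
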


\subsection{Properties of  the three dependency operations}

We first prove the following two 
Lemmas about superposition and subsampling of CRMs.

A straightforward extension of~Theorem~1 \cite{JamesLP:SJS09} leads to the 
following Lemma about the posterior of CRMs under superposition.

\begin{lemma}[Posterior of CRMs under superposition]\label{theo:PosSup}
 Let $\tilde{\mu}_1, \tilde{\mu}_2, \cdots, \tilde{\mu}_n$ be $n$ independent CRMs defined on space $\mathbb{X}$,
 with L\'{e}vy measures $\nu_i(\mathrm{d}t, \mathrm{d}x)$ 
for $i = 1, \cdots, n$.
  Let
  \begin{equation}\label{eq:superp}
    \tilde{\mu} = \oplus_{i = 1}^n\tilde{\mu}_i.
  \end{equation}
  Then given observed data $X = \{X_i\}$ (we use $X_k^{*}$ to denote the distinct values among $X$) and a latent relative mass $U_n$,
 the posterior of $\tilde{\mu}$ is given by
  (we use $x|(y)$ to denote the variable $x$ conditioned on $y$)
  \begin{equation}
    \tilde{\mu}|(U_n, X) = \tilde{\mu}|(U_n) + \sum_{k = 1}^KJ_k\delta_{X_k^{*}},
  \end{equation}
  where 
  \begin{enumerate}
   \item $\tilde{\mu}|(U_n)$ is a CRM with L\'{e}vy measure
      $$\nu(\mathrm{d}t, \mathrm{d}x) = e^{-ut}\left(\sum_{i = 1}^n\nu_i(\mathrm{d}t, \mathrm{d}x)\right),$$
    \item $X_k^{*}$ ($k = 1, \cdots, K$) are the fixed points of discontinuity and $J_k$'s are the
	corresponding jumps with densities proportional to $$t^{n_k}e^{-ut}\left(\sum_{i = 1}^n\nu_i(\mathrm{d}t, \mathrm{d}x)\right),$$
	where $n_k$ is the number of data attached at jump $J_k$.
    \item $\tilde{\mu}|(U_n)$ and $J_k$'s are independent.
  \end{enumerate}
\end{lemma}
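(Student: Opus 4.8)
The plan is to reduce this statement to the single--CRM posterior of James {\it et al.}~\cite{JamesLP:SJS09} (their Theorem~1), applied to the combined L\'evy measure. First I would observe that, by the Superposition Theorem for Poisson processes read through the construction~(\ref{eq:RM}), the measure $\tilde\mu=\oplus_{i=1}^n\tilde\mu_i$ is itself a CRM whose underlying Poisson random measure $N(\cdot)$ on $\mathbb{R}^+\times\mathbb{X}$ is the union of the $n$ independent Poisson random measures of the $\tilde\mu_i$, hence has L\'evy measure $\nu(\mathrm{d}t,\mathrm{d}x)=\sum_{i=1}^n\nu_i(\mathrm{d}t,\mathrm{d}x)$. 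Since the component base measures are generically non-atomic, the atoms of the $\tilde\mu_i$ are a.s.\ pairwise distinct, so this union is an honest pure-jump CRM of the form~(\ref{eq:rmdef}) to which James {\it et al.}'s analysis applies.

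Next I would (re)derive the posterior structure directly so that the three claimed items fall out. Writing the likelihood of $X$ under the normalized measure as $p(X\,|\,\tilde\mu)\propto\tilde\mu(\mathbb{X})^{-N}\prod_i\tilde\mu(\mathrm{d}X_i)$ and using the latent relative mass identity $\tilde\mu(\mathbb{X})^{-N}p(\Gamma_N)\mathrm{d}\Gamma_N=\exp\{-U_n\tilde\mu(\mathbb{X})\}\mathrm{d}U_n$, the conditional law of $N(\cdot)$ given $(X,U_n)$ picks up two multiplicative modifications. The factor $\exp\{-U_n\tilde\mu(\mathbb{X})\}=\exp\{-U_n\int tN(\mathrm{d}t,\mathrm{d}x)\}$ exponentially tilts the Poisson random measure, so by the L\'evy--Khintchine characterization~(\ref{eq:levy-khintchine}) the posterior diffuse part has L\'evy measure $e^{-ut}\nu(\mathrm{d}t,\mathrm{d}x)=e^{-ut}\sum_i\nu_i(\mathrm{d}t,\mathrm{d}x)$, giving item~1. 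The factor $\prod_i\tilde\mu(\mathrm{d}X_i)$, after grouping the $X_i$ into the $K$ distinct values $X^*_k$ with multiplicities $n_k$, forces a fixed point of discontinuity at each $X^*_k$; by the Palm formula for Poisson processes the jump $J_k$ there has density proportional to $t^{n_k}$ times the tilted mean measure restricted to $x=X^*_k$, i.e.\ proportional to $t^{n_k}e^{-ut}\sum_i\nu_i(\mathrm{d}t,\mathrm{d}x)$, which is item~2. The mutual independence in item~3 is then immediate from complete randomness of $N(\cdot)$: the fixed atoms live over the fibers $\{X^*_k\}\times\mathbb{R}^+$ and the remaining mass over the complementary region of $\mathbb{R}^+\times\mathbb{X}$, which are disjoint.

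I expect the only genuine obstacle to be the Palm-calculus step that converts ``$\tilde\mu$ charges the observed locations'' into ``the $X^*_k$ are fixed points of discontinuity with jump densities $\propto t^{n_k}e^{-ut}\nu(\mathrm{d}t,\mathrm{d}x)$'' --- but this is precisely the content of Theorem~1 of~\cite{JamesLP:SJS09}, and since that theorem holds for an arbitrary homogeneous L\'evy measure it applies with no change to $\nu=\sum_i\nu_i$; this is what makes the extension ``straightforward''. The one point deserving a remark is accidental collisions: a data value $X^*_k$ coinciding with a random atom of some $\tilde\mu_i$, or two component CRMs sharing an atom. These events have probability zero when the base measures are non-atomic; in the general case one merges such coincident mass into the corresponding fixed-point jump and the displayed formulas are unchanged.
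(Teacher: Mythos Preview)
Your proposal is correct and follows the same route as the paper: first identify the superposition $\tilde\mu$ as a single CRM with L\'evy measure $\nu=\sum_{i=1}^n\nu_i$ (via the Superposition Theorem for the underlying Poisson random measures), and then invoke Theorem~1 of James {\it et al.}~\cite{JamesLP:SJS09} for this combined $\nu$. The paper's own proof stops there; your additional exponential-tilting/Palm-calculus heuristic and the collision remark are extra commentary rather than a different argument.
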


By subsampling, we can prove the following formula of the L\'{e}vy measure under subsampling.
\begin{lemma}[L\'{e}vy measure under subsampling] \label{theo:Levysub}
 Let $\tilde\mu = \sum_{k = 1}^\infty J_k\delta_{X_k^{*}}$
 be a CRM  with L\'{e}vy measure 
 $\nu(\mathrm{d}t, \mathrm{d}x)$. Let $S^q(\tilde\mu)$ be its subsampling version with 
  acceptance rate $q(\cdot)$, 
then $S^q(\tilde\mu)$ has the L\'{e}vy measure of $q(\mathrm{d}x)\nu(\mathrm{d}t, \mathrm{d}x)$.
\end{lemma}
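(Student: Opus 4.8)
The plan is to work directly with the defining construction of a CRM from its underlying Poisson process, combined with the Subsampling Theorem for Poisson processes (stated earlier). Recall that by Definition of the CRM, $\tilde\mu(B) = \int_{\mathbb{R}^+ \times B} t\, N(\mathrm{d}t,\mathrm{d}x)$ where $N \sim \mathrm{PoissonP}(\nu)$ on $\mathbb{S} = \mathbb{R}^+ \times \mathbb{X}$, and the atoms $X_k^*$ together with the jumps $J_k$ are exactly the points $(J_k, X_k^*)$ of $\Pi$. The subsampling operation $\tilde S^q$ acts on each atom $X_k^*$ by an independent Bernoulli trial with success probability $q(X_k^*)$, keeping the point $(J_k, X_k^*)$ iff the trial succeeds. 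So my first step is to observe that $\tilde S^q(\tilde\mu)$ is again a pure-jump CRM whose atoms/jumps are precisely the retained points of $\Pi$, i.e. it is built from the subsampled point process $S^q(\Pi)$ in exactly the sense of the CRM-from-Poisson construction.

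Second, I would promote the sampling rate $q: \mathbb{X} \to [0,1]$ to a measurable function on the product space $\mathbb{S} = \mathbb{R}^+ \times \mathbb{X}$ that ignores the first coordinate, namely $\bar q(t,x) := q(x)$. The Bernoulli trials in the CRM subsampling definition are then exactly the independent Bernoulli trials in the Poisson-process Subsampling Theorem applied with acceptance function $\bar q$. By that theorem, $S^{\bar q}(\Pi) \sim \mathrm{PoissonP}(\bar q\, \nu)$, where $(\bar q\,\nu)(\mathrm{d}t,\mathrm{d}x) = q(x)\,\nu(\mathrm{d}t,\mathrm{d}x)$; equivalently, using the notation of the statement, the L\'evy measure is $q(\mathrm{d}x)\,\nu(\mathrm{d}t,\mathrm{d}x)$.

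Third, I would close the loop: since $\tilde S^q(\tilde\mu)$ is the CRM obtained from the Poisson process $S^{\bar q}(\Pi)$ via the construction in Equation~(\ref{eq:RM}), and the L\'evy measure of a CRM is by definition the mean measure of its underlying Poisson process, it follows that $\tilde S^q(\tilde\mu)$ has L\'evy measure $q(\mathrm{d}x)\,\nu(\mathrm{d}t,\mathrm{d}x)$. One small check worth including is measurability/well-definedness: that $\bar q$ is a measurable function on $\mathbb{S}$ (immediate, being a composition with the coordinate projection), so the Subsampling Theorem genuinely applies; and that the resulting measure $q\,\nu$ still gives a well-defined CRM, i.e. $\int t\, q(x)\,\nu(\mathrm{d}t,\mathrm{d}x) \le \int t\,\nu(\mathrm{d}t,\mathrm{d}x) < \infty$, which holds since $0 \le q \le 1$.

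The main obstacle here is essentially bookkeeping rather than a deep difficulty: one must be careful that the independent Bernoulli trials indexed by atoms $X_k^*$ in the CRM definition correspond correctly to trials indexed by the points $(J_k,X_k^*)$ of $\Pi$ --- this is fine because the atoms of a homogeneous pure-jump CRM are distinct almost surely, so per-atom and per-point labelling coincide --- and that lifting $q$ to depend only on the $\mathbb{X}$-coordinate does not interact with the $t$-coordinate of the Poisson process. Once that identification is made cleanly, the result is an immediate corollary of the Subsampling Theorem.
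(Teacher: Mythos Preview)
Your argument is correct and, in fact, cleaner than the paper's. You reduce the statement to the Subsampling Theorem for Poisson processes (already stated in the paper) by lifting $q$ to $\bar q(t,x)=q(x)$ on $\mathbb{R}^+\times\mathbb{X}$ and identifying CRM-level subsampling with Poisson-point-level thinning; the L\'evy measure then drops out immediately from the definition of a CRM via its underlying Poisson random measure.

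The paper instead proves the result essentially from scratch by computing the Laplace functional of $S^q(\tilde\mu)$: it discretizes $B$ into patches $A_{nm}$, uses independence across patches to factorize the expectation, applies $\log(1+x)\sim x$ to pass to an exponential of a sum of infinitesimals, integrates out the Bernoulli variables $z(A_{nm})$ to pull out the factor $q$, and takes $n\to\infty$ to recover the L\'evy--Khintchine form with intensity $q\,\nu$. (There is also a brief heuristic paragraph invoking Lemma~\ref{lem:sampler}, but the detailed proof is the Laplace-functional computation.) Your route is shorter and more conceptual because it reuses a result the paper has already stated; the paper's route is more self-contained and verifies the Laplace exponent explicitly, which can be reassuring if one wants to avoid relying on the point-process thinning lemma as a black box. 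Both arrive at the same conclusion, and your bookkeeping remarks about a.s.\ distinctness of atoms and finiteness of $\int t\,q(x)\,\nu(\mathrm{d}t,\mathrm{d}x)$ are the right checks to include.
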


Now we give some properties about compositions of 
of the dependency operations which follow simply.
\begin{lemma}[Composition of dependency operators]\label{lem:comp}
Given CRMs $\tilde\mu$, $\tilde\mu'$ and $\tilde\mu''$, the following hold:
\begin{itemize}
\item
Two subsampling operations are commutative.
So with acceptance rates $q(\cdot)$ and $q'(\cdot)$,
then $S^{q'}(S^q(\tilde\mu))=S^q(S^{q'}(\tilde\mu))$.
Both are equal to $S^{q'q}(\tilde\mu))$.
\item
A constant subsampling operation commutes with a point
transition operation.
Thus  $S^{q}(T(\tilde\mu))=T(S^q(\tilde\mu))$
where the acceptance rate $q$ is indepenent of the data space.
\item
Subsampling and point
transition operations distribute over superposition.
Thus for acceptance rate $q(\cdot)$ and point transition
$T(\cdot)$,
\[
S^q(\tilde\mu \oplus \tilde\mu') ~=~ S^q(\tilde\mu) \oplus S^q(\tilde\mu') ~,
~~~~~~~~~~~T(\tilde\mu \oplus \tilde\mu') ~=~ T(\tilde\mu) \oplus T(\tilde\mu')~.
\]
\item
Superposition is commutative and associative.
Thus $\tilde\mu \oplus \tilde\mu' = \tilde\mu' \oplus \tilde\mu$ and
$(\tilde\mu \oplus \tilde\mu') \oplus \tilde\mu''
= \tilde\mu \oplus (\tilde\mu' \oplus \tilde\mu'')$.
\end{itemize}
\end{lemma}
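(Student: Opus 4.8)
The unifying observation is that all three operators act on the points of the underlying Poisson process $\Pi$ (equivalently, on the atoms of the CRM, via the construction of Equation~(\ref{eq:RM}) together with Lemma~\ref{lem:OECR}) in an independent, pointwise fashion: $S^q$ attaches to each point $\theta_k$ an independent $\mathrm{Bernoulli}(q(\theta_k))$ mark and discards the point when the mark is $0$; $T$ relocates each point independently according to $\mathcal{T}(\theta_k,\cdot)$; and $\oplus$ forms the (a.s.\ disjoint) union of the point configurations. Each equality in the lemma is thus an identity in distribution, and in every case it suffices to couple the auxiliary randomness so that the two sides produce the same configuration of weighted atoms. So the plan is to dispatch the four bullets one at a time at this Poisson-process level.

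For the first bullet, applying $S^q$ and then $S^{q'}$ retains $\theta_k$ precisely when two independent trials with success probabilities $q(\theta_k)$ and $q'(\theta_k)$ both succeed; that joint survival event has probability $q(\theta_k)q'(\theta_k)$ and its description is symmetric in $q$ and $q'$. Hence $S^{q'}(S^q(\tilde\mu))$, $S^q(S^{q'}(\tilde\mu))$ and $S^{q'q}(\tilde\mu)$ all have the law of $\sum_k b_k J_k\delta_{\theta_k}$ with the $b_k\sim\mathrm{Bernoulli}(q(\theta_k)q'(\theta_k))$ independent; equivalently, by the Subsampling Theorem the three underlying Poisson processes each have mean measure $q\,q'\,\nu$. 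For the second bullet, when $q$ is constant the survival mark does not depend on the point's location and is therefore independent of the relocation draw from $\mathcal{T}$; so in $S^q(T(\tilde\mu))$ and $T(S^q(\tilde\mu))$ one may reuse the same marks and the same relocations, and the surviving relocated atoms coincide. For the third bullet, since $S^q$ (resp.\ $T$) processes each point using only that point and its own fresh randomness, applying it to $\Pi_1\cup\Pi_2$ yields the same configuration as applying it to $\Pi_1$ and $\Pi_2$ separately and then taking the union; one also checks that independent CRMs stay independent under these operations (their auxiliary marks/relocations are independent), so the right-hand sides are genuine superpositions. The fourth bullet is merely commutativity and associativity of set union, equivalently of addition of measures, the atoms being a.s.\ distinct so no weights collide.

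The argument is essentially bookkeeping, and the one genuine subtlety — which I would flag explicitly — is the role of the constant-rate hypothesis in the second bullet: a location-independent thinning commutes with relocation, whereas a location-dependent one does not, because $S^q(T(\tilde\mu))$ would base its keep/discard decision on the post-transition location while $T(S^q(\tilde\mu))$ uses the pre-transition location, giving different laws in general. The only other point requiring care, again routine, is the preservation-of-independence check in the third bullet so that the superpositions on the right-hand side are well defined.
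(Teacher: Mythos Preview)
Your proposal is correct. The paper itself does not supply a proof of this lemma: it introduces the result with the phrase ``which follow simply'' and leaves it at that, treating the four claims as immediate consequences of the pointwise definitions of the three operators on the underlying Poisson process. Your write-up fills in exactly the bookkeeping the paper omits, working at the same level (independent per-atom Bernoulli marks, independent relocations via $\mathcal{T}$, and set union for superposition), and you correctly isolate the one non-trivial point, namely that commutation of thinning with point transition requires the acceptance rate to be location-independent. So there is no methodological divergence to compare; you have simply supplied the argument the paper declined to spell out.
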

Thus when subsampling operations are all constant,
a composition of subsampling, point
transition and superposition operations admits a normal form
where all the subsampling operations are applied first, then 
the transition operations and lastly the superposition operations.
\begin{lemma}[Normal form for compositions]\label{lem:nmc}
Assume subsampling operations all have a constant
acceptance rate.
A normal form for a composition of subsampling, point
transition and superposition operations is obtained by
applying the following rules until no further can apply.
\begin{eqnarray*}
S^q(S^{q'}(\tilde\mu)) & \rightarrow &
S^{qq'}(\tilde\mu))~, \\
S^{q}(T(\tilde\mu)) & \rightarrow &
T(S^q(\tilde\mu))~, \\
S^q(\tilde\mu \oplus \tilde\mu') &\rightarrow &
 S^q(\tilde\mu) \oplus S^q(\tilde\mu') ~,\\
T(\tilde\mu \oplus \tilde\mu') 
&\rightarrow & T(\tilde\mu) \oplus T(\tilde\mu')~.
\end{eqnarray*}
The remaining top level set of superpositions are
then flattened out by removing any precedence ordering.
\end{lemma}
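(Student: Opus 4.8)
The plan is to regard the four rewrite rules, together with the concluding flattening of superpositions, as a term-rewriting system on the formal expressions built from atomic CRMs by the operators $S^q(\cdot)$ (with constant rate), $T(\cdot)$ and $\oplus$, and to prove three things: that each rewrite step preserves the CRM denoted by the expression, that the rewriting always terminates, and that any expression to which no rule applies has exactly the shape claimed in the statement.

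Soundness is immediate from Lemma~\ref{lem:comp}: the first rule is the composition law $S^q(S^{q'}(\tilde\mu))=S^{qq'}(\tilde\mu)$, the second is the commutation $S^q(T(\tilde\mu))=T(S^q(\tilde\mu))$ for constant $q$, and the third and fourth are the distributivity of $S^q$ and of $T$ over $\oplus$. Since these are genuine identities between CRMs, rewriting a subterm leaves the denoted CRM unchanged, so by induction on the length of a rewrite sequence the resulting expression denotes the same CRM as the original; the concluding flattening is justified by the commutativity and associativity of $\oplus$, also from Lemma~\ref{lem:comp}.

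For termination, the obstacle is that the two distributivity rules \emph{increase} the number of operator nodes, so a plain induction on expression size does not work. Instead I would exhibit a weight $w$ that is strictly increasing in each argument of every operator and strictly decreases under each rule: put $w(a)=1$ for an atom $a$, $w(S^q(x))=3\,w(x)$ (independent of $q$), $w(T(x))=2\,w(x)+1$, and $w(x\oplus y)=w(x)+w(y)+2$. A short computation gives $w(\text{LHS})>w(\text{RHS})$ for each rule --- rule one: $9w(x)>3w(x)$; rule two: $6w(x)+3>6w(x)+1$; rule three: $3w(x)+3w(y)+6>3w(x)+3w(y)+2$; rule four: $2w(x)+2w(y)+5>2w(x)+2w(y)+4$ --- and strict monotonicity of the three interpretation functions propagates the decrease through any surrounding context. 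Hence each rewrite step strictly lowers a quantity valued in the positive integers, so the process halts. I expect this weight to be the main thing to get right, since it must separate $S\circ T$ from $T\circ S$ (which a purely multiplicative interpretation cannot) while still decreasing on the size-increasing rules; the asymmetric additive constants are chosen exactly to achieve this.

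Finally, to characterise the normal forms, suppose no rule applies to an expression $E$. Then no $S^q$ node can have as its immediate child an $S^{q'}$ node, a $T$ node, or a $\oplus$ node, since those would fire rules one, two and three respectively; hence every $S^q$ in $E$ is applied directly to an atom. Likewise no $T$ node can have a $\oplus$ child, by rule four, so the child of a $T$ is an atom, an expression $S^q(\text{atom})$, or another $T$-chain of the same kind. Consequently $E$ is a nesting of $\oplus$'s whose leaves are point-transition chains applied to atoms or to singly-subsampled atoms: all subsampling operations sit at the leaves, the point transitions sit above them, and the superpositions sit on top. Flattening the top-level $\oplus$'s by associativity and commutativity removes the precedence order among them, giving precisely the normal form asserted. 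Combining this with soundness and termination shows that every expression reduces in finitely many steps to such a normal form while denoting the same CRM, which completes the proof.
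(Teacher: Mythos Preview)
Your proof is correct and, in fact, considerably more thorough than the paper's treatment. The paper does not give an explicit proof of this lemma at all: it is stated immediately after Lemma~\ref{lem:comp} (itself asserted to ``follow simply'') and is evidently regarded as a direct consequence of those composition identities, with termination and the shape of the normal form taken as self-evident from the rule set.

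What you add is a genuine term-rewriting argument: soundness from Lemma~\ref{lem:comp} as the paper intends, but then an explicit termination proof via a monotone polynomial weight (the choice $w(S^q x)=3w(x)$, $w(Tx)=2w(x)+1$, $w(x\oplus y)=w(x)+w(y)+2$ is exactly what is needed to separate $S\circ T$ from $T\circ S$ while still decreasing under distributivity), followed by a syntactic analysis of irreducible terms. I checked your four inequalities and the monotonicity claim; they are correct. The paper's approach buys brevity at the cost of leaving termination unargued; yours buys rigour, and would be the right level of detail if one wanted to treat the rewriting system formally rather than heuristically.
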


Note that Lemmas~\ref{theo:PosSup},~\ref{theo:Levysub},~\ref{lem:comp}
and~\ref{lem:nmc} all apply to NRMs as well due to Lemma~\ref{lem:OECR}.
Now it is ready to state the main theorem about the relation between the CRM
and the corresponding NRM under the three dependency operations.

\begin{theorem}[Equivalence Theorem]\label{theo:TDNRM}
  Assume the subsampling rates $q(\cdot)$ are independent (constant)\footnote{This assumption 
  is to deal with the case when considering point transition, meaning we can drop this assumption
  if no point transition operation is considered.} for each point of the corresponding Poisson process, 
  the following dependent random measures~(\ref{eq:normalizedTr}) 
  and (\ref{eq:unnormalizedTr}) are equivalent:
  \begin{itemize}
   \item Manipulate the normalized random measures:
	\begin{equation}\label{eq:normalizedTr}
	 \mu_m^\prime\sim T(S^q(\mu_{m-1}^\prime))\oplus \mu_m, 
~~~~~~~~~~~~~~~~~~\mbox{for } m > 1.
	\end{equation}
  \item Manipulate the completely random measures:
	\begin{eqnarray} \label{eq:unnormalizedTr}
	&&   \tilde{\mu}_m^\prime\sim 
            \tilde{T}(\tilde{S}^q(\tilde\mu_{m-1}^\prime))\oplus \tilde\mu_m, 
~~~~~~~~~~~~~~~~~~\mbox{for }m > 1.
         \nonumber\\
	&& \mu_m^\prime = \frac{\tilde{\mu}_m^\prime}{\tilde{\mu}_m^\prime(\mathbb{X})},  
	\end{eqnarray}
  \end{itemize}
  Furthermore, both resulting NRMs $\mu_m^\prime$'s are equal to:
  \begin{equation} \label{eq:normalizedTr1}
  \mu_m^\prime = \sum_{j = 1}^m
	    \frac{\left(q^{m-j}\tilde{\mu}_j\right)(\mathbb{X})}
	    {\sum_{j^\prime=1}^m\left(q^{m-j^\prime}\tilde{\mu}_{j^\prime}\right)(\mathbb{X})}T^{m-j}(\mu_j), ~~~~~~~~~~~~~~~\mbox{for } m >1 \nonumber
  \end{equation}
  where $q^{m-j}\tilde{\mu}$ is the random measure with L\'{e}vy measure $q^{m-j}(\mathrm{d}x)\nu(\mathrm{d}t, \mathrm{d}x)$,
and $\nu(\mathrm{d}t, \mathrm{d}x)$ is the L\'{e}vy measure of $\tilde{\mu}$.   $T^{m-j}(\mu)$ denotes point transition on $\mu$ for $(m-j)$ times .
\end{theorem}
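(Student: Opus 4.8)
The plan is to prove the equivalence in two stages: first show that the normalized-measure manipulation \eqref{eq:normalizedTr} and the CRM manipulation \eqref{eq:unnormalizedTr} give the same NRM, and second show that both coincide with the closed form \eqref{eq:normalizedTr1}. For the first stage I would invoke Lemma~\ref{lem:OECR}: superposition, subsampling and point transition of an NRM are by definition obtained by performing the corresponding operation on the underlying CRM and then renormalizing. Since normalization is idempotent (normalizing $\tilde\mu_m'$ gives the same thing whether or not its constituents were individually normalized first), applying $T\circ S^q$ to $\mu_{m-1}'$ and then $\oplus\,\mu_m$ produces exactly $\tilde T(\tilde S^q(\tilde\mu_{m-1}'))\oplus\tilde\mu_m$ up to the overall scalar, which the final normalization removes. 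This gives $\mu_m'=\tilde\mu_m'/\tilde\mu_m'(\mathbb X)$ in both cases.

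For the second stage I would argue by induction on $m$. The base case $m=1$ is trivial ($\mu_1'=\mu_1$). For the inductive step, assume $\tilde\mu_{m-1}'$, when un-normalized, can be taken (by Lemma~\ref{lem:nmc}, putting the composition in normal form) to be the superposition $\bigoplus_{j=1}^{m-1}\tilde T^{m-1-j}(\tilde S^{q^{m-1-j}}(\tilde\mu_j))$. Then $\tilde T(\tilde S^q(\tilde\mu_{m-1}'))$: using that subsampling and point transition distribute over superposition (Lemma~\ref{lem:comp}), that two subsamplings compose as $S^{qq'}$, and that a constant subsampling commutes with point transition (also Lemma~\ref{lem:comp}), this rewrites as $\bigoplus_{j=1}^{m-1}\tilde T^{m-j}(\tilde S^{q^{m-j}}(\tilde\mu_j))$. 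Superposing $\tilde\mu_m$ (the $j=m$ term, with $q^{0}=1$, $T^{0}=\mathrm{id}$) yields $\tilde\mu_m' = \bigoplus_{j=1}^{m}\tilde T^{m-j}(\tilde S^{q^{m-j}}(\tilde\mu_j))$. By Lemma~\ref{theo:Levysub} the total mass of the $j$-th summand is $(q^{m-j}\tilde\mu_j)(\mathbb X)$, and point transition does not change total mass; normalizing the superposition therefore weights each (individually normalized) summand $T^{m-j}(\mu_j)$ by $(q^{m-j}\tilde\mu_j)(\mathbb X)/\sum_{j'}(q^{m-j'}\tilde\mu_{j'})(\mathbb X)$, which is exactly \eqref{eq:normalizedTr1}.

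The main obstacle I expect is bookkeeping the commutation/distribution carefully enough that the subsampling rate accumulates correctly to $q^{m-j}$ and the transition count to $m-j$ as the recursion unfolds — in particular making sure the constant-rate assumption is genuinely used only where $S^q$ must pass through a $T$, so that the footnoted weakening (no point transition $\Rightarrow$ assumption can be dropped) is visibly correct. One subtlety to flag is that a single atom of $\mu_j$ may be subsampled multiple times across the recursion; because the Bernoulli trials at stage $m$ are applied to the already-formed points of $\tilde\mu_{m-1}'$, the effective acceptance probability for an atom originating in $\tilde\mu_j$ is a product of $m-j$ independent trials, which is distributionally $q^{m-j}$ — this is precisely what Lemma~\ref{theo:Levysub} encodes at the level of Lévy measures, so the argument should go through cleanly once phrased in terms of Lévy measures rather than individual atoms. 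A final remark would note that because all identities are between Lévy measures (hence between the laws of the CRMs, via the Lévy--Khintchine representation \eqref{eq:levy-khintchine}), "equivalent" here means equality in distribution of the resulting random measures, and the chain $\tilde\mu_j \leadsto T^{m-j}(\mu_j)$ inside \eqref{eq:normalizedTr1} is understood with the transition kernels composed.
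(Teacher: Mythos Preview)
Your proposal is correct and reaches the same conclusion as the paper, but the route is organized somewhat differently, and it is worth noting how.

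The paper does not first identify \eqref{eq:normalizedTr} with \eqref{eq:unnormalizedTr} via Lemma~\ref{lem:OECR} and then derive the closed form once; instead it derives \eqref{eq:normalizedTr1} twice, once from each recursion. For the CRM recursion it works concretely at the level of the underlying Poisson random measure, tracking an atom $(s,x)$ of $\Pi_1$ through $i$ rounds of subsampling and transition to obtain $\sum z(q^i(x))\delta_{(s,T^i(x))}$, then building the CRM from this and normalizing; for the NRM recursion it simply expands the definition for $m=2$ and claims induction. Your argument replaces the explicit Poisson-process bookkeeping by the algebraic rewrite rules of Lemmas~\ref{lem:comp} and~\ref{lem:nmc}, which is cleaner and makes the accumulation $q^{m-j}$, $T^{m-j}$ transparent; it also makes explicit why the constant-rate hypothesis is needed exactly where $S^q$ must pass through $T$. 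Conversely, the paper's direct manipulation has the advantage of not relying on Lemma~\ref{lem:nmc} (whose proof is not given) and of showing by hand that point transition preserves total mass, a fact you correctly invoke but which the paper establishes in passing. Both approaches ultimately rest on Lemma~\ref{theo:Levysub} for the mass of the subsampled component, so the substantive content is the same.
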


In the posterior sampling for subsampling operation, we can prove the following posterior 
of the Bernoulli variables.

\begin{theorem}[Posterior acceptance rates for subsampling]\label{theo:posteriorSub}
 Let $\tilde{\mu}^\prime = \sum_kJ_k\theta_k$ be a completely random measure on $\mathbb{X}$,
  $\tilde{\mu} = S^q(\tilde{\mu}^\prime) := \sum_kz_kJ_k\delta_k$ be its subsampling version,
  where $z_k$'s are independent Bernoulli random variables with acceptance rate $q$.
  Further define $\mu = \frac{\tilde{\mu}}{\tilde{\mu}(\mathbb{X})}$. Given $n = \sum_kn_k$ 
  observed data in $\mu$, the posterior of $z_k$ is:
  \begin{equation}\label{eq:postersub}
    p(z_{k}=1|\tilde{\mu}, n) = 
    \begin{cases}
      1 & \text{if } n_k > 0, \\
      \frac{q/J}{q/J + (1 - q)/J^{-k}} & \text{if }n_k = 0.
    \end{cases}
  \end{equation}
  where $J = \left(\sum_{k^\prime}z_{k^\prime}J_{k^\prime}\right)^{n}$, 
  $J^{-k} = \left(\sum_{k^\prime\neq k}z_{k^\prime}J_{k^\prime}\right)^{n}$. 
\end{theorem}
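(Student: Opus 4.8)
The plan is to read (\ref{eq:postersub}) as nothing more than Bayes' rule applied to the single Bernoulli variable $z_k$, with everything else held fixed: the other inclusion variables $\{z_{k'}\}_{k'\neq k}$, the jumps $J_1,J_2,\dots$ of the underlying CRM $\tilde{\mu}'$, and the observed data summarised by the occupancy counts $\vec n=(n_1,n_2,\dots)$. Since the prior has $p(z_k=1)=q$ and $p(z_k=0)=1-q$ independently over $k$, the whole statement reduces to computing the data likelihood under $z_k=1$ versus $z_k=0$ and normalising. I would split into the two cases of the statement.

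For $n_k>0$ the likelihood simply cannot tolerate $z_k=0$. In $\tilde{\mu}=\sum_{k'}z_{k'}J_{k'}\delta_{\theta_{k'}}$, equivalently in $\mu=\tilde{\mu}/\tilde{\mu}(\mathbb{X})$, the atom $\theta_k$ carries mass proportional to $z_kJ_k$, which is $0$ when $z_k=0$; hence the probability of attaching even one observation to atom $k$ vanishes in that case. So the likelihood of $\vec X$ is $0$ whenever $z_k=0$ and $n_k>0$, and the posterior of $z_k$ degenerates to the point mass at $1$, giving the first line of (\ref{eq:postersub}).

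For $n_k=0$ no observation sits on atom $k$, so the atoms receiving data all lie in $\{k'\neq k\}$. Writing the likelihood through the normalised weights $\omega_{k'}=z_{k'}J_{k'}/\tilde{\mu}(\mathbb{X})$ with $\tilde{\mu}(\mathbb{X})=\sum_{k'}z_{k'}J_{k'}$, each of the $n$ observations contributes a factor $\omega_{k'}$ for some $k'\neq k$ (times a $g_0$ or base-measure factor free of $z_k$), so the data likelihood factors as (a term independent of $z_k$) times $\tilde{\mu}(\mathbb{X})^{-n}$. The only $z_k$-dependence is in $\tilde{\mu}(\mathbb{X})^{-n}$, which equals $1/J$ when $z_k=1$ and $1/J^{-k}$ when $z_k=0$ in the notation of the theorem. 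Bayes' rule then gives $p(z_k=1\mid\tilde{\mu},n)\propto q/J$ and $p(z_k=0\mid\tilde{\mu},n)\propto(1-q)/J^{-k}$, and normalising over $z_k\in\{0,1\}$ yields the second line of (\ref{eq:postersub}).

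The one point that needs care --- and the step I would write out most explicitly --- is the bookkeeping of the conditioning set. In the $n_k=0$ branch the jump $J_k$ must itself be part of the conditioning information, as it is in the slice sampler of Lemma~\ref{lemma:sliceimf} where the jumps without attached data are explicitly instantiated, otherwise $J$ and $J^{-k}$ are not both defined; and one should verify that marginalising out the latent assignment variables, equivalently working directly from the counts $\vec n$, introduces no further $z_k$-dependence --- it does not, since the combinatorial factor counting assignments compatible with $\vec n$ is the same for $z_k=0$ and $z_k=1$ and cancels. For consistency one may also observe that Lemma~\ref{theo:Levysub}, which states that subsampling multiplies the L\'{e}vy measure by $q(\mathrm{d}x)$, yields the same conditional law for the empty jumps through Poisson thinning, but the elementary Bernoulli computation above is self-contained.
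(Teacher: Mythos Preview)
Your proposal is correct and follows essentially the same route as the paper's own proof: apply Bayes' rule to the single Bernoulli variable $z_k$ with the jumps and other $z_{k'}$ held fixed, observe that the likelihood vanishes when $z_k=0$ and $n_k>0$, and when $n_k=0$ note that the only $z_k$-dependence is through the normaliser $\tilde{\mu}(\mathbb{X})^{-n}$. Your additional remarks on the conditioning set and the cancellation of combinatorial factors are more explicit than the paper's version but do not change the argument.
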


\begin{corollary}[Posterior acceptance rates in sampling $J_{mk}^\prime$ in Section~4~\cite{ChenDB:ICML12}]\label{coro:posterioracc}
 Using the terminology as in Section~4~\cite{ChenDB:ICML12}, 
  the posterior $p(z_{mk}=1|\tilde{\mu}_m,\{\tilde{n}_{mk}^\prime\})$ is computed as:
\begin{itemize}
 \item If $\tilde{n}_{mk}^\prime > 0$, then $p(z_{mk}=1|\tilde{\mu}_m,\{\tilde{n}_{mk}^\prime\})=1$.
  \item Otherwise,
    $$ p(z_{mk}=1|\tilde{\mu}_m,\{\tilde{n}_{mk}^\prime\})=\frac{q^{m - m^\prime}/J_m}{q^{m - m^\prime}/J_m + (1 - q^{m - m^\prime})/J_m^{-k}},$$
    where $J_m = \left(\sum_{m'\le m}\sum_{k^\prime}z_{m' k^\prime}J_{m' k^\prime}\right)^{\tilde{n}_{m\cdot}^\prime}$, 
    $J_m^{-k} = \left(\sum_{m'\le m} \sum_{k^\prime\neq k}z_{m'k^\prime}J_{m'k^\prime}\right)^{\tilde{n}_{m\cdot}^\prime}$, 
    and $\tilde{n}_{m\cdot}^\prime = \sum_{k'} \tilde{n}_{mk'}^\prime$. 
\end{itemize}
\end{corollary}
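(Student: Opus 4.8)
The plan is to reduce the hierarchical network situation of \cite{ChenDB:ICML12} to the single subsampling posterior of Theorem~\ref{theo:posteriorSub}. First I would invoke the Equivalence Theorem (Theorem~\ref{theo:TDNRM}) together with the normal form Lemma~\ref{lem:nmc}: since all subsampling rates are assumed constant, any composition of subsampling, point transition and superposition operators that builds $\tilde{\mu}_m^\prime$ can be rewritten so that, for each source level $m^\prime \le m$, the atoms contributed from $\tilde{\mu}_{m^\prime}$ to $\tilde{\mu}_m^\prime$ have been subsampled exactly $m - m^\prime$ times and point transitioned $m - m^\prime$ times, with the superpositions pushed to the outside. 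By Lemma~\ref{lem:comp} two independent constant subsamplings with rate $q$ compose into a single subsampling with rate $q^2$, so the chain of $m - m^\prime$ Bernoulli($q$) trials governing atom $k$ from level $m^\prime$ collapses to a single Bernoulli variable $z_{mk}$ with acceptance probability $q^{m-m^\prime}$; and since point transition leaves jump sizes untouched, the jumps $J_{m^\prime k}$ entering $\tilde{\mu}_m^\prime$ are exactly the original ones.

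With this reduction in hand, $\tilde{\mu}_m^\prime$ is, up to the outermost superposition (which merely re-indexes atoms), precisely a subsampled CRM $\tilde\mu = S^q(\tilde\mu^\prime)$ of the form treated in Theorem~\ref{theo:posteriorSub}, where the global sum of retained jumps is $\sum_{m^\prime\le m}\sum_{k^\prime} z_{m^\prime k^\prime} J_{m^\prime k^\prime}$, the effective acceptance rate for atom $(m^\prime,k)$ is $q^{m-m^\prime}$, and the exponent playing the role of $n$ in that theorem is the total number of data attached to $\mu_m^\prime$, namely $\tilde{n}_{m\cdot}^\prime = \sum_{k^\prime}\tilde{n}_{mk^\prime}^\prime$. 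Applying Theorem~\ref{theo:posteriorSub} atom by atom then gives $p(z_{mk}=1\,|\,\tilde{\mu}_m,\{\tilde{n}_{mk}^\prime\}) = 1$ whenever $\tilde{n}_{mk}^\prime > 0$, because a jump bearing data must be present, and otherwise
\[
p(z_{mk}=1\,|\,\tilde{\mu}_m,\{\tilde{n}_{mk}^\prime\})
= \frac{q^{m-m^\prime}/J_m}{q^{m-m^\prime}/J_m + (1-q^{m-m^\prime})/J_m^{-k}},
\]
with $J_m = \big(\sum_{m^\prime\le m}\sum_{k^\prime} z_{m^\prime k^\prime} J_{m^\prime k^\prime}\big)^{\tilde{n}_{m\cdot}^\prime}$ and $J_m^{-k}$ the same quantity with the $(m^\prime,k)$ term deleted from the inner sum, which is exactly the claimed formula.

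The step I expect to be the main obstacle is justifying the reduction cleanly: one must argue that, after pushing all subsamplings to the inside via Lemma~\ref{lem:nmc}, the resulting Bernoulli variables across different levels and atoms remain mutually independent with the stated marginals, and that the posterior obtained by conditioning on $\tilde{\mu}_m$ and the counts $\{\tilde{n}_{mk}^\prime\}$ factorizes atom-wise so that Theorem~\ref{theo:posteriorSub} can be applied independently to each $z_{mk}$, the only coupling being through the shared normaliser $\sum_{m^\prime\le m}\sum_{k^\prime} z_{m^\prime k^\prime} J_{m^\prime k^\prime}$, which is handled exactly as in the proof of that theorem. A secondary point to verify is that conditioning on the data does not introduce extra dependence between $z_{mk}$ and the jumps beyond what Theorem~\ref{theo:posteriorSub} already accounts for; this holds because point transition does not touch jump values and constant subsampling is independent of atom location, so the data likelihood depends on $z_{mk}$ only through the admissibility indicator $1(\tilde{n}_{mk}^\prime = 0 \text{ or } z_{mk}=1)$ and through the normaliser.
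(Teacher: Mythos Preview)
Your proposal is correct and follows essentially the same route as the paper's own proof: reduce to the setting of Theorem~\ref{theo:posteriorSub} by writing $\mu_m^\prime$ explicitly as a normalized sum over retained jumps from all levels $m'\le m$, identify the Bernoulli prior on $z_{mk}$ as $q^{m-m'}$, and apply Bayes' rule exactly as in that theorem. The only difference is one of presentation: you explicitly invoke Lemma~\ref{lem:nmc} and Lemma~\ref{lem:comp} to justify that the composed subsamplings collapse to a single Bernoulli with rate $q^{m-m'}$, whereas the paper simply asserts this prior without citing those lemmas; your version is arguably the more careful of the two.
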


\paragraph{Acknowledgements.}
NICTA is funded by the Australian Government as represented by the
Department of Broadband, Communications and the Digital Economy and
the Australian Research Council through the ICT Centre of Excellence
program.

\bibliography{citations}

\begin{thebibliography}{\c{C}10}

\bibitem[{A.~}12]{LijoiNP:12}
{A.~Lijoi and B.~Nipoti and I.~Pr\"{u}nster}.
\newblock Bayesian inference with dependent normalized completely random
  measures.
\newblock {\em working paper}, 2012.

\bibitem[BH12]{Bun12}
W.~Buntine and M.~Hutter.
\newblock A {B}ayesian view of the {P}oisson-{D}irichlet process.
\newblock Technical Report arXiv:1007.0296, NICTA and ANU, Australia, February
  2012.

\bibitem[Bri99]{Brix:AAP99}
A.~Brix.
\newblock Generalized {G}amma measures and shot-noise {C}ox processes.
\newblock {\em Advances in Applied Probability}, 31:929--953, 1999.

\bibitem[\c{C}10]{Cinlar:10}
E.~\c{C}inlar.
\newblock {\em Probability and stochastics}.
\newblock Springer, 2010.

\bibitem[CDB11]{ChenDB:ECML11}
C.~Chen, L.~Du, and W.~Buntine.
\newblock Sampling table configurations for the hierarchical
  {P}oisson-{D}irichlet process.
\newblock In {\em The European Conference on Machine Learning and Principles
  and Practice of Knowledge Discovery in Databases (ECML-PKDD)}, pages
  296--311. 2011.

\bibitem[CDB12]{ChenDB:ICML12}
C.~Chen, N.~Ding, and W.~Buntine.
\newblock Dependent hierarchical normalized random measures for dynamic topic
  modeling.
\newblock In {\em International conference on machine learning (ICML)}. 2012.

\bibitem[ELP03]{EgazziniLP:AST03}
E.~R. Egazzini, A.~Lijoi, and I.~Pr\"{u}nster.
\newblock Distributional results for means of normalized random measures with
  independent increments.
\newblock {\em The Annals of Statistics}, 31(2):560--585, 2003.

\bibitem[FK72]{FergusonK:AMS72}
T.S. Ferguson and M.J. Klass.
\newblock A representation of independent increment processes without Gaussian
  component.
\newblock {\em The Annals of Mathematical Statistics}, 43(5):1634--1643, 1972.

\bibitem[GKS11]{GriffinKS:11}
J.~E. Griffin, M.~Kolossiatis, and M.~F.~J. Steel.
\newblock Comparing distributions using dependent normalized random measure
  mixtures.
\newblock {\em working paper}, 2011.

\bibitem[GS09]{GriffinS:09}
J.~E. Griffin and M.~F.~J Steel.
\newblock Time-dependent stick-breaking processes.
\newblock {\em working paper}, 2009.

\bibitem[GW11]{GriffinW:JCGS11}
J.E. Griffin and S.G. Walker.
\newblock Posterior simulation of normalized random measure mixtures.
\newblock {\em Journal of Computational and Graphical Statistics},
  20(1):241--259, 2011.

\bibitem[Jam05]{James:AST05}
L.~F. James.
\newblock {Bayesian {P}oisson process partition calculus with an application to
  {B}ayesian {L}\'{e}vy moving averages}.
\newblock {\em The Annals of Statistics}, 33(4):1771--1799, 2005.

\bibitem[JLP06]{JamesLP:SJS06}
L.F. James, A.~Lijoi, and I.~Pr$\ddot{u}$nster.
\newblock Conjugacy as a distinctive feature of the {D}irichlet process.
\newblock {\em Scandinavian Journal of Statistics}, 33:105--120, 2006.

\bibitem[JLP09]{JamesLP:SJS09}
L.F. James, A.~Lijoi, and I.~Pr$\ddot{u}$nster.
\newblock Posterior analysis for normalized random measures with independent
  increments.
\newblock {\em Scandinavian Journal of Statistics}, 36:76--97, 2009.

\bibitem[Kin67]{Kingman:PJM67}
J.F.C. Kingman.
\newblock Completely random measures.
\newblock {\em Pacific Journal of Mathematics}, 21(1):59--78, 1967.

\bibitem[Kin93]{Kindman:PoiP10}
J.F.C. Kingman.
\newblock {\em Poisson Processes}.
\newblock Oxford University Press, 1993.

\bibitem[LGF10]{LinGF:NIPS10}
D.~Lin, E.~Grimson, and J.~Fisher.
\newblock Construction of dependent {D}irichlet processes based on {P}oisson
  processes.
\newblock In {\em Neural Information Processing Systems (NIPS)}. 2010.

\bibitem[LMP07]{LijoiMP:JRSS07}
A.~Lijoi, R.H. Mena, and I.~Pr$\ddot{u}$nster.
\newblock Controlling the reinforcement in {B}ayesian non-parametric mixture
  models.
\newblock {\em Journal of Royal Statistical Society B}, 69(4):715--740, 2007.

\bibitem[LP10]{LijoiP:beyondDP09}
A.~Lijoi and I.~Pr$\ddot{u}$nster.
\newblock {\em {Models beyond the {D}irichlet process}}.
\newblock Cambridge University Press, 2010.

\bibitem[Mac99]{MacEachern:Bayes99}
S.~N. MacEachern.
\newblock Dependent nonparametric processes.
\newblock In {\em Proceedings of the Section on Bayesian Statistical Science}.
  1999.

\bibitem[Mac00]{MacEachern:DDP00}
S.~N. MacEachern.
\newblock Dependent dirichlet processes.
\newblock Technical report, Ohio State University, 2000.

\bibitem[PY97]{PitmanYor97}
J.~Pitman and M.~Yor.
\newblock The two-parameter {P}oisson-{D}iriclet distribution derived from a
  stable subordinator.
\newblock {\em Annals of Probability}, 25(2):855--900, 1997.

\bibitem[TJBB06]{TehJorBea2006}
Y.W. Teh, M.I. Jordan, M.J. Beal, and D.M. Blei.
\newblock Hierarchical {D}irichlet processes.
\newblock {\em Journal of the American Statistical Association},
  101(476):1566--1581, 2006.

\end{thebibliography}
\bibliographystyle{alpha}

\newpage
\appendix

\section{Proofs}\label{sec:proof}

\begin{proof}[Proof of Lemma~\ref{lem:scale}]
We have $\nu(\mathrm{d}x,\mathrm{d}t/\lambda)$.
Doing a change of variables $t'=t/\lambda$ and some rearranging
of the L\'{e}vy-Khintchine formula
yields the following:
\[
\mathbb{E}\left[e^{-\int_{\mathbb{X}} (\lambda f(x)) \left(\tilde\mu(\mathrm{d}x)/\lambda\right)}
 \right] ~=~
e^{-\int_{\mathbb{R}^+ \times \mathbb{X}} \left( 1-e^{-t'\, (\lambda f(x))}\right)
\lambda\nu(\mathrm{d}x,\mathrm{d}t')}
\]
Since $\tilde\mu(\mathrm{ d}x)/\lambda$ normalises to the same
measure as $\tilde\mu(\mathrm{ d}x)$, and saying
something holds for any $f(x)$ is the same as saying
something holds for any $\lambda f(x)$
(when $\lambda>0$),
the result follows.
\end{proof}

\begin{proof}[Proof of Lemma~\ref{lemma:sliceimf}]
  First, for the infinite mixture model, we have infinite number of components, thus
  given the observed data $(x_1, \cdots, x_N)$ and their allocation indicators $\vec{s}$, 
  the model likelihood is
  $$f_{\mu}(\vec{x}, \vec{s}|\vec{\theta}, \vec{J}) = \prod_{i = 1}^N\frac{J_{s_i}}{J_+}g_0(x_i|\theta_{s_i}),$$
  where $J_+ = \sum_{k=1}^\infty J_k$.
  Now introduce the slice auxiliary variables $\vec{u}$ for each data, such that we
  only consider the components whose jumps are larger than a threshold $u_i$ for data $x_i$,
  in this auxiliary space we have
  $$f_{\mu}(\vec{x}, \vec{u}, \vec{s} | \vec{\theta}, \vec{J}) = \frac{1}{J_+^N}\prod_{i = 1}^N 1(u_i < J_{s_i})g_0(x_i|\theta_{s_i}).$$
  Now using the fact that $$\frac{1}{J_+^N} = \frac{\int_0^\infty U_N^{N-1}\exp\left\{-U_NJ_+\right\}\mathrm{d}U_N}{\Gamma(N)},$$
  after introducing the auxiliary variable $U_N$, we have
  $$f_{\mu}(\vec{x}, \vec{u}, \vec{s}, U_N | \vec{\theta}, \vec{J}) \propto U_N^{N-1}\exp\left\{-U_NJ_+\right\} 
    \prod_{i = 1}^N 1(u_i < J_{s_i})g_0(x_i|\theta_{s_i}).$$
  Further decomposing $J_+$ as $$J_+ = J^* + \sum_{k = 1}^K J_k,$$
  where $K$ is the number of jumps which are large than a threshold $L$, $J^* = \sum_{k = K + 1}^\infty J_k$, then we get
  \begin{eqnarray}
    &&f_{\mu}(\vec{x}, \vec{u}, \vec{s}, U_N | \vec{\theta}, J_1, \cdots, J_K, K) \nonumber\\
      &\propto& U_N^{n-1}\exp\left\{-U_N\sum_{k = 1}^KJ_k\right\} 
      \mathbb{E}\left[\exp\left\{-U_NJ^*\right\}\right]\prod_{i = 1}^N 1(u_i < J_{s_i})g_0(x_i|\theta_{s_i}).
  \end{eqnarray}
  Now use the L\'{e}vy-Khintchine representation of a L\'{e}vy process (\ref{eq:levy-khintchine}) 
  to evaluate $\mathbb{E}\left[\exp\left\{-U_NJ^*\right\}\right]$, we get
  \begin{eqnarray}
    &&f_{\mu}(\vec{x}, \vec{u}, \vec{s}, U_N | \vec{\theta}, J_1, \cdots, J_K, K)
      \propto U_N^{N-1}\exp\left\{-U_N\sum_{k = 1}^KJ_k\right\} \nonumber\\ 
      &&\exp\left\{-M\int_0^L\left(1 - \exp\left\{-U_Nt\right\}\right)\rho_\eta(t)\mathrm{d}t\right\}
      \prod_{i = 1}^N 1(u_i < J_{s_i})g_0(x_i|\theta_{s_i}).
  \end{eqnarray}
  Now combining with the priors
  $$p(J_1, \cdots, J_K) = \prod_{k = 1}^K\frac{\rho_\eta(J_k)}{\int_L^\infty\rho_\eta(t)\mathrm{d}t},$$
  $$K\sim\text{Poisson}(M\int_L^\infty\rho_{\eta}(\mathrm{d}t)),\hspace{1cm}\theta_k\sim h(\theta_k),$$
  the result follows.
\end{proof}

\begin{proof}[Proof of Theorem~\ref{thm:postngg}]
The definition for $\tau_n(u)$ comes from [Proposition~1]\cite{JamesLP:SJS09}.
The posterior marginal of Equation~(\ref{eq:postngg})
comes from [Proposition~3]\cite{JamesLP:SJS09}
and is simplified using the change of variables
$t=M\left(1+u\right)^{a}$.
For the predictive posterior, the weights 
in Equation~(\ref{eq:omegadef}) are derived directly from
the posterior.
The posterior proportionality
for $p(U_N=u|\vec{X},\mbox{NGG}(a,M,H(\cdot)))$ 
discards terms not containing $u$.
\end{proof}

\begin{proof}[Proof of Corollary~\ref{cor:pdd}]
Marginalise out $M$ from the posterior
of Equation~(\ref{eq:postngg}) using the alternative 
definition of $T^{N,K}_{a,M}$.
It can be seen this yields 
the posterior of a Poisson-Dirichlet distribution
with discount parameter $a$ and concentration
parameter $b$.
Since the posteriors are equivalent for all data, 
the distributions
are equivalent almost surely.
\end{proof}

\begin{proof}[Proof of Lemma~\ref{lem:eval}]
Equation~(\ref{eq:le1}) holds by noticing
$ 1-\left(\frac{M}{t}\right)^{1/a}\le 1$.
To prove Equation~(\ref{eq:le5}), first prove
\begin{equation}
\label{eq:pr5}
T^{N,K}_{a,M} ~=~
\sum_{n=0}^{N-1} {N-1 \choose n} \left(-M^{1/a}\right)^{n} \Gamma\left(K-\frac{n}{a},M\right)
~~~~~~~~~~~~~~~~~
\forall N,K\in \mathbb{N}^+~,
\end{equation}
This holds by expanding
the term $\left( 1-\left(\frac{M}{t}\right)^{1/a}\right)^{N-1}$
using the binomial expansion and absorbing the powers
$1/t^{n/a}$ into the $t^{K-1}$ as an incomplete Gamma integral.

Now manipulate Equation~(\ref{eq:pr5}).
Expand $\Gamma\left(K-\frac{n}{a},M\right)$ using the
recursion for the incomplete gamma function,
which can be applied when $K-\frac{n}{a} \ne 1$,
yields
\begin{eqnarray*}
&=&\sum_{n=0}^{N-1} {N-1 \choose n} \left(-M^{1/a}\right)^{n}
   \left(
  \left(K-1-\frac{n}{a}\right) \Gamma\left(K-\frac{n}{a},M\right)
  + M^{K-1-\frac{n}{a}} e^{-M} \right)
\\
&=&\sum_{n=0}^{N-1} {N-1 \choose n} \left(-M^{1/a}\right)^{n}
  \left(K-1-\frac{n}{a}\right) \Gamma\left(K-1-\frac{n}{a},M\right)
  + M^{K-1} e^{-M}
      \sum_{n=0}^{N-1} {N-1 \choose n}(-1)^{n}
\end{eqnarray*}
The second sum is a binomial expansion of $(1-1)^{N-1}$ and therefore disappears.
Apply this step repeatedly to get Equation~(\ref{eq:le5}).
Note for the chain of expansions to be done, it must be the case that 
$k-\frac{n}{a}\ne 1$ for $n=0,...,N-1$ and $k=2,...,K$,
so $\frac{n}{a}\ne k$ for $n=0,...,N-1$ and $k=1,...,K-1$,
so $a \ne \frac{n}{k}$ for $n=1,...,N-1$ and $k=1,...,K-1$,
so $k a$ cannot be integral for $k=1,...,K-1$.

Equation~(\ref{eq:le6}) holds by applying the
integration by parts formula on the
terms
$A(t)=\left( 1-\left(\frac{M}{t}\right)^{1/a}\right)^{N-1}$
and $B(t)=t^{K+1}e^{-t}$ and rearranging the resultant integrals
using
$\left(\frac{M}{t}\right)^{1/a} =
1 - \left( 1- \left(\frac{M}{t}\right)^{1/a}\right)$
to arrive back at terms representable.
Note that 
$\left.A(t)B(t)\right|_M^\infty = 0$.
Equation~(\ref{eq:le7}) holds by expanding
\[
\left( 1-\left(\frac{M}{t}\right)^{1/a}\right)^{(N+1)-1}
~=~
\left( 1-\left(\frac{M}{t}\right)^{1/a}\right)^{N-1}
-\left( 1-\left(\frac{M}{t}\right)^{1/a}\right)^{N-1}
 \left(\frac{M}{t}\right)^{1/a}
\]
inside the integral definition of $T_{a,M}^{N+1,K}$.
\end{proof}

\begin{proof}[Proof of Theorem~\ref{thm:cpostngg}]
The posterior marginal of Equation~(\ref{eq:cpostngg})
comes from [Proposition~4]\cite{JamesLP:SJS09}.
Although the denominator is difficult to evaluate,
and it can be derived through a recursion,
the easiest way is simply to normalise
the renumerator.  Sum over 
$\left(Ma\left(1+u\right)^a\right)^K\prod_{k=1}^K  (1-a)_{n_k-1}$ for all length $K$ partitions
$(n_1,n_2,...,n_K)$ yields
$\left(Ma\left(1+u\right)^a\right)^K S^N_{K,a}$
and the result follows by again summing over $K$.
The predictive posterior, as before, follows directly from the
posterior marginal.
The posterior proportionality for $U_N$,
$p(U_N=u|\vec{X},\mbox{NGG}(a,M,H(\cdot)))$,
comes from [Proposition~4]\cite{JamesLP:SJS09} after
discarding terms not containing $u$.
The normalising constant is obtained using the methods of
Theorem~\ref{thm:postngg}.
\end{proof}

\begin{proof}[Proof of Theorem~\ref{thm:grif}]
This comes from  \cite{GriffinW:JCGS11} at the end of Section~3,
and includes the prior on $K_L,J_1,...,J_{K_L}$ described in
Section~4.  The mixture model component
$k(y_i|\theta_{s_i})$ has also been stripped and the slice
sampling variables marginalised out.
\end{proof}

\begin{proof}[Proof of Corollary~\ref{cor:grif}]
Equation~(\ref{eq:corgrif1}) can be seen to hold true
since conditioning it on $U_N=u$ and $\vec{X}$
yields respectively
Equation~(\ref{eq:cpostngg}) and Equation~(\ref{eq:cpostngg:u}).

Prove Equation~(\ref{eq:corgrif2}) as follows.
Now the likelihood 
$p(X_1,...,X_N\,|\,\vec\mu)$ is given by $\frac{1}{T^N}\prod_{k=1}^K J_k^{n_k}$
where $T=\sum_{k=1}^\infty J_k$, the total sum of jumps.
We first simplify this using a latent relative mass variable.
Introduce the variable $U=\gamma/T$ for $\gamma \sim \Gamma(N,1)$,
Adding the term $p(\gamma)\mbox{d}\gamma$ to the likelihood  
and making a change of variable using $U=\gamma/T$,
yields 
\begin{eqnarray*}
p(\gamma)\mbox{d}\gamma&=&\frac{1}{\Gamma(N)} (U T)^{N-1}e^{-U T} T\mbox{d}U\\
&=& T^N \frac{U^{N-1}e^{-U T}}{\Gamma(N)}\mbox{d}U
\end{eqnarray*}
Thus
\[
p(X_1,...,X_N,U\,|\,\vec\mu)
~=~ \frac{U^{N-1}e^{-U T_0}}{\Gamma(N)} \prod_{k=1}^K e^{-u J_k}J_k^{n_k}
\]
where $T_0$ is the total of jumps for the unobserved data.
Now while the prior for the jumps $\rho_{a,M}(t)$
is unnormalised, with observed data it becomes normalised.
Thus 
\begin{eqnarray*}
\lefteqn{p\left(X_1,...,X_N,U,J_1,...,J_K\,|\,T_0,\mbox{NGG}\left(a,M,H(\cdot)\right)\right) ~\propto~}&& \\
&&
 \frac{U^{N-1}e^{-U T_0}}{\Gamma(N)} 
            \prod_{k=1}^K e^{-(1+u) J_k}J_k^{n_k-a-1} H(X^*_k)
\end{eqnarray*}
Taking the expectation of $e^{-U T_0}$ over the remainder term of
the measure $\vec\mu$ corresponds to the 
L\'{e}vy-Khintchine formula, and thus
\begin{eqnarray*}
\lefteqn{p\left(X_1,...,X_N,U,J_1,...,J_K\,|\,\mbox{NGG}\left(a,M,H(\cdot)\right)\right) ~\propto~}&& \\
&&
 \frac{U^{N-1}e^{-M\left((1+u)^a-1\right)}}{\Gamma(N)} 
            \prod_{k=1}^K e^{-(1+u) J_k}J_k^{n_k-a-1} H(X^*_k)
\end{eqnarray*}
Adding in the terms for 
$p\left(X_1,...,X_N,U\,|\,\mbox{NGG}\left(a,M,H(\cdot)\right)\right)$
yields the results and reveals the normalisation constant.
\end{proof}

\begin{proof}[Proof of Lemma~\ref{lemma:covNRM}]
 This uses a similar technique to that of Theorem~1 in~\cite{GriffinKS:11}. 
Using the identity
$1/b = \int_0^\infty e^{-v\,b}\mathrm{d}v$ we get
  \begin{eqnarray}\label{eq:meanNRM}
    &&\mathbb{E}\left[\mu(B)\right] = \mathbb{E}\left[\frac{\tilde{\mu}(B)}{\tilde{\mu}(\mathbb{X})}\right] \nonumber\\
      &=& \int_0^\infty\mathbb{E}\left[\tilde{\mu}(B)\exp\left\{-v\tilde{\mu}(B)\right\}\right]
	    \mathbb{E}\left[\exp\left\{-v\tilde{\mu}(\mathbb{X}\setminus B)\right\}\right]\mathrm{d}v ~.
  \end{eqnarray}

  According to the L\'{e}vy-Khintchine representation of $\tilde{\mu}$ and definition (\ref{eq:laplace-exponent}), we have
  \begin{eqnarray}\label{eq:LevyQ}
    &&\mathbb{E}\left[\exp\left\{-v\tilde{\mu}(B)\right\}\right] = \exp\left\{-P(B)M\tilde\psi_\eta(v)\right\} \label{eq:LevyQ1}\\
    &&\mathbb{E}\left[\tilde{\mu}(B)\exp\left\{-v\tilde{\mu}(B)\right\}\right] = 
	    -\mathbb{E}\left[\frac{\mathrm{d}}{\mathrm{d}v}\exp\left\{-v\tilde{\mu}(B)\right\}\right] \nonumber\\
    &=&P(B)M\tilde\psi_{\eta}^\prime(v)\exp\left\{-P(B)M\tilde\psi_\eta(v)\right\} \label{eq:LevyQ2}\\
    &&\mathbb{E}\left[\tilde{\mu}(B)^2\exp\left\{-v\tilde{\mu}(B)\right\}\right] = 
	    \mathbb{E}\left[\frac{\mathrm{d}}{\mathrm{d}v^2}\exp\left\{-v\tilde{\mu}(B)\right\}\right] \nonumber\\
    &=&\left(P(B)^2M^2\left(\tilde\psi_{\eta}^\prime(v)\right)^2 - P(B)M\tilde\psi_\eta^{\prime\prime}(v)\right)
    \exp\left\{-P(B)M\tilde\psi_\eta(v)\right\} \label{eq:LevyQ3}
  \end{eqnarray}

  Substituting (\ref{eq:LevyQ1}) and (\ref{eq:LevyQ2}) into (\ref{eq:meanNRM}) and using the fact in (\ref{eq:L0}),
  after simplifying we have
  $$\mathbb{E}\left[\mu(B)\right] = P(B).$$

  Since $\text{Var}\left(\mu(B)\right) = \mathbb{E}\left[\mu(B)^2\right] - \left(\mathbb{E}\left[\mu(B)\right]\right)^2$,
  and the last term is equal to $\left(P(B)\right)^2$, we now deal with the first term.
  \begin{eqnarray}\label{eq:varQ}
    &&\mathbb{E}\left[\mu(B)^2\right] = \mathbb{E}\left[\frac{\tilde{\mu}(B)^2}{\tilde{\mu}(\mathbb{X})^2}\right] \nonumber\\
    &=& \int_0^\infty\!\!\int_0^\infty\mathbb{E}\left[\tilde{\mu}(B)^2\times
	      \exp\left\{-v_1\tilde{\mu}(\mathbb{X}) - v_2\tilde{\mu}(\mathbb{X})\right\}\right]
	      \mathrm{d}v_1\mathrm{d}v_2 \\
    &=& \int_0^\infty\!\!\int_0^\infty\mathbb{E}\left[\tilde{\mu}(B)^2\exp\left\{-(v_1 + v_2)\tilde{\mu}(B)\right\}\right]
    \mathbb{E}\left[\exp\left\{-(v_1 + v_2)\tilde{\mu}(\mathbb{X}\setminus B)\right\}\right]\mathrm{d}v_1\mathrm{d}v_2 \nonumber
  \end{eqnarray}

  Substituting (\ref{eq:LevyQ1})(\ref{eq:LevyQ3}) into (\ref{eq:varQ}) we have
  \begin{eqnarray} \label{eq:varQ1}
    (\ref{eq:varQ}) &=& \int_0^\infty\!\!\int_0^\infty \left[P(B)^2M^2\left(\tilde\psi_\eta^\prime(v_1 + v_2)\right)^2
		    - P(B)M\tilde\psi_{\eta}^{\prime\prime}(v_1 + v_2)\right] \nonumber\\
		    &&\exp\left\{-M\tilde\psi_\eta(v_1 + v_2)\right\}\mathrm{d}v_1\mathrm{d}v_2~.
  \end{eqnarray}
  Furthermore, let $v = v_1 + v_2, B = \mathbb{X}$ in (\ref{eq:LevyQ3}), after integrating out $v_1, v_2$ in $[0, \infty]$, we have
  \begin{eqnarray} \label{eq:equation1}
    &&\hspace{-1cm}\int_0^\infty\!\!\int_0^\infty M^2\left(\tilde\psi_\eta^\prime(v_1 + v_2)\right)^2
	  \exp\left\{-M\tilde\psi_\eta(v_1 + v_2)\right\}\mathrm{d}v_1\mathrm{d}v_2 \\
    &\hspace{-0.5cm}=&\hspace{-0.5cm} 1 + \int_0^\infty\!\!\int_0^\infty M\tilde\psi_\eta^{\prime\prime}(v_1 + v_2)
	  \exp\left\{-M\tilde\psi_\eta(v_1 + v_2)\right\}\mathrm{d}v_1\mathrm{d}v_2 \nonumber
  \end{eqnarray}
  Substitute (\ref{eq:equation1}) into (\ref{eq:varQ1}) and simplify we get 
  \begin{eqnarray}
      &&\text{Var}(\mu(B)) = \nonumber\\
      && P(B)(1 - P(B))M
    \int_0^\infty\!\!\int_0^\infty-\tilde\psi_\eta^{\prime\prime}(v_1 + v_2)\exp\left\{-M\tilde\psi_\eta(v_1 + v_2)\right\}\mathrm{d}v_1\mathrm{d}v_2~.
  \end{eqnarray}
  Now use a change of variables, let $v_1^\prime = v_1, v_2^\prime = v_1 + v_2$ and simplify we get the result of (\ref{eq:varNRM}).
\end{proof}

\begin{proof}[Proof of Theorem~\ref{theo:covSuper}]
 From the definition we have
  \begin{eqnarray}
   &&\hspace{-1cm}\text{Cov}\left(\mu_k(B), \mu(B)\right) =
	  \sum_{i=1}^n\text{Cov}\left(\frac{M_i}{\sum_jM_j}\mu_i(B), \mu_k(B)\right) \nonumber\\
    &=& \text{Cov}\left(\frac{M_k}{\sum_jM_j}\mu_k(B), \mu_k(B)\right)
	+ \sum_{i\neq k} \text{Cov}\left(\frac{M_i}{\sum_jM_j}\mu_i(B), \mu_k(B)\right) \label{eq:superCov1} \\
    &=& \mathbb{E}\left[\frac{\tilde{\mu}_k(B)^2}{\left(\sum_j\tilde{\mu}_j(\mathbb{X})\right)\tilde{\mu}_k(\mathbb{X})}\right]
    - \mathbb{E}\left[\frac{\tilde{\mu}_k(B)}{\sum_j\tilde{\mu}_j(\mathbb{X})}\right]\mathbb{E}\left[\frac{\tilde{\mu}_k(B)}{\tilde{\mu}_k(\mathbb{X})}\right] \nonumber\\
    &+& \sum_{i\neq k}\left\{\mathbb{E}\left[\frac{\tilde{\mu}_i(B)\tilde{\mu}_k(B)}{\left(\sum_j\tilde{\mu}_j(\mathbb{X})\right)\tilde{\mu}_k(\mathbb{X})}\right]
    - \mathbb{E}\left[\frac{\tilde{\mu}_i(B)}{\sum_j\tilde{\mu}_j(\mathbb{X})}\right]\mathbb{E}\left[\frac{\tilde{\mu}_k(B)}{\tilde{\mu}_k(\mathbb{X})}\right]\right\} \nonumber
  \end{eqnarray}
  Note that for the Dirichlet process, the last $n-1$ terms of (\ref{eq:superCov1}) vanish because $\mu_i$'s
  are independent from their total mass $M_i$'s, but this is not the case for general NRMs.
  Now we calculate these term by term.

  For the first term, we have
  \begin{eqnarray}\label{eq:cov_first}
    &&\mathbb{E}\left[\frac{\tilde{\mu}_k(B)^2}{\left(\sum_j\tilde{\mu}_j(\mathbb{X})\right)\tilde{\mu}_k(\mathbb{X})}\right] \nonumber\\
    &=& \int_0^\infty\!\!\int_0^\infty\mathbb{E}\left[\tilde{\mu}_k(B)^2\exp\left\{-v_1(\sum_j\tilde{\mu}_j)(\mathbb{X}) - v_2\tilde{\mu}_k(\mathbb{X})\right\}\right]\mathrm{d}v_1\mathrm{d}v_2 \nonumber\\
    &=& \int_0^\infty\!\!\int_0^\infty\mathbb{E}\left[\tilde{\mu}_k(B)^2\exp\left\{-(v_1 + v_2)\tilde{\mu}_k(B)\right\}\right]
    \mathbb{E}\left[\exp\left\{-(v_1 + v_2)\tilde{\mu}_k(\mathbb{X}\setminus B)\right\}\right] \nonumber\\
    &&\mathbb{E}\left[\exp\left\{-v_1(\sum_{j\neq k}\tilde{\mu}_j(\mathbb{X}))\right\}\right]\mathrm{d}v_1\mathrm{d}v_2 \nonumber\\
    &=& \int_0^\infty\!\!\int_0^{v_2}\left(P(B)^2M_k^2\tilde\psi_\eta^\prime(v_1)^2 - P(B)M_k\tilde\psi_\eta^{\prime\prime}(v_1)\right)
    \exp\left\{-M_k\tilde\psi_\eta(v_1)\right\} \nonumber\\
    &&\exp\left\{-(\sum_{j\neq k}M_j)\tilde\psi_\eta(v_2)\right\}\mathrm{d}v_1\mathrm{d}v_2 \nonumber\\
    &=& P(B)M_k\int_0^\infty \gamma(M_k, P(B), v)\exp\left\{-(\sum_{j\neq k}M_j)\tilde\psi_\eta(v)\right\}\mathrm{d}v
  \end{eqnarray}

  For the second term, we have
  \begin{eqnarray}
    \mathbb{E}\left[\frac{\tilde{\mu}_k(B)}{\sum_j\tilde{\mu}_j(\mathbb{X})}\right]\mathbb{E}\left[\frac{\tilde{\mu}_k(B)}{\tilde{\mu}_k(\mathbb{X})}\right]
    &=& P(B)\int_0^\infty\mathbb{E}\left[\tilde{\mu}_k(B)\exp\left\{-v\sum_j\tilde{\mu}_j(\mathbb{X})\right\}\right]\mathrm{d}v \nonumber\\
    &=& P(B)^2M_k\int_0^\infty \tilde\psi_\eta^\prime(v)\exp\left\{-(\sum_jM_j)\tilde\psi_\eta(v)\right\}\mathrm{d}v \nonumber\\
    &=& \frac{P(B)^2M_k\exp\left\{-\left(\sum_jM_j\right)\tilde\psi_\eta(0)\right\}}{\sum_jM_j} \nonumber\\
    &=& \frac{P(B)^2M_k}{\sum_jM_j}
  \end{eqnarray}
  For the third term, similarly
  \begin{eqnarray}\label{eq:cov_third}
    &&\mathbb{E}\left[\frac{\tilde{\mu}_i(B)\tilde{\mu}_k(B)}{\left(\sum_j\tilde{\mu}_j(\mathbb{X})\right)\tilde{\mu}_k(\mathbb{X})}\right] \nonumber\\
    &=& \int_0^\infty\!\!\int_0^\infty\mathbb{E}\left[\tilde{\mu}_i(B)\tilde{\mu}_k(B)\exp\left\{-v_1(\sum_j\tilde{\mu}_j)(\mathbb{X}) - v_2\tilde{\mu}_k(\mathbb{X})\right\}\right]\mathrm{d}v_1\mathrm{d}v_2 \nonumber\\
    &=& \int_0^\infty\!\!\int_0^\infty\mathbb{E}\left[\tilde{\mu}_k(B)\exp\left\{-(v_1 + v_2)\tilde{\mu}_k(B)\right\}\right]
    \mathbb{E}\left[\exp\left\{-(v_1 + v_2)\tilde{\mu}_k(\mathbb{X}\setminus B)\right\}\right] \nonumber\\
    &&\mathbb{E}\left[\tilde{\mu}_i(B)\exp\left\{-v_1\tilde{\mu}_i(B)\right\}\right]
    \mathbb{E}\left[\exp\left\{-v_1\tilde{\mu}_i(\mathbb{X}\setminus B)\right\}\right]\mathbb{E}\left[\exp\left\{-v_1(\sum_{j\neq \{i, k\}}\tilde{\mu}_j(\mathbb{X}))\right\}\right]\mathrm{d}v_1\mathrm{d}v_2 \nonumber\\
    &=& \int_0^\infty\!\!\int_0^\infty P(B)M_k\tilde\psi_\eta^\prime(v_1 + v_2)
    \exp\left\{-M_k\tilde\psi_\eta(v_1 + v_2)\right\} \nonumber\\
    &&P(B)M_i\tilde\psi_\eta^\prime(v_1)
    \exp\left\{-M_i\tilde\psi_\eta(v_1)\right\} \nonumber\\
    &&\exp\left\{-(\sum_{j\neq \{i, k\}}M_j)\tilde\psi_\eta(v_1)\right\}\mathrm{d}v_1\mathrm{d}v_2 \nonumber\\
    &=& P(B)^2M_iM_k\int_0^\infty \tilde\psi_\eta^\prime(v_1)
    \exp\left\{-(\sum_{j\neq k}M_j)\tilde\psi_\eta(v_1)\right\}
    \int_0^{v_1}\tilde\psi_\eta^\prime(v_2)\exp\left\{-M_k\tilde\psi_\eta(v_2)\right\}\mathrm{d}v_2\mathrm{d}v_1 \nonumber\\
    &=& P(B)^2M_i\left(\frac{1}{\sum_{j\neq k}M_j} - \frac{1}{\sum_{j}M_j}\right)\exp\left\{-(\sum_jM_j)\tilde\psi_\eta(0)\right\} \nonumber\\
    &=& P(B)^2M_i\left(\frac{1}{\sum_{j\neq k}M_j} - \frac{1}{\sum_{j}M_j}\right)
  \end{eqnarray}
  The fourth term is similar to the second term, and is equal to
  \begin{eqnarray}\label{eq:fourthterm}
    \mathbb{E}\left[\frac{\tilde{\mu}_i(B)}{\sum_j\tilde{\mu}_j(\mathbb{X})}\right]\mathbb{E}\left[\frac{\tilde{\mu}_k(B)}{\tilde{\mu}_k(\mathbb{X})}\right]
    &=& \frac{P(B)^2M_i\exp\left\{-\left(\sum_jM_j\right)\tilde\psi_\eta(0)\right\}}{\sum_jM_j} \nonumber\\
    &=& \frac{P(B)^2M_i}{\sum_jM_j}
  \end{eqnarray}
  The result follows.
\end{proof}

\begin{proof}[Proof of Theorem~\ref{theo:covSub}]
 By subsampling, we obtain two independent NRMs $\mu^q$ and $\mu_0^q$, corresponding to those points selected
  and those rejected by the independent Bernoulli trials, respectively. 

  We denote the total mass of the corresponding unnormalized $\mu^q$ as $M_q$, 
  and $M_q^0$ for $\mu_0^q$. From the definition of subsampling, we have
  $$M_q := (q\tilde{\mu})(\mathbb{X}) = \int_{\mathbb{X}}q(x)\tilde{\mu}(x)\mathrm{d}x,$$
  $$M_q^0 = M - M_q.$$

  Furthermore, notice that
  the original NRM $\mu$ is the superposition of $\mu^q$ and $\mu_0^q$. Thus according to Theorem~\ref{theo:covSuper},
  the covariance between $\mu$ and $\mu^q$ is
  $$P(B)M_q\int_0^\infty \gamma(M_q, P(B), v)
      \exp\left\{-(M - M_q)\tilde\psi_\eta(v)\right\}\mathrm{d}v
   + P(B)^2\left(\frac{2M_q - M}{M}\right)~,$$
\end{proof}

\begin{proof}[Proof of Theorem~\ref{theo:covPoiTrS}]
  Note that $\tilde{\mu}$ and $\tilde{\mu}^\prime$ 
  are not independent, thus they can not be separated
  when taking the expectation. Now let $A$ and $B$ are
  defined as in the theorem, then:
  \begin{eqnarray}\label{eq:covQ}
    &&\mathbb{E}\left[\mu(B)\left((T\mu)(B)\right)\right] = \mathbb{E}\left[\frac{\tilde{\mu}(B)}{\tilde{\mu}(\mathbb{X})}
	      \frac{\tilde{\mu}^\prime(B)}{\tilde{\mu}^\prime(\mathbb{X})}\right]
	      = \mathbb{E}\left[\frac{\tilde{\mu}(B)}{\tilde{\mu}(\mathbb{X})}
	      \frac{\tilde{\mu}(A)}{\tilde{\mu}(\mathbb{X})}\right] \nonumber\\
    &=&\int_0^\infty\!\!\int_0^\infty\mathbb{E}\left[\tilde{\mu}(B)\tilde{\mu}(A)\times
	      \exp\left\{-(v_1 + v_2)\tilde{\mu}(\mathbb{X})\right\}\right]
	      \mathrm{d}v_1\mathrm{d}v_2 \nonumber\\
    &=&\int_0^\infty\!\!\int_0^\infty\mathbb{E}\left[\tilde{\mu}(B)\exp\left\{-(v_1 + v_2)\tilde{\mu}(B)\right\}\right] \nonumber\\
	      &&\hspace{1.3cm}\mathbb{E}\left[\tilde{\mu}(A)\exp\left\{-(v_1 + v_2)\tilde{\mu}(A)\right\}\right] \nonumber\\
	      &&\hspace{1.3cm}\mathbb{E}\left[\exp\left\{-(v_1 + v_2)\tilde{\mu}(X/\{A\cup B\})\right\}\right]\mathrm{d}v_1\mathrm{d}v_2 \nonumber\\
    &=&\int_0^\infty\!\!\int_0^\infty P(B)M\tilde\psi_\eta^\prime(v_1 + v_2)\exp\left\{-P(B)M\tilde\psi_\eta(v_1 + v_2)\right\} \nonumber\\
	      &&\hspace{1.3cm}P(A)M\tilde\psi_\eta^\prime(v_1 + v_2)\exp\left\{-P(A)M\tilde\psi_\eta(v_1 + v_2)\right\} \nonumber\\
	      &&\hspace{1.3cm}P(\mathbb{X}/\{A\cup B\})M\tilde\psi_\eta^\prime(v_1 + v_2)\exp\left\{-P(\mathbb{X}/\{A\cup B\})M\tilde\psi_\eta(v_1 + v_2)\right\}\mathrm{d}v_1\mathrm{d}v_2 \nonumber\\
    &=&P(A)P(B)M^2\int_0^\infty\!\!\int_0^{v_1}\tilde\psi_\eta^\prime(v_2)^2\exp\left\{-M\tilde\psi_\eta(v_2)\right\}\mathrm{d}v_2\mathrm{d}v_1 \nonumber
  \end{eqnarray}
  Then the covariance is:
  \begin{eqnarray}
    &&\text{Cov}\left(\mu(B), (T\mu)(B)\right) \nonumber\\
    &=& \mathbb{E}\left[\mu(B)\left((T\mu)(B)\right)\right] - \mathbb{E}\left[\mu(B)\right]\mathbb{E}\left[(T\mu)(B)\right] \nonumber\\
    &=& P(A)P(B) \nonumber\\
    &&\left(M^2\int_0^\infty\int_0^{v_1} \tilde\psi_{\eta}^\prime(v_2)^2\exp\left\{-M\tilde\psi_\eta(v_2)\right\}\mathrm{d}v_2\mathrm{d}v_1 - 1\right)
  \end{eqnarray}
\end{proof}

\begin{proof}[Proof of Lemma~\ref{theo:PosSup}]
  From the existing of Poisson processes, each L\'{e}vy measure $\nu_i(\mathrm{d}t, \mathrm{d}x)$ corresponds
  to a Poisson random measure $N_i(\mathrm{d}t, \mathrm{d}x)$ with 
  $$\mathbb{E}\left[N_i(\mathrm{d}t, \mathrm{d}x)\right] = \nu_i(\mathrm{d}t, \mathrm{d}x).$$ Also we have $\forall i$,
  $$\tilde{\mu}_i(\mathrm{d}x) = \int_{0}^\infty tN_i(\mathrm{d}t, \mathrm{d}x).$$
  Thus from (\ref{eq:superp}) we have $$\tilde{\mu}(\mathrm{d}x) = \int_{0}^\infty t\left(\sum_{i = 1}^n
  N_i(\mathrm{d}t, \mathrm{d}x)\right) = \int_{0}^\infty tN(\mathrm{d}t, \mathrm{d}x),$$
  where $N(\cdot) = \sum_{i = 1}^nN_i(\cdot)$ is again a Poisson random measure. Thus the L\'{e}vy intensity for $\tilde{\mu}(\cdot)$ is
  \begin{eqnarray}
    \nu(\mathrm{d}t, \mathrm{d}x) &=& \sum_{i = 1}^n\nu_i(\mathrm{d}t, \mathrm{d}x).
  \end{eqnarray}
  Because Theorem~1 in~\cite{JamesLP:SJS09} applies for any CRMs with L\'{e}vy measure $\nu(\mathrm{d}t, \mathrm{d}x)$,
  the Lemma is proved.
\end{proof}

\begin{proof}[Proof of Lemma~\ref{theo:Levysub}]
This follows by merging the impact of the
subsampling operation with the sampling step in
Lemma~\ref{lem:sampler}.
Suppose the L\'evy measure is in the form
$M\rho(\mathrm{d}t|x) H(\mathrm{d}x)$.
The infinitesimal rate at data point $x_i$ 
when sampling the jump is now
$q(x_i)M\rho(\mathrm{d}t|x)$.
Thus the L\'evy measure for the subsampled measure must be 
$M\rho(\mathrm{d}t|x) q(x)H(\mathrm{d}x)$.

This argument can be seen from the detailed derivation below.
 First note that $S^q(\tilde{\mu})$ is equivalent to
  \begin{equation}
    S^q(\tilde{\mu}) = \int_{R^+\times\mathbb{X}}z(\mathrm{d}x)sN(\mathrm{d}s, \mathrm{d}x).
  \end{equation}
  Let $B\in\mathbb{X}$, we divide $B$ into $n$ non-overlap patches and use $A_{nm}$
  to denote the $m$-th patch of them. So we have
  \begin{eqnarray}\label{eq:levysub}
    \mathbb{E}_{N(\cdot), z}\left[e^{-uS^q(\tilde{\mu})(B)}\right] &\stackrel{n\rightarrow\infty}{=}& \mathbb{E}_{N(\cdot), z}\left[e^{-\sum_{A_{nm}\in B}uz(A_{nm})s_{nm}N(A_{nm}, s_{nm})}\right] \nonumber\\
    &=& \mathbb{E}_{N(\cdot), z}\left[\prod_{A_{nm}\in B}e^{-uz(A_{nm})s_{nm}N(A_{nm}, s_{nm})}\right] \nonumber\\
    &=& \prod_{A_{nm}\in B}\mathbb{E}_{N(\cdot), z}\left[e^{-uz(A_{nm})s_{nm}N(A_{nm}, s_{nm})}\right] \nonumber\\
    &=& e^{\sum_{A_{nm}\in B}\log\left\{\mathbb{E}_{N(\cdot), z}\left[e^{-uz(A_{nm})s_{nm}N(A_{nm}, s_{nm})} - 1\right] + 1\right\}} \nonumber\\
    &\stackrel{(a)}{=}& e^{\sum_{A_{nm}\in B}\mathbb{E}_{N(\cdot), z}\left[e^{-uz(A_{nm})s_{nm}N(A_{nm}, s_{nm})} - 1\right]} \nonumber\\
    &\stackrel{(b)}{=}& e^{q\sum_{A_{nm}\in B}\mathbb{E}_{N(\cdot)}\left[e^{-us_{nm}N(A_{nm}, s_{nm})} - 1\right]} \nonumber\\
    &\stackrel{n\rightarrow\infty}{=}& e^{-\int_{R^+\times B}\left(1 - e^{-us}\right)\left(q\nu(\mathrm{d}s, \mathrm{d}x)\right)} \nonumber\\
  \end{eqnarray}
  Here $(a)$ above follows because $\mathbb{E}_{N(\cdot)}\left[\left(e^{-uz(A_{nm})s_{nm}N(A_{nm}, s_{nm})} - 1\right)\right]$ is infinitesimal 
  thus $\log(1 + x)\stackrel{x\rightarrow 0}{\sim} x$ applies. $(b)$ is obtained by integrating out $z(A_{nm})$
  with Bernoulli distribution. Thus it can be seen from (\ref{eq:levysub}) that $S^q(\tilde{\mu})$ has the L\'{e}vy measure of 
  $q(\mathrm{d}x)\nu(\mathrm{d}t, \mathrm{d}x)$.
\end{proof}

\begin{proof}[Proof of Theorem~\ref{theo:TDNRM}]
  We show that starting from (\ref{eq:unnormalizedTr}) and (\ref{eq:normalizedTr}), we can both end up 
  the random measures defined in (\ref{eq:normalizedTr1}).

  First, for the operations in (\ref{eq:unnormalizedTr}), adapting from Theorem~2.17 of~\cite{Cinlar:10}, 
  a Poisson random measure with mean measure $\nu$ 
  on the space $\mathbb{R}^+\times\mathbb{X}$ has the form
  \begin{equation}\label{eq:poisson1}
    N = \sum_{n=1}^\infty\sum_{i < K_n}\delta_{(s, x)},
  \end{equation}
  where $K_n$ is a Poisson distributed random variable with mean $\nu$, and 
  $(s\in\mathbb{R}^+, x\in\mathbb{X})$ are points in the corresponding
  Poisson processes. Then a realization of $N$ composes of points in a Poisson process $\Pi_1$,
  and the corresponding Poisson random measure can be written as 
  $N_1 = \sum_{(s, x)\in\Pi_1}\delta_{(s, x)}$.

  Now consider doing a subsampling $S^q$ and a point transition $T$ on
  $\Pi_1$, by the definitions and (\ref{eq:poisson1})
  we get a new random measure
  \begin{eqnarray}
    \tilde{N} &=& T(S_q(N_1)) = T(S_q(\sum\delta_{(s, x)})) \nonumber\\
    &\stackrel{\text{(*)}}{=}& \sum z(q(T(x)))\delta_{(s, T(x))} \stackrel{\text{(**)}}{=} \sum z(q(x))\delta_{(s, T(x))},
  \end{eqnarray}
  where $z(q(\cdot))$ means a Bernoulli random variable with acceptance rate $q(\cdot)$, $(*)$ follows from definitions,
  $(**)$ follows from the assumption of constant subsampling rate.

  It is easy to show by induction that by subsampling and point transitioning $i$ times of the Poisson process $\Pi_1$, 
  we get a random measure as
  \begin{equation}
    \tilde{N}^\prime = \sum z(q^i(x))\delta_{(s, T^i(x))}.
  \end{equation}

  By the definition, when superpositioning the this Poisson process $T^i(S^q_i(\Pi_1))$ with another 
  Poisson process $\Pi_2$ with mean measure $\nu_2$, we get another random measure as
  \begin{equation}
    N^{\prime\prime} = \sum_{(s, x)\in\Pi_1} z(q^i(x))\delta_{(s, T^i(x))} 
	      + \sum_{(s, x)\in\Pi_2}\delta_{(s, T(x))}.
  \end{equation}

  This Poisson random measure is then used to construct a completely random measure $\tilde{\mu}$ using (\ref{eq:RM}) as:
  \begin{eqnarray}
  &&\tilde{\mu}(A) = \int_{\mathbb{R}^+\times\mathbb{X}}\!\!
		sN^{\prime\prime}(\mathrm{d}s, \mathrm{d}x) \nonumber\\
    &=& \sum_{(s, x)\in\Pi_1} z(q^i(x))s\delta_{(s, T^i(x))} + \sum_{(s, x)\in\Pi_2} 
		s\delta_{(s, x)}.
  \end{eqnarray}

  By marginalize over $r$'s and normalizing this random measure, we get
  \begin{eqnarray} \label{eq:NRMST}
  &&\mu(A) = \frac{\tilde{\mu}(A)}{\tilde{\mu}(\mathbb{X})} \nonumber\\
    &=& \frac{M_1^\prime}{M_1 + M_2^\prime}\frac{\sum_{(s, x)\in \Pi_1\cap A} s\delta_{(s, T^i(x))}}
	{\sum_{(s, x)\in \Pi_1\cap \mathbb{X}} s\delta_{(s, T^i(x))}} \nonumber\\ 
    &+& \frac{M_2^\prime}{M_1^\prime + M_2^\prime}\frac{\sum_{(s, x)\in \Pi_2\cap A} s\delta_{(s, T^i(x))}}
	{\sum_{(s, x)\in \Pi_2\cap \mathbb{X}} s\delta_{(s, T^i(x))}} \nonumber\\
    &=& \frac{M_1^\prime}{M_1^\prime + M_2^\prime}(T^i\mu_1)(A) + \frac{M_2^\prime}{M_1^\prime + M_2^\prime}(T^i\mu_2)(A),\nonumber\\
  \end{eqnarray}
  where by apply Lemma~\ref{theo:Levysub} we conclude that $M_1^\prime = \left(q^i\tilde{\mu}_1\right)(\mathbb{X})$ is the total mass of the random measure 
  with L\'{e}vy measure $q^j(\mathrm{d}x)\nu(\mathrm{d}t, \mathrm{d}x)$ and $M_2^\prime = \tilde{\mu}_2(\mathbb{X})$. 
  We use the fact that $(T^k\tilde{\mu}_i)(\mathbb{X}) = \tilde{\mu}_i(\mathbb{X})$ in the derivation of (\ref{eq:NRMST}),
  because the point transition operation only moves the points $(s, x)$ of the Poisson process 
  to other locations $(s, x + \mathrm{d}x)$, thus does not affect the total mass of the corresponding 
  random measure.

  This means by superpositioning after subsampling, the mass of the normalized random measure decades 
  exponentially fast with respect to the distance $i$. Based on Eq.~(\ref{eq:NRMST}), when taking $i$ 
  from 1 to $n$, and taking superposition for all these random measure induced, the resulting normalized 
  random measure is:
  \begin{equation}
    \mu_n^\prime = \sum_{i = 1}^n\frac{\left(q^{n-i}\tilde{\mu}_i\right)(\mathbb{X})}{\sum_{j=1}^n\left(q^{n-j}\tilde{\mu}_j\right)(\mathbb{X})}T^{n-i}(\mu_i).
  \end{equation}

  Next, for the operations in (\ref{eq:normalizedTr}), from the definition we have
  \begin{eqnarray}
    \mu_2^\prime &=& T\left(S^q\left(\mu_1^\prime\right)\right)\oplus\mu_2 \nonumber\\
      &=& \frac{\left(q\tilde{\mu}_1\right)(\mathbb{X})}{\left(q\tilde{\mu}_1 + \tilde{\mu}_2\right)(\mathbb{X})}T\left(\mu_1\right)
	+ \frac{\left(\tilde{\mu}_1\right)(\mathbb{X})}{\left(q\tilde{\mu}_1 + \tilde{\mu}_2\right)(\mathbb{X})}\mu_2
  \end{eqnarray}
  Now $\mu_2^\prime$ has a total mass of $(q\tilde{\mu}_1 + \tilde{\mu}_2)(\mathbb{X})$, by induction on $i$, 
  we get the formula in (\ref{eq:normalizedTr1}) for $i = n$.

  This completes the proof.
\end{proof}

\begin{proof}[Proof of Theorem~\ref{theo:posteriorSub}]
  Given the current data configuration \{$n_k, k = 1, 2, \cdots$\}, for a particular $k$, 
  \begin{itemize}
   \item If $n_k > 0$,
      this means this jump $J_k$ must exist in $\mu$, otherwise it is impossible to have $n_k > 0$,
      thus $p(z_{k}=1|\tilde{\mu}, n) = 1$.

   \item Otherwise,since $\mu = \sum_{k:z_k = 1}\frac{J_k\delta_k}{\sum_{k^\prime}z_{k^\prime}J_{k^\prime}}$,
      we have the likelihood as:
      $$\prod_{{k^{\prime\prime}}:n_{k^{\prime\prime}}>0}\frac{J_{k^{\prime\prime}}^{n_{k^{\prime\prime}}}}{(\sum_{k^\prime\neq k}z_{k^\prime}J_{k^\prime} + z_kJ_k)^{n_k}}
	= \frac{\prod_{{k^{\prime\prime}}:n_{k^{\prime\prime}}>0}J_{k^{\prime\prime}}^{n_{k^{\prime\prime}}}}{(\sum_{k^\prime\neq k}z_{k^\prime}J_{k^\prime} + z_kJ_k)^{n}}.$$
      Furthermore, we know that the prior for $z_k$ is $p(z_k = 1) = q$, thus the posterior is:
      $$p(z_k = 1|\tilde{\mu}, n)\propto \frac{q}{(\sum_{k^\prime\neq k}z_{k^\prime}J_{k^\prime} + J_k)^{n}}.$$
      $$p(z_k = 0|\tilde{\mu}, n)\propto \frac{1 - q}{(\sum_{k^\prime\neq k}z_{k^\prime}J_{k^\prime})^{n}}.$$
      After normalizing, we get the posterior for the case $n_k = 0$ in (\ref{eq:postersub}).
  \end{itemize}
\end{proof}

\begin{proof}[Proof of Corollary~\ref{coro:posterioracc}]
  Note that $J_{mk}^\prime$ is obtained by subsampling of \{$J_{m^\prime k}, m^\prime\leq m$\},
  the number of data points in $\tilde{\mu}_m^\prime$ is denoted as $\tilde{n}_{m\cdot}^\prime = \sum_{k^\prime}\tilde{n}_{mk^\prime}^\prime$.

  Following the same arguments as in the proof of Theorem~\ref{theo:posteriorSub}, when $\tilde{n}_{mk}^\prime > 0$,
  $p(z_{mk}=1|\tilde{\mu}_m,\tilde{n}_{m\cdot}')=1$. Otherwise, by subsampling, $\mu_m^\prime$ can be written as:
  $$\mu_m^\prime = \sum_{m^\prime\leq m}\sum_{k^\prime:z_{m^\prime k^\prime} = 1}
    \frac{z_{m^\prime k^\prime}J_{m^\prime k^\prime}\delta_{\theta_{m^\prime k^\prime}}}
    {\sum_{m^{\prime\prime}\leq m}\sum_{k^{\prime\prime}}z_{m^{\prime\prime} k^{\prime\prime}}J_{m^{\prime\prime} k^{\prime\prime}}}.$$
  Now following the same proof of Theorem~\ref{theo:posteriorSub}, if we define
  $$J_m = \left(\sum_{m'\le m}\sum_{k^\prime}z_{m' k^\prime}J_{m' k^\prime}\right)^{\tilde{n}_{m\cdot}^\prime},
  J_m^{-k} = \left(\sum_{m'\le m} \sum_{k^\prime\neq k}z_{m'k^\prime}J_{m'k^\prime}\right)^{\tilde{n}_{m\cdot}^\prime}~,$$
  then we get the likelihood as 
  $$\frac{\prod_{{k^{\prime\prime}}:\tilde{n}_{mk^{\prime\prime}}^\prime>0}{J_{mk^{\prime\prime}}^\prime}^{n_{k^{\prime\prime}}}}{J_m}.$$
  Furthermore, from subsampling, we know that the Bernoulli prior for $z_{mk}$ is $q^{m - m^\prime}$, and the posterior
  can then be derived using the Bayes rule as in the proof of Theorem~\ref{theo:posteriorSub}.
\end{proof}

\end{document}